\setlist[itemize]{leftmargin=0.6cm}
\newcommand{\tblsmall}{\fontsize{6.5pt}{9pt}\selectfont}
\newcommand{\tbltiny}[1]{\fontsize{3.5pt}{7pt}\selectfont{#1}}
\newcommand{\settitle}{\@maketitle}
\newcommand{\allsets}{{\mathcal{S}}} 
\newcommand{\smplset}{{T}}
\newcommand{\dist}{{D}}
\renewcommand{\u}{{\bm{u}}}
\newcommand{\feat}{{\varphi}} 
\newcommand{\f}{{F}} 
\renewcommand{\r}{{r}} 
\newcommand{\w}{{w}} 
\newcommand{\agg}{{g}}
\newcommand{\Agg}{{\cal{G}}}
\newcommand{\nonlin}{{\mu}}
\newcommand{\Viol}{{V}}
\newcommand{\Nonviol}{{U}} 
\newcommand{\err}[2]{{\varepsilon^{{#1}}_{{#2}}}}
\newcommand{\sep}{{\Delta}}
\newcommand{\iso}{{\Lambda}}
\newcommand{\coap}{{\Gamma}}
\newcommand{\vcap}{{\kappa}}
\newcommand{\loss}{{\Delta}}
\newcommand{\succarg}[1]{{\succ^{}_{\!{#1}}}}
\renewcommand{\sep}{{\Omega}}
\newcommand{\Fplus}{{\F_+}}
\newcommand{\Fplussup}[1]{{\F_+^{{#1}}}}
\newcommand{\fbar}{{\bar{f}}}
\newcommand{\Base}{{\F}}
\newcommand{\nn}{{\mathcal{N}}}
\newcommand{\sig}{{\sigma}}
\newcommand{\mech}{{M}}
\newcommand{\btheta}{{\bm{\theta}}}
\newcommand{\SDW}{{SDW}} 
\newcommand{\SDE}{{SDE}} 
\newcommand{\SDA}{{SDA}}
\newcommand{\ubar}[1]{\text{\b{$#1$}}}
\newcommand{\N}{\mathbb{N}}                     
\newcommand{\fsc}[1]{%
	\begingroup\escapechar=\m@ne
	\xdef\fsc@font{\expandafter\string\the\textfont\mathgroup}%
	\endgroup
	\text{%
		\edef\fsc@size{\f@size}%
		\expandafter\split@name\fsc@font\@nil
		\usefont{\f@encoding}{\f@family}{\f@series}{\f@shape}%
		\fontsize{.8\dimexpr\fsc@size}{\z@}\selectfont
		#1%
	}%
}
\newcommand{\red}[1]{{}} 
\newcommand{\blue}[1]{{#1}} 
\newcommand\todo[1]{{}} 
\newcommand{\nir}[1]{{}} 
\newcommand{\kojin}[1]{{}} 
\newcommand{\beq}{\begin{equation}}
\newcommand{\eeq}{\end{equation}}
\newcommand{\bal}{\begin{align}}
\newcommand{\eal}{\end{align}}
\newcommand\expect[2]{\mathbb{E}_{#1}{\left[ {#2} \right]}}
\DeclareMathOperator*{\argmax}{argmax}
\DeclareMathOperator*{\argmin}{argmin}
\DeclareMathOperator*{\avg}{avg}
\newcommand\inner[1]{\langle {#1} \rangle}
\newcommand{\1}[1]{\mathds{1}{\{{#1}\}}}
\newcommand{\one}[1]{\mathds{1}_{\{{#1}\}}}
\newcommand{\naive}{{na\"{\i}ve}}
\newcommand{\X}{{\cal{X}}}
\newcommand{\F}{{\cal{F}}}
\newcommand{\G}{{\cal{G}}}
\newcommand{\R}{{\mathbb{R}}}
\newcommand{\yhat}{{\hat{y}}}
\newtheorem{lemma}{Lemma}
\newtheorem{theorem}{Theorem}
\newtheorem{definition}{Definition}
\newtheorem{corollary}{Corollary}
\newtheorem{claim}{Claim}
\newtheorem{proposition}{Proposition}
\title{Predicting Choice with Set-Dependent Aggregation}
\author{%
  Nir Rosenfeld\\
  Harvard University\\
   \And
   Kojin Oshiba \\
   Harvard University \\
   \And
   Yaron Singer \\
   Harvard University \\
}
\begin{document}

\maketitle


\begin{abstract}
Providing users with alternatives to choose from 
is an essential component in many online platforms,
making the accurate prediction of choice vital to their success.
A renewed interest in learning choice models
has led to significant progress in modeling power,
but most current methods are either 
limited in the types of choice behavior they capture,
cannot be applied to large-scale data,
or both.

Here we propose a learning framework for predicting choice
that is accurate, versatile, theoretically grounded, and scales well.
Our key modeling point is that to account for how humans choose,
predictive models must capture certain set-related invariances.
Building on recent results in economics,
we derive a class of models that can express \emph{any} behavioral choice pattern,
enjoy favorable sample complexity guarantees,
and can be efficiently trained end-to-end.
Experiments on three large choice datasets
demonstrate the utility of our approach.

\end{abstract}


\section{Introduction} \label{sec:intro}


One of the most prevalent activities of online users is \emph{choosing}.
In almost any online platform, users constantly face choices:
what to purchase, who to follow, where to dine, what to watch, 
and even simply where to click.
As the prominence of online services becomes ever more reliant on such choices,
the accurate prediction of choice is quickly becoming vital to their success.
The  availability of large-scale choice data
has spurred hopes of feasible individual-level prediction,
and many recent works have been devoted to
the modeling and prediction of choice
\cite{benson2016relevance, ragain2016pairwise, kleinberg2017comparison, 
	kleinberg2017human, mottini2017deep, shah2017simple,
	negahban2018learning, chierichetti2018learning, overgoor2018choosing}.

In a typical choice scenario, a user is presented with a set of items
$s=\{x^{(1)},\dots,x^{(k)}\}$, $x^{(i)} \in \X$, called the \emph{choice set}.
Of these, the user chooses an item $y \in s$. 
In economics	 this is known as the problem of \emph{discrete choice} \cite{luce1959individual}.
We let the collection of choice sets $\allsets$
include all sets of at most $n$ items, $\allsets  = \cup_{k \le n} \X^k$.
We follow the standard machine learning setup 
and assume choice sets and choices are drawn i.i.d. from an unknown
joint distribution $\dist$.
Given a set of $m$ examples $\smplset=\{ (s_i,y_i) \}_{i=1}^m$ sampled from $\dist$,
our goal is to learn a choice predictor $h(s)$ that generalizes well to unseen sets,
i.e., has low expected error w.r.t. $\dist$.

A natural way to predict choice is to learn an item score function $f(x)$
from a class $\F$,
used to model the predicted probability of choosing $x$ from $s$ as
$P_s(x) = e^{f(x)} / \sum\nolimits_{x' \in s} e^{f(x')}$.
If the learned $f \in \F$ scores chosen items higher than their alternatives,
then $P_s(x)$ will lead to useful predictions.
This approach may seem appealing,
but is in fact constrained by an undesired artifact known as the
\emph{Independence of Irrelevant Alternatives} (IIA):
\begin{definition} \label{def:iia}  \cite{luce1959individual} 
	$P$ is said to satisfy \textbf{IIA} if
	for all $s \in \allsets$ and for any $a,b \in s$,
	it holds that
	\[
	P_{\{a,b\}}(a) / P_{\{a,b\}}(b) = P_s(a) / P_s(b)
	\]
\end{definition}
IIA states that the likelihood of choosing $a$ over $b$
should not depend on what other alternatives are available.
While item probabilities can depend on $s$,
predictions based on the argmax rule $h_f(s) = \argmax_{x \in s} P_s(x)$
are clearly \emph{independent} of $s$.
IIA is therefore a rigid constraint imposing a fundamental
limitation on what choice behavior can be expressed,
and cannot be mitigated by simply increasing the complexity of
functions in $\F$ (e.g., adding layers or clever non-linearities).

From a practical point of view, this is discouraging,
as there is ample empirical evidence that real choice data exhibits
regular and consistent violations of IIA
(see \cite{rieskamp2006extending} for an extensive survey).
This has led to a surge of interest in machine learning models that go beyond IIA
\cite{oh2014learning, osogami2014restricted, benson2016relevance, ragain2016pairwise,
	otsuka2016deep, ragain2018choosing, chierichetti2018learning, seshadri2019discovering, 
	pfannschmidt2019learning}.


The \naive\ way to avoid IIA is to directly model all possible subsets,
but this is likely to make learning intractable
\cite{mcfadden1977application, seshadri2019discovering}.
A common approach for resolving this difficulty is to impose structure,
typically in the form of a probabilistic choice model encoding certain inter-item dependencies.
While allowing for violations of IIA, this approach presents several practical limitations.
First, explicitly modeling the dependency structure
restricts \emph{how} IIA can be violated,
which may not necessarily align with the choice behavior in the data.
Second,
many of these models are designed to satisfy choice axioms
or asymptotic properties (e.g., consistency),
rather than to optimize predictive performance from finite-sample data.
Finally,
surprisingly few of these methods apply to large-scale online choice data,
where the number of instances can be prohibitively large,
choice sets rarely appear more than once,
and items can be complex structured objects whose
number is virtually unbounded.


To complement most current works,
and motivated by the growing need choice models that are accurate and scalable,
here we propose a framework for choice prediction
that is \emph{purely discriminative}, and hence directly optimizes for accuracy.
Our framework is based on the idea of \emph{set-dependent aggregation},
a principled approach in economics for modeling set-dependent choice
\cite{ambrus2015rationalising},
that to the best of our knowledge has not yet been considered from a machine learning perspective.

The key challenge is in designing a choice model that balances expressivity and efficiency:
On the one hand, the model should be flexible enough to capture
the manner in which set-dependence is expressed in the data,
regardless of its complexity and in a data-driven manner.
On the other hand, it must be structured enough to allow for learning to be statistically efficient and computationally tractable.
In this paper, we show that set-dependent aggregation achieves both.
Our framework makes the following contributions:
\begin{itemize}
	\item 
	\textbf{Efficient and scalable discriminative training.}
	Our approach is geared towards predictive performance:
	it directly optimizes for accuracy,
	can be efficiently trained end-to-end,
	and scales well to realistically large and complex choice-prediction scenarios.
	
	\item
	\textbf{Rigorous error bounds.}
	We bound both approximation and estimation error of aggregation.
	Our results show how the complexity of the hypothesis class
	controls the \emph{type} and \emph{number} of accountable violations,
	uncovering a data-driven mechanism for targeting important violations.
	
	\item
	\textbf{Behavioral inductive bias.}
	Aggregation can provably express \emph{any} form of violation \cite{ambrus2015rationalising},
	but may require an intractable number of parameters to do so.
	To control for overfitting,	we infuse our model with inductive bias
	taken from behavioral decision theory,
	thus balancing flexibility and specificity. 
	
	\item
	\textbf{Thorough empirical evaluation.}
	We conduct experiments on three large choice datasets---flight itineraries, hotel reservations, and news recommendations---demonstrating the utility of our approach.
	Our analysis reveals the means by which aggregation improves performance,
	providing empirical support to our theoretical results.
\end{itemize}
Overall, our work presents a practical and theoretically-grounded approach
for predicting choice.\\

\textbf{Paper organization.}
We begin with a review of the related literature (Sec. \ref{sec:related}).
We then present our model of set-dependent aggregation in Sec. \ref{sec:model}.
This is followed in Sec. \ref{sec:theory} by our main theoretical results,
bounding the approximation error (Sec. \ref{sec:apx_err})
and estimation error (Sec. \ref{sec:est_err}) of aggregators.
Finally, we present our experimental results in Sec. \ref{sec:experiments},
and give concluding remarks in Sec. \ref{sec:conclusions}.



\subsection{Related material} \label{sec:related}

IIA begins with Luce's Axiom of Choice \cite{luce1959individual},
which for a certain noise distribution, induces
the popular Multinomial Logit model (MNL) \cite{mcfadden1973conditional,train2009discrete}.
Two common extensions--- Nested MNL and Mixed MNL ---relax IIA
by grouping items via a tree structure \cite{mcfadden1978modeling}
or modeling a mixture population \cite{mcfadden2000mixed}, respectively.
These, however, pose restrictive assumptions on the nature of violations,
require elaborate hand-coded auxiliary inputs,
and are in many cases intractable.
Although recent progress has alleviated some difficulties
\cite{oh2014learning, benson2016relevance, chierichetti2018learning},
applying these models in practice remains difficult.
Recent works have proposed other probabilistic models that deviate from IIA,
by modeling pairwise utilities \cite{ragain2018choosing},
$k^{\text{th}}$-order interactions \cite{ragain2016pairwise,seshadri2019discovering},
and general subset relations \cite{benson2018discrete}.
These, however, do not optimize for predictive accuracy,
and rarely apply to complex choice settings with many items and sparse choice sets.
Others suggest discriminative solutions,
but these include models that tend to be either over-specific or over-general
\cite{mottini2017deep,pfannschmidt2019learning}
and provide no guarantees as to what violations can be captured. 


Our work draws on the literature of utility aggregation (or ``multi-self'' models),
studied extensively in economics
\cite{kalai2002rationalizing, fudenberg2006dual, manzini2007sequentially, green2009choice,
	ambrus2015rationalising},
psychology
\cite{tversky1972elimination, shafir1993reason, tversky1993context},
and marketing
\cite{kivetz2004alternative}.
While these typically focus on mathematical tractability or behavioral plausibility,
our focus is on statistical and computational aspects relevant to machine learning.
Our work is largely inspired by recent results in economics on the
expressivity of aggregation \cite{ambrus2015rationalising}.
While they give worst-case guarantees for unparameterized items
under the realizability assumption,
our results study the statistical behavior of
parametric aggregation.
To the best of our knowledge, our paper is the first to provide rigorous
finite-sample error bounds for set-dependent aggregation in the context of choice prediction.

\section{Proposed Model} \label{sec:model} 

\begin{figure}[!t]
\centering
\begin{subfigure}[b]{0.32\linewidth}
	\centering\includegraphics[width=1\linewidth]{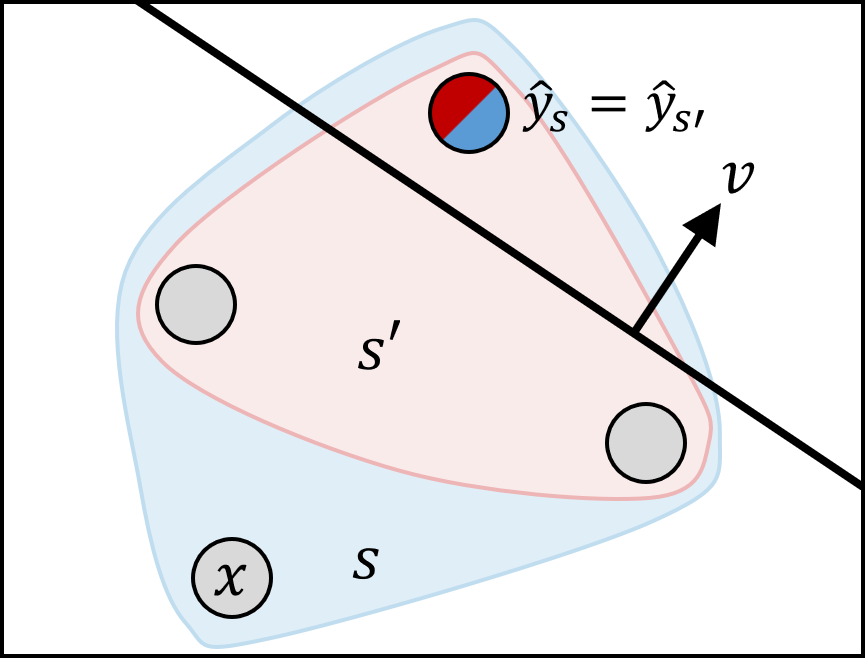}
	\caption{\textbf{Linear}: $\inner{v,\f(x)}$
		\label{fig:viol_viz_lin}}
\end{subfigure}
\,\,
\begin{subfigure}[b]{0.32\linewidth}
	\centering\includegraphics[width=1\linewidth]{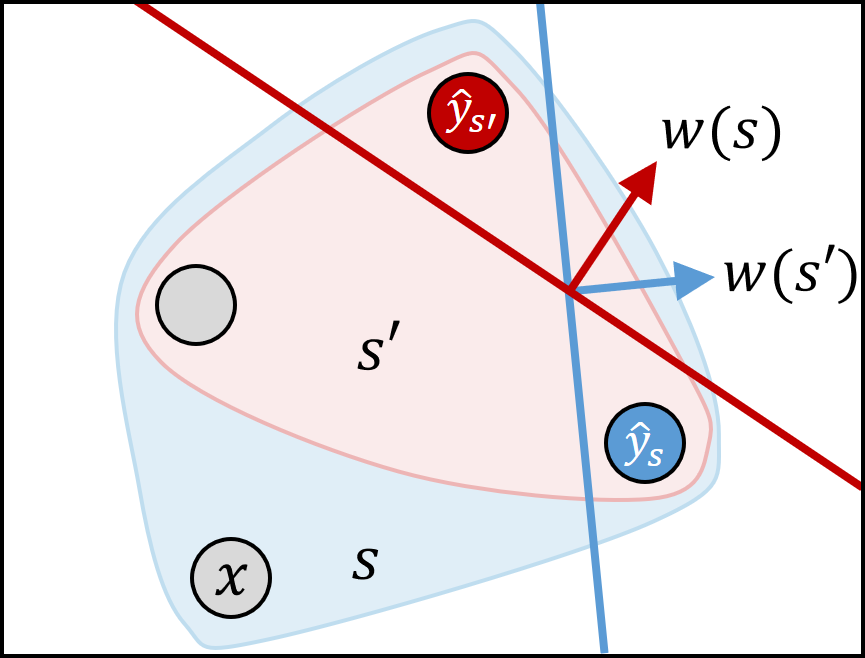}
	\caption{\textbf{\SDW}: $\inner{w(s),\f(x)}$
		\label{fig:viol_viz_sdw}}
\end{subfigure}
\,\,
\begin{subfigure}[b]{0.32\linewidth}
	\centering\includegraphics[width=1\linewidth]{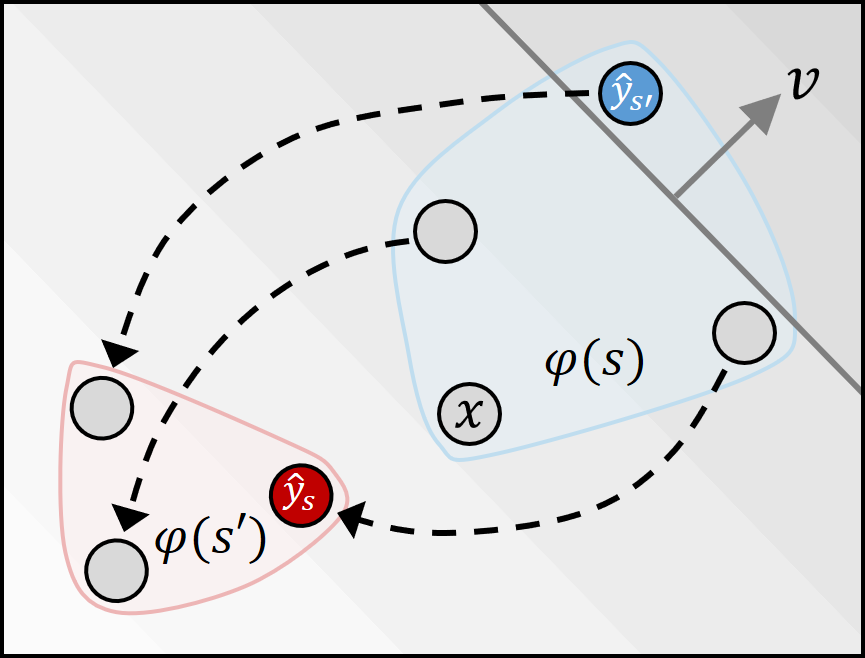}
	\caption{\textbf{\SDE}: $\inner{v,\feat(x,s)}$
		\label{fig:viol_viz_sde}}
\end{subfigure}%
		\caption{An illustration of how set-dependent aggregation can violate IIA.
			Points represent an embedding of items in $\R^\ell$ with $\ell=2$.
			Two sets are shown: $s$ and $s' = s \setminus \{x\}$ for some $x \in s$.
			IIA dictates that removing $x$ from $s$ 
			does not change the prediction $\yhat$.
			This is the case for linear aggregation (\subref{fig:viol_viz_lin}),
			where neither weights $v \in \R^\ell$ or embedding mapping $\feat$ depend on the set.
			When weights $\w$  are a function of $s$, removing an item
			can change the scoring direction (\subref{fig:viol_viz_sdw}).
			When $\feat$ is a function of $s$, removing an item
			can change the spatial position of items (\subref{fig:viol_viz_sde}).
			Both of these allow $s$ and $s'$ to have different maximizing items,
			and hence different predictions.
		}
		\label{fig:viol_viz}
\end{figure}


At the core of our approach is the idea of \emph{aggregation}:
combining a collection of item score functions $\f = (f_1,\dots,f_\ell)$
into a single score function $\agg$.
We refer to $\ell$ as the aggregation dimension,
and with slight abuse of notation use $\f(x)$ to denote the vector function $\f(x)=(f_1(x),\dots,f_\ell(x))$.
Aggregation is a popular approach to enriching the hypothesis class, 
but it is easy to show that standard forms of aggregation
(e.g., linear aggregation $\inner{v,\f(x)}$, $v \in \R^\ell$ as in bagging or boosting)
are still bound to IIA.
Hence, to express violations, aggregation must be \emph{set-dependent}.
Our model is a combination of two powerful set-dependent aggregation mechanisms
(illustrated in Fig. \ref{fig:viol_viz}):
\vspace{0.1cm}

\textbf{Set-dependent weights}:
Let $s \in \allsets$.
Starting with $g(x;v) = \inner{v,\f(x)}$, 
consider replacing each scalar weight $v_i$ with a set function
$w_i( \cdotp) : \allsets \rightarrow \R$. This gives the following aggregation model:
\beq
\text{\textbf{\SDW:}} \quad
\agg(x,s;w) = \inner{w(s), \f(x)}, \qquad
w(s) = (w_1(s), \dots, w_\ell(s))
\label{eq:agg_w}
\eeq
Denote $\yhat = \argmax_{x \in s} \inner{w(s),\f(x)}$,
and let $s' = s \setminus \{x\}$ for some $x \neq \yhat$.
Recall that IIA is violated when removing $x$ from $s$ changes the prediction.
Here, because removing $x$ changes the input to each $w_i$, 
the contribution of each $f_i \in \f$ to $\agg$ changes from $w_i(s)$ to $w_i(s')$.
Geometrically, the scoring direction changes,
and therefore the choice prediction can change as well (Fig. \ref{fig:viol_viz_sdw}).
\vspace{0.15cm}

\textbf{Set-dependent embeddings}:
Now, consider instead replacing each item score function $f_i \in \f$
with an item-centric set function $\feat_i(\cdotp,\cdotp) : \X \times \allsets \rightarrow \R$. The aggregation model is now:
\beq
\text{\textbf{\SDE:}} \quad
\agg(x,s;v,\feat) = \inner{v, \feat(x,s)}, \qquad
\feat(x,s) = (\feat_1(x,s), \dots, \feat_\ell(x,s))
\label{eq:agg_emb}
\eeq
The vector $\feat(x,s)$ is a set-dependent embedding of $x$
(the dependence on $\f$ is implicit).
The predicted item is that whose embedding is closest to $v$'s. 
Here, when the set changes (i.e., by removing an item),
items shift in the embedded space,
and the prediction can change (Fig. \ref{fig:viol_viz_sde}).
\vspace{0.15cm}

Our final and most general \textbf{set-dependent aggregation} model combines both mechanisms:
\beq
\text{\textbf{SDA}:} \quad
\agg(x,s;w,\feat) = \inner{w(s), \feat(x,s)}
\label{eq:agg_comb}
\eeq


\subsection{Inductive bias}
To allow for efficient learning, we implement $w(s)$ and $\feat(x,s)$
using key principles from behavioral choice theory,
suggesting that the choices are based on a relative, a-symmetric,
and context-dependent perception of utility.
In accordance, we instantiate:
\beq
w(s) = w(\f(s)), \qquad
\feat(x,s) = \nonlin \big(\f(x) - \r(\f(s)) \big)
\label{eq:agg_ib}
\eeq
where $\f(s)=\{\f(x)\}_{x \in s}$.
Here, $\f$ models item utilities,
$r$ is a set-specific reference point for comparing utilities
\cite{tversky1991loss},
$\nonlin$ is a loss-averse evaluation mechanism
\cite{kahneman1979prospect,tversky1992advances},
and $w$ integrates multiple evaluations in a context-dependent manner
\cite{tversky1969intransitivity,tversky1993context}.
This construction generalizes many models from the fields of economics, psychology,
and marketing:
\begin{claim}
	The aggregation model in Eq. \eqref{eq:agg_ib}
	generalizes choice models in
	\cite{tversky1969intransitivity,mcfadden1973conditional,mcfadden1978modeling,
		kaneko1979nash,kalai2002rationalizing,kivetz2004alternative,orhun2009optimal}.
	
	\label{lemma:multiself}
\end{claim}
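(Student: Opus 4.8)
The plan is to prove the claim constructively: for each choice model in the cited list, I would exhibit specific choices of the constituent functions $\f$, $\r$, $\nonlin$, and $\w$ for which the SDA scoring rule $\agg(x,s) = \inner{\w(s),\feat(x,s)}$ with $\feat(x,s) = \nonlin(\f(x) - \r(\f(s)))$ reduces \emph{exactly} to that model's scoring rule (up to the monotone argmax/softmax decision rule $h_\agg$, which is invariant to the relevant transformations). Each reduction amounts to matching three ingredients of the target model to our template: its item utilities to $\f$, its reference/anchor to $\r$, its valuation curvature to $\nonlin$, and its attribute/criterion integration to $\w(s)$.

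First I would dispatch the baseline logit and multi-self families. For conditional logit (MNL), take $\ell = 1$, $\r \equiv 0$, $\nonlin = \mathrm{id}$, and $\w(s) \equiv 1$, so that $\agg(x,s) = f_1(x)$ and the softmax rule recovers $P(x|s) \propto e^{f_1(x)}$ verbatim; this anchors the reduction to the IIA baseline. The rationalizing multi-criteria constructions and the Nash welfare-aggregation models are then essentially immediate, since aggregating $\ell > 1$ score functions via a fixed weight vector is precisely the mechanism they posit: we set $\r \equiv 0$, $\nonlin = \mathrm{id}$, and constant $\w$, with each coordinate $f_i$ playing the role of one criterion (for the product/Nash form one works in log-utility space so that the product becomes the additive $\inner{\w,\f(x)}$).

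The substantive cases are the reference-dependent and context-dependent models. For the loss-aversion and prospect-theoretic valuations and the reference-point constructions, I would set $\r(\f(s))$ to the appropriate set-specific anchor (e.g.\ a coordinate-wise mean, max, or status-quo point read off from $\f(s)$) and take $\nonlin$ to be the corresponding piecewise-linear, asymmetric gain/loss curve; the weights $\w(s)$ then realize the attribute-salience weighting that these models make depend on the offered set. For the compromise-effect family the same recipe applies, with $\ell$ coordinates serving as attributes, $\r$ centering each attribute at its set reference, and $\w(s)$ supplying the context-dependent salience that produces the compromise pull.

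The main obstacle will be the structurally most distant models --- nested logit and the intransitive additive-difference models --- whose native formulations are not manifestly a single inner product of a weight vector with a centered embedding. For nested logit I would show that the two-level inclusive-value computation can be absorbed into a set-dependent weight $\w(s)$ that reads nest membership off $\f(s)$, reproducing the log-sum-exp over nests up to the monotone decision rule; for the additive-difference/intransitivity model I would verify that the pairwise attribute comparisons are realized jointly by the centered embedding $\feat(x,s) = \nonlin(\f(x)-\r(\f(s)))$ and the nonlinearity $\nonlin$. Carrying out these two reductions --- and checking in each case that the required $\w$, $\r$, $\nonlin$ lie in the differentiable classes assumed in Sec.~\ref{sec:model} --- is the crux; the remaining models then follow by the same matching of anchor, valuation, and integration weights.
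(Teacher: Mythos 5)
Your overall strategy is exactly the paper's: Appendix \ref{app:ib} proves the claim by a table that, for each cited model, exhibits the instantiation of $\w$, $\r$, and $\nonlin$ in the template $\agg(x,s)=\inner{\w(s),\nonlin(\f(x)-\r(\f(s)))}$, and your plan of matching utilities to $\f$, anchor to $\r$, curvature to $\nonlin$, and integration to $\w$ is the same constructive reduction. Your MNL case ($\w\equiv 1$, $\r\equiv 0$, $\nonlin=\mathrm{id}$) coincides with the paper's row, and your treatment of the prospect-theoretic, compromise-effect, and reference-dependent families (set-specific anchors such as min, max, or midpoint for $\r$; kinked or power-law $\nonlin$) matches the paper's instantiations of \cite{kivetz2004alternative} and \cite{orhun2009optimal}.

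However, two of your ``essentially immediate'' reductions are wrong as stated, and the error is substantive: for \cite{kalai2002rationalizing} you propose constant $\w$, $\r\equiv 0$, $\nonlin=\mathrm{id}$, but that collapses $\agg$ to a fixed linear score $\inner{\w,\f(x)}$, which satisfies IIA and therefore cannot emulate rationalization by multiple rationales, whose whole point is that \emph{which} ordering governs the choice depends on the set; the paper's reduction uses set-dependent softmax weights $\w(s)$ to select the active rationale. Similarly, for \cite{kaneko1979nash} your log-space reduction with $\r\equiv 0$ yields $\sum_i \log f_i(x)$, again a fixed (IIA) score; the Nash-type aggregation is relative to a set-dependent disagreement point, which the paper encodes as $\r=\min$ over the set with $\nonlin=\log$. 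You also misallocate where the set-dependence lives in the two ``hard'' cases: for nested logit \cite{mcfadden1978modeling} the paper places the inclusive value in the reference, $\r=\log\sum\exp$ with $\nonlin=\log$ and linear $\w$, rather than absorbing it into $\w(s)$ as you propose, and for \cite{tversky1969intransitivity} the paper uses set-dependent weights $\w_i(s)=(\max-\min)^{\rho}$ with $\r\equiv 0$ and identity $\nonlin$, whereas your sketch routes the additive-difference structure through the centered embedding without giving a construction that works for choice sets of size greater than two. Your architecture-level plan is sound, but the proof lives entirely in getting these per-model instantiations right, and as written several of them do not reproduce the target models.
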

 See Appendix \ref{app:ib} for details. In practice, we implement $w$ and $r$ using appropriate neural networks,
and $\nonlin$ using an a-symmetric s-shaped nonlinearity \cite{maggi2004characterization}.
When $\F$ includes differentiable functions (i.e., linear functions or neural networks),
$\agg$ becomes differentiable, and optimization can be done using standard gradient methods.



\section{Theoretical Analysis} \label{sec:theory}

The complexity of $\Agg$ is controlled by three elements:
the base class $\F$, the set functions $w$ and $\feat$, and the dimension $\ell$.
In the next sections we consider how these effect learning.

The goal of learning is to find some $\agg \in \Agg$ that minimizes the 
\emph{expected risk}:
\beq
\label{eq:exp_risk}
\err{}{}(\agg) = \expect{\dist}{\loss(y,h_\agg(s))}
\eeq
where $\loss(y,\yhat)=\one{y\neq\yhat}$ is the 0/1 loss,
and predictions follow the decision rule $h_\agg(s)=\argmax_{x \in s} g(x|s)$.
In practice, learning typically involves minimizing the \emph{empirical risk}
over the sample set $\smplset=\{(s_i,y_i)\}_{i=1}^m$:
\beq
\err{}{\smplset}(\agg) = \sum_{i=1}^m \loss(y_i,h_\agg(s_i)) 
\label{eq:emp_risk}
\eeq
possibly under some form of regularization.
A good function class is therefore one that balances between being able to
fit the data well \emph{in principle}
(i.e., having a low optimal $\err{}{}(\agg)$)
and \emph{in practice}
(i.e., having a low optimal $\err{}{\smplset}(g)$ and a guarantee on its distance from
the corresponding $\err{}{}(g)$).
This can be seen by decomposing the expected risk into two error types---\emph{approximation error} and \emph{estimation error}:
\beq
\err{}{}(\agg) = \underbrace{\err{}{}(\agg^*)}_{\text{approx.}} + 
\underbrace{\err{}{}(\agg) - \err{}{}(\agg^*)}_{\text{estimation}},\qquad
\agg^* = \argmin_{\agg' \in \Agg} \err{}{}(\agg') \nonumber
\eeq
In this section we bound both types of errors.
For approximation error, which considers the best achievable error,
we show how the capacity of functions in $\Agg$ to relax IIA grows as $\ell$ increases.
For estimation error, which is typically controlled by the generalization error
$\err{}{\smplset}(\agg)-\err{}{}(\agg)$ \cite{shalev2014understanding},
we give Rademacher-based generalization bounds establishing the learnability of aggregators.


\subsection{Approximation Error} \label{sec:apx_err}


Our first result shows how aggregation combines predictors that \emph{cannot}
express violations into a model that can account for \emph{any} form of violation.
Our main theorem reveals how this is achieved:
aggregation uses score functions to `isolate' regions of violating sets,
and within each region, operate independently.
Technically, this is shown by decomposing the approximation error of $\Agg$
over these regions, where the error in each region depends on the error of $\F$.

We begin with some definitions.
Recall that IIA mandates that if 
$y$ is chosen from $s$,
then it should also be chosen from any $s' \subseteq s$ with $y \in s'$.
This motivates a definition of IIA violation due to \cite{ambrus2015rationalising}:\footnote{
	The definition in \cite{ambrus2015rationalising} additionally requires
	$s$ to be of maximal size, which they need for a counting argument.
}
\begin{definition}[IIA Violation, \cite{ambrus2015rationalising}] \label{def:viol}
	A set $s$ \textbf{violates IIA} if
	there exists $s' \supset s$ with $c(s) \neq c(s') \in s$.
\end{definition}
Denote the set of all violating sets by $\Viol$
and non-violating sets by $\Nonviol = \allsets \setminus \Viol$. 
Let the predicate $s \succarg{f} s'$ be true if
$f$ scores items in $s$ higher than items in $s'$,
i.e., $f(x) > f(x')\,\,\forall x \in s, x' \in s'$. 
\begin{definition}[Separation]
	A score function $f$ \textbf{separates} $s$ if $s \succarg{f} \blue{\X \setminus s}$, 
	defining the \textbf{separable region}:
	$$\sep_f = \{ s \in \Viol \,:\, s \text{ is separated by } f \}$$
	\label{def:separation}
\end{definition} 

We are now ready to state our main result
(see Appendix \ref{app:approx} for proof and further details).
For some $A \subseteq \allsets$, denote
$\err{*}{}(\F|A)=\min_{f \in \F}\expect{\dist}{\loss(y,h_{f}(s))|s \in A}$
and $p_A = Pr_\dist(s \in A)$..
\begin{theorem} 
	The approximation error $\err{*}{}(\Agg)$ decomposes over:
	\begin{enumerate}
		\item
		\textbf{non-violating sets} with risk at most
		$\err{*}{}(\F|\Nonviol)$
		\item
		\textbf{separable violating regions} with risk at most
		$\err{*}{}(\F|\sep_f)$ per region 
	\end{enumerate}
	and the set of separable regions is optimal under budget constraint $\ell$.
	Specifically, we have:
	\beq
	\err{*}{}(\Agg) \le
	p_\Nonviol  \, \err{*}{}(\F|\Nonviol) + 
	\sum_{f \in \f} p_{\sep_f}  \, \err{*}{}(\F|\sep_f) + 
	p_{\Viol \setminus \sep_\f}
	\label{eq:err_decomp}
	\eeq
	for the optimal $\f \in \F^{\ell'}$,
	where 
	$\sep_\f = \cup_{f \in \f} \sep_f$,
	$\ell' \le (\ell-1)/5$.
	\label{thm:err_decomp}
\end{theorem}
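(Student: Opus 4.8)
The plan is to upper-bound the approximation error by exhibiting a single aggregator $\agg \in \Agg$ whose risk is at most the right-hand side of \eqref{eq:err_decomp}; since $\err{*}{}(\Agg) \le \err{}{}(\agg)$ for any $\agg \in \Agg$, this suffices. The aggregator I construct is a routed combination of base predictors from $\F$: one global predictor $f_0$ responsible for all non-violating sets, and one local module per separable region $\sep_{f_i}$, gated by its separating function $f_i$. On $\Nonviol$ the choices are IIA-consistent, so they are realized by maximizing a single score function; taking $f_0$ to be the $\F$-minimizer restricted to $\Nonviol$ yields conditional risk exactly $\err{*}{}(\F|\Nonviol)$. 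Likewise, taking the local predictor of region $\sep_{f_i}$ to be the $\F$-minimizer restricted to $\sep_{f_i}$ yields conditional risk $\err{*}{}(\F|\sep_{f_i})$. Every violating set not separated by any chosen $f_i$ falls outside all modules and is charged its worst-case risk of $1$, contributing $p_{\Viol \setminus \sep_\f}$.

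The crux is a routing gadget showing that set-dependent aggregation can realize this region-conditional behavior inside the bilinear form $\inner{w(s),\feat(x,s)}$. The separation property is what makes this possible: because $f_i$ ranks all items of any $s \in \sep_{f_i}$ strictly above every item outside $s$, the scores $\{f_i(x)\}_{x \in s}$ of the presented items carry a robust signal that the set-dependent weight $w(s)$ can use to turn the $i$-th module on over $\sep_{f_i}$ and keep it inert elsewhere; with the module active, the embedding $\feat$ is made to reproduce the local predictor's ranking so that $\argmax_{x \in s}\agg(x,s)$ coincides with it. I would then show that each gate-plus-override module is expressible with a constant number of aggregation coordinates — five, covering the gate, the override predictor, and the auxiliary sign and offset terms needed for them to compose correctly — so that $\ell'$ such modules plus one global coordinate fit within budget $\ell$, giving $\ell' \le (\ell-1)/5$.

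With the construction in place, the risk decomposes additively over the partition $\{\Nonviol\} \cup \{\sep_{f_i}\} \cup \{\Viol \setminus \sep_\f\}$, giving $\err{}{}(\agg) \le \sum_{R \in \Nonviol \cup \sep_\f} p_R\,\err{*}{}(\F|R) + p_{\Viol \setminus \sep_\f}$. It remains to justify the optimality claim: since the construction works for any choice of $\ell'$ separating functions, I minimize the bound over all $\f \in \F^{\ell'}$, which is a covering/knapsack problem trading per-region errors $p_R\,\err{*}{}(\F|R)$ against the uncovered violating mass $p_{\Viol \setminus \sep_\f}$ (and, when separable regions overlap, apportioning each shared set to its cheapest covering module). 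Taking the minimizing $\f$ yields the stated bound for the optimal $\f$.

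The step I expect to be the main obstacle is the routing gadget and its coordinate accounting: proving that the gate induced by a separating function can be implemented inside the bilinear aggregation so that it fires precisely on $\sep_{f_i}$, that within an active region the argmax equals the local predictor's choice, and that distinct modules do not interfere when their separable regions overlap — all while holding the per-module coordinate cost at the constant $5$ that produces $\ell' \le (\ell-1)/5$. A secondary subtlety is the region-selection optimization, where overlapping separable regions must be handled carefully so that each set is charged the error of a single module rather than double-counted.
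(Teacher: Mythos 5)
Your skeleton matches the paper's at a high level: one score function dedicated to $\Nonviol$, one module per separable region whose conditional risk is the restricted $\F$-minimum, worst-case error $1$ charged to $\Viol \setminus \sep_\f$, and a final minimization over $\f \in \F^{\ell'}$. But the entire content of the theorem lives in the routing gadget that you explicitly defer, and the mechanism you sketch is not the one that works. You propose to gate through the set-dependent weights, having $w(s)$ detect membership $s \in \sep_{f_i}$ from the scores $\{f_i(x)\}_{x\in s}$ and switch the $i$-th module on or off. The paper does nothing of the sort: it imports \emph{triple bases} from \cite{ambrus2015rationalising} --- for any \emph{scale-invariant} aggregation mechanism there exists a collection of at most $k=5$ utilities that select a designated item on subsets of a target set and are (approximately) \emph{indifferent} on every other choice set. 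Non-interference is thus a property built into the score functions themselves, not enforced by a gate: the proof's decomposition lemma splits the aggregator into a component whose values exceed a constant $c$ (the module, which dominates the argmax on its isolated region) and a component below $c$ (which decides everywhere else, because the module's near-indifference there leaves the argmax untouched), and then uses scale invariance to erase the constraint on $c$. Your proposal never invokes scale invariance, yet without it both the existence of the size-$5$ basis and this dominance-then-rescale step collapse; correspondingly, your constant $5$ (``gate, override predictor, auxiliary sign and offset terms'') is reverse-engineered rather than derived --- in the paper $\ell' \le (\ell-1)/k$ with $k \le 5$ being the triple-basis size, a property of the aggregation mechanism.

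A second missing ingredient is that parametric score functions cannot assign arbitrary values to items, so $\F$ alone cannot realize the five utilities of a triple basis; the paper proves a neural-implementation lemma constructing an auxiliary class $\Fplus$ of small circuits over pairs $b,b' \in \F$, where $b$'s separation of $s$ from $\allsets \setminus s$ and $b'$'s ranking within $s$ let sigmoidal units approximately reproduce the required utilities (this is why the theorem is stated over a class ``slightly richer than $\F$,'' a caveat absent from your argument). Finally, your worry about overlapping regions and double-counting dissolves in the paper's treatment: separation and isolation are inherited by subsets, so the optimal budget $B^*$ has disjoint isolated regions (indeed, any two sets separated by the same $f$ must be nested), and the bound follows by iteratively peeling off one isolated region, at cost $k$ functions, per application of the decomposition lemma. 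In short, your plan states the right target inequality but replaces the theorem's actual proof obligations --- triple bases, their neural implementation in $\Fplus$, and the scale-invariance-based dominance argument --- with an unproven gating heuristic, which is precisely the gap you flag yourself.
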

Theorem \ref{thm:err_decomp} 
hints at how aggregation can account for violations:
it partitions $\allsets$ into regions,
and applies $\F$ to each region independently.
In this process, $\F$ plays a corresponding dual role
of both determining separable regions $\sep_f$
and predicting within these regions.
This demonstrates how the \emph{type} of accountable violations is controlled
by the expressivity of $\F$, 
and how the \emph{number} of violations (via the number of regions) is controlled by $\ell$.
In principal, aggregation can account for any violation:
\begin{corollary}
	If $y=c(s)$ for some choice function $c$ and for all $s \in \allsets$,
	then with sufficiently large $\ell$ and sufficiently expressive $\F$,
	the approximation error vanishes, i.e., $\err{*}{}(\Agg)=0$.\footnote{
		Here we consider the excess error, i.e., without the irreducible optimal Bayes error.
		\qquad\qquad\qquad\qquad}
\end{corollary}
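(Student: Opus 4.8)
The plan is to drive the decomposition of Theorem~\ref{thm:err_decomp} to zero term by term. Under the realizability hypothesis $y=c(s)$ there is no Bayes noise (and by the footnote we discard irreducible error), so it suffices to show that, for sufficiently large $\ell$ and sufficiently expressive $\F$, both the aggregated per-region error $\sum_{R\in\Nonviol\cup\sep_\f}p_R\,\err{*}{}(\F|R)$ and the uncovered-violation mass $p_{\Viol\setminus\sep_\f}$ in \eqref{eq:err_decomp} can be made to vanish.

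First I would eliminate the residual mass $p_{\Viol\setminus\sep_\f}$ by choosing the separating tuple $\f\in\F^{\ell'}$ so that $\sep_\f=\bigcup_{f\in\f}\sep_f$ exhausts $\Viol$. The enabling observation is that any single set $s\in\allsets$ is separable: because $\F$ is expressive, one can pick $f$ scoring the items of $s$ above every other item, giving $s\succarg{f}\allsets\setminus s$ as in Definition~\ref{def:separation}. Since $\allsets$, and hence $\Viol$, is finite, finitely many separating functions cover $\Viol$; the budget relation $\ell'\le(\ell-1)/5$ then says a large enough $\ell$ supplies the needed number of regions, forcing $p_{\Viol\setminus\sep_\f}=0$.

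Next I would nullify each per-region term $\err{*}{}(\F|R)$. The role of separation is precisely to isolate sets so that the choice function restricted to a region carries no residual IIA conflict, and under realizability an expressive $\F$ then realizes it exactly. On $\Nonviol$ this is a revealed-preference argument: by Definition~\ref{def:viol} the violating sets are exactly the obstructions to contraction consistency, so removing them leaves $c|_{\Nonviol}$ with an acyclic revealed preference, hence rationalizable by a single score function and $\err{*}{}(\F|\Nonviol)=0$. On each separable violating region, refining the budget so that the region's chain of separated sets contains no internal conflict makes the restricted labeling rationalizable as well (in the extreme, a region holding a single violating set $s$ needs only $c(s)$ scored above the other items of $s$), so $\err{*}{}(\F|\sep_f)=0$. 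Every summand of \eqref{eq:err_decomp} then vanishes and $\err{*}{}(\Agg)=0$.

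The hard part will be the covering step, namely producing regions that simultaneously exhaust $\Viol$ and stay internally rationalizable with a finite budget. The tension is that separation is coarse: a single $f$ separates the entire nested chain of its top-$k$ item sets, and such a chain can itself contain a contraction conflict between two of its violating members. The argument must therefore allocate the separating functions so that each induced region is conflict-free while still covering every violating set, and it is exactly this trade-off between the number of regions and their internal consistency that the relation $\ell'\le(\ell-1)/5$ quantifies and that ``sufficiently large $\ell$'' is meant to resolve.
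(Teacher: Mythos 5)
Your overall route---driving both terms of Eq.~\eqref{eq:err_decomp} to zero under realizability---is the derivation the paper intends, and two of your steps are sound: any single set is separable once $\F$ is expressive enough, and your claim about $\Nonviol$ is correct (if non-violating sets realized a revealed-preference cycle, the choice on the union of those sets would contradict non-violation of one of them, so $c|_{\Nonviol}$ is rationalizable by one score function). The genuine gap is at exactly the step you flag, and your proposed fallback does not close it: you cannot in general ``refine the budget'' until a region holds a single violating set, because $\sep_b$ is not chosen freely---it contains \emph{every} violating top-$k$ set of $b$, including the supersets $s\cup Q$ formed by $b$'s best items outside $s$, whose choices are dictated by $c$ and not by you. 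Concretely, take $\X=\{1,2,3,4\}$ with $c(\{1,2\})=1$, $c(\{1,2,3\})=c(\{1,2,4\})=2$, $c(\X)=1$ (other sets filled in arbitrarily). Then $\{1,2\}$, $\{1,2,3\}$, $\{1,2,4\}$ are all violating, and any $b$ separating $s=\{1,2\}$ also separates whichever of $\{1,2,3\},\{1,2,4\}$ contains its top outside item; that superset $S$ has $c(S)=2\in s$ while $c(s)=1$, so no single $b'$ is argmax-correct on both $s$ and $S$, and $\err{*}{}(\F|\sep_b)>0$ for \emph{every} $b$ separating $s$. (Choosing the internal order of $b$ on $s$ can always make the \emph{downward} prefixes conflict-free---peel off the chosen item last and take $b'$ increasing along that order---but the upward part of the chain is outside your control, so singleton or conflict-free regions need not exist, and no choice of $\ell$ helps.)

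This is also why the paper does not obtain the corollary by post-hoc selection of regions in the bound of Theorem~\ref{thm:err_decomp}: its justification runs through the constructive machinery inherited from \cite{ambrus2015rationalising}, where violations are defined for maximal-size sets only (see the footnote at Definition~\ref{def:viol}) and are repaired one triple basis per violation; the block targeting $s$ chooses $c(s)$ on subsets of $s$ containing it and is (approximately) \emph{indifferent} on every set containing items outside $s$ (Lemma~\ref{lemma:tb_sets}, implemented neurally in Lemma~\ref{lemma:nn_imp_tb} and spliced in via Lemma~\ref{lemma:decomp_via_iso}). Indifference is what breaks the chain coupling that blocks your argument: the repair for $s$ simply does not disturb its violating supersets, which are handled by their own blocks, whereas in your scheme the region error charges the whole chain to one score function. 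With fully parameterized (``sufficiently expressive'') score functions this recovers the Ambrus--Rozen rationalizability theorem, whence $\err{*}{}(\Agg)=0$. Two smaller points: your covering step silently assumes $\allsets$ (hence $\Viol$) is finite, which the paper's setting does not guarantee and which should be stated as a hypothesis; and the budget constant is $k\le 5$ utilities per triple basis plus one base function, which is the content of $\ell'\le(\ell-1)/5$---it quantifies the number of repairable violations, not a trade-off between region count and internal consistency.
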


Our proof takes the main building blocks of \cite{ambrus2015rationalising}---
objects called ``triple bases''
---and carefully weaves them within a statistical framework.
As in \cite{ambrus2015rationalising}, results apply to classes 
satisfying a natural property of \emph{scale-invariance};
these include all but one model from Claim \ref{lemma:multiself}.
The main lemma in the proof requires that $\Agg$ is defined over an auxiliary
neural-network class of score functions that is slightly richer than $\F$,
and shows how ``implementing'' triple bases using simple neural circuits
enables a decomposition of the error over violating regions.

The general statement of Theorem \ref{thm:err_decomp} is that
the more complex $\F$ and the larger $\ell$,
the better the approximation error.
Our next results quantifies how this trades off with estimation error.




\subsection{Estimation Error} \label{sec:est_err}
Our next results establish the learnability of aggregation.
We give sample complexity bounds on the empirical risk
$\err{}{\smplset}(\agg) = \expect{\smplset}{\loss(y,\agg(s))}$
showing how learning both weight and embedding mechanisms (Sec. \ref{sec:model})
depends on $\F$, $\ell$, and the aggregation components in $\agg$.
Proofs are based on Rademacher bounds and are given in Appendix \ref{app:est}.
To simplify the analysis,
we focus on set operations and on a linear base class:
\[
\F_\text{lin}^\rho = \{ \f(x) = x^\top \Theta \,:\, \Theta \in \R^{d \times \ell}, \,
\|\Theta\|_\rho \le 1 \}
\]
where $\|\cdotp\|_\rho$ is the induced $\rho$-norm.
This suffices to cover the models from Claim \ref{lemma:multiself}.\footnote{
	All except for one of the three models proposed in \cite{kivetz2004alternative}.}

For any function $q$, we denote its Lipschitz constant by $\lambda_q$
when it is scalar-valued and by $\lambda_q^\rho$ when it is vector valued and with respect to
the $\rho$-norm.
We also use $X_\infty = \max_{x \in \X} \|x\|_\infty$.

The first bound applies to the set-dependent weight mechanism (Eq. \eqref{eq:agg_w}).
\begin{theorem}
	Let $\Agg$ be a class of aggregators over $\F_\text{lin}^\infty$
	of the form $g(x,s) = \inner{\w(s), \f(x)}$.
	Then for all $\dist$ and any $\delta \in [0,1]$, it holds that
	\begin{equation*}
		\forall \, \agg \in \Agg, \quad
	\err{}{}(\agg) \le \err{}{\smplset}(\agg) + 
	4 X_\infty^2 \lambda_w^\rho \sqrt{\frac{2 \log 2d}{m}} +
	O \left( \sqrt{\frac{ \log(1 / \delta)}{m}} \right)
	\end{equation*}
	with probability of at least $1-\delta$.
\end{theorem}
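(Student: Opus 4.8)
The plan is to instantiate the standard Rademacher-complexity generalization theorem for the induced loss class, peel the $0/1$ choice loss off the scores through the argmax/margin structure, and then control the complexity of the raw scoring class $\G$ by exploiting both the bilinear form $\inner{\w(s),\f(x)}$ and the $\infty$-norm constraint on $\Theta$. Concretely, I would first apply the standard uniform-convergence bound \cite{shalev2014understanding} to the loss class $\tilde\Loss=\{(s,y)\mapsto\loss(y,\agg(s)):\agg\in\G\}$: with probability at least $1-\delta$,
\[
\err{}{}(\G)\le \err{}{\smplset}(\G)+2\,\Rad_m(\tilde\Loss)+3\sqrt{\frac{\log(2/\delta)}{2m}}.
\]
This already accounts for the $O(\sqrt{\log(1/\delta)/m})$ term and reduces everything to bounding the empirical Rademacher complexity $\Rad_m(\tilde\Loss)$.

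Next I would strip the loss off the scores. Since $h_\agg(s)=\argmax_{x\in s}\agg(x,s)$, a prediction errs exactly when the negative margin $\phi_\agg(s,y)=\max_{x\in s,\,x\ne y}\agg(x,s)-\agg(y,s)$ is nonnegative, so $\loss$ is majorized by a fixed $1$-Lipschitz surrogate (a ramp) of $\phi_\agg$. Talagrand's contraction lemma removes this outer map, and a further contraction removes the $\max_{x\in s}$ (itself $1$-Lipschitz in its $\le n$ arguments), leaving, up to constants, the Rademacher complexity of the pairwise-difference scores drawn from $\G=\{(x,s)\mapsto\inner{\w(s),\f(x)}\}$.

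The crux is therefore $\Rad_m(\G)=\tfrac1m\,\E_\sigma\sup_{\w,\Theta}\sum_i\sigma_i\inner{\w(s_i),\f(x_i)}$. I would first fix the scale of the summands: with $\|\Theta\|_\infty\le1$ and $\|x\|_\infty\le X_\infty$ the embeddings satisfy $\|\f(x)\|\lesssim X_\infty$, and since $\w(s)=w(\f(s))$ is $\lambda_w^\rho$-Lipschitz in the embeddings (anchored at a reference point) we get $\|\w(s)\|\lesssim \lambda_w^\rho X_\infty$; hence each score is $O(X_\infty^2\lambda_w^\rho)$, which is the source of the $X_\infty^2\lambda_w^\rho$ prefactor. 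Writing $\inner{\w(s),\f(x)}=(\Theta\,\w(s))^\top x$ exposes the linearity in $x$, and the $\infty$-norm geometry of $\Theta$ lets Massart's finite-class lemma act over the $2d$ extreme directions $\pm e_j$, producing the characteristic $\sqrt{2\log 2d}$ factor at rate $1/\sqrt m$. Collecting constants then yields the middle term $4X_\infty^2\lambda_w^\rho\sqrt{2\log(2d)/m}$.

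I expect the main obstacle to lie in this last step, where the supremum ranges jointly over the Lipschitz set-function $\w$ and the linear map $\Theta$, which enter multiplicatively through $\Theta\,\w(s)$. The delicate point is to peel off the Lipschitz weight map without incurring a factor that scales with $\ell$ or with the (effectively unbounded) number of distinct sets, while keeping the $\Theta$-dependence in a form where the $\infty$-norm ball collapses cleanly to the $\log 2d$ bound; tracking the correct dual norms through this bilinear decomposition, and handling the $\max$-over-set contraction with the right set-size constant, is where the real care is required.
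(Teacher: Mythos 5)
There is a genuine gap, and you have in fact named it yourself: the joint, multiplicative supremum over the Lipschitz set-map $\w$ and the matrix $\Theta$ is the crux, and your proposal does not resolve it. The specific step you sketch --- writing $\inner{\w(s),\f(x)}=(\Theta\,\w(s))^\top x$ and invoking Massart over the $2d$ extreme directions $\pm e_j$ --- does not go through, because the effective weight vector $\Theta\,\w(s_i)$ is \emph{example-dependent}: the classical $\ell_1/\ell_\infty$ linear-class argument pulls the supremum out as a single fixed vector, $\sup_{\|v\|_1\le B}\sum_i\sigma_i v^\top x_i = B\max_j|\sum_i \sigma_i x_{ij}|$, and that collapse is unavailable when $v$ varies with $s_i$ inside the sum. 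Relatedly, your accounting of the prefactor is off: a uniform bound $|g(x,s)|=O(X_\infty^2\lambda_w^\rho)$ on the scores is not where the constant comes from (a sup-norm bound alone does not yield the $\sqrt{\log(2d)/m}$ rate), and $\lambda_w^\rho$ enters the paper's bound through contraction, not through a norm bound on $\w(s)$.

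The decisive idea in the paper's proof is a compositional decoupling that you are missing. First, the set-dependent weights are themselves Lipschitz images of the \emph{base} class evaluated on the set, $w_i(s)=w(f_i(s))$ with $f_i(x)=\inner{\theta_i,x}$, so Talagrand contraction gives $\Rad(w)\le\lambda_w^\rho\,\Rad(f)$, where $\Rad(f)\le X_\infty\sqrt{2\log(2d)/m}$ by the standard $\ell_1$-constrained linear bound (this is where $\sqrt{\log 2d}$ actually enters --- at the base layer, not the outer one; note $\|\Theta\|_\infty\le 1$ means each row satisfies $\|\theta_i\|_1\le 1$). Second, the paper expands the score coordinate-wise in the \emph{input} dimension, $g(x,s)=\sum_{k=1}^d\inner{\bar\theta_k,\,\w(s)\,x_k}$, so the product with $x_k$ is a per-example scaling (factor $X_\infty$, again by contraction) and the outer layer is an $\ell_1$-weighted combination (factor $2\|\Theta\|_1\le 2$), giving $\Rad(g)\le 2X_\infty^2\lambda_w^\rho\sqrt{2\log(2d)/m}$ and hence the stated constant $4$ after the generalization bound. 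Without this chain, your plan stalls exactly at the obstacle you flag. One point in your favor: your explicit handling of the $0/1$ loss via a ramp surrogate and contraction through the $\max$ over the set is more careful than the paper's own proof, which passes from $\Rad(g)$ to the risk bound without addressing the argmax composition at all (and indeed a naive max-contraction would threaten a set-size factor that neither you nor the paper accounts for).
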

The second bound applies to the set-dependent embedding  mechanism (Eq. \eqref{eq:agg_emb}),
where for concreteness we set $\feat$ according to the inductive bias model
in Eq. \eqref{eq:agg_ib}.
\begin{theorem} 
	Let $\Agg$ be a class of aggregators over $\F_\text{lin}^1$
	of the form $g(x,s) = \inner{v,\feat(x,s)}$ 
	where $\feat(x,s)=\nonlin(\f(x)-r(s))$ as in Eq. \eqref{eq:agg_ib}.
	Then for all $\dist$ and any $\delta \in [0,1]$, it holds that
	\begin{equation*}
	\forall \, \agg \in \Agg, \quad
	\err{}{}(\agg) \le \err{}{\smplset}(\agg) + 
	4 W_1 X_\infty \lambda_\mu (1+\lambda_r^\rho) \sqrt{\frac{2 \log 2d}{m}}
	+ O \left( \sqrt{\frac{\log(1 / \delta)}{m}}\right)
	\end{equation*}
	with probability of at least $1-\delta$, where $W_1 = \|w\|_1$.
\end{theorem}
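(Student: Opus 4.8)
The plan is to follow the standard Rademacher-complexity route and then peel off the compositional structure of $g$ one layer at a time. First I would invoke the textbook uniform-convergence bound: with probability at least $1-\delta$,
$\err{}{}(\G) \le \err{}{\smplset}(\G) + 2\Rad_m(\Loss_\G) + O(\sqrt{\log(1/\delta)/m})$,
where $\Loss_\G=\{(s,y)\mapsto \loss(y,\agg(s)) : \agg\in\G\}$ is the induced loss class and $\Rad_m$ is the empirical Rademacher complexity over $\smplset$. The $O(\sqrt{\log(1/\delta)/m})$ term absorbs the McDiarmid/concentration contribution, so the whole task reduces to bounding $\Rad_m(\Loss_\G)$ by $2 W_1 X_\infty \lambda_\mu(1+\lambda_r^\rho)\sqrt{2\log 2d/m}$.

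Next I would reduce the $0/1$ choice loss to the Rademacher complexity of the raw score class $\G=\{(x,s)\mapsto g(x,s)\}$. Since $h_\agg(s)=y$ exactly when $g(y,s)\ge g(x,s)$ for all $x\in s$, I would upper bound $\loss$ by a $1$-Lipschitz ramp surrogate of the margin $g(y,s)-\max_{x\neq y}g(x,s)$ and apply Talagrand's contraction lemma, so that $\Rad_m(\Loss_\G)$ is controlled by the complexity of the margin class. Writing the margin as a difference of two terms from $\G$ (the max over alternatives being $1$-Lipschitz and hence absorbable) yields $\Rad_m(\Loss_\G)\le 2\,\Rad_m(\G)$; this, together with the factor $2$ from symmetrization, is what produces the constant $4$ in the statement. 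I must ensure this reduction does not leak a dependence on the set size $n$, which is the first place care is needed.

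The core of the argument is then to bound $\Rad_m(\G)$ for $g(x,s)=\inner{w,\nonlin(\f(x)-r(\f(s)))}$ by peeling from the outside in. Setting $\Phi(x,s)=\nonlin(\f(x)-r(\f(s)))\in\R^\ell$, Hölder's inequality with $\|w\|_1=W_1$ together with the linear-aggregation form of contraction yields a factor $W_1$ while reducing to the worst coordinate of $\Phi$. Talagrand contraction then removes the $\lambda_\mu$-Lipschitz map $\nonlin$, contributing $\lambda_\mu$ and leaving the class $\{(x,s)\mapsto \f(x)-r(\f(s))\}$. I would bound the latter by $\Rad_m(\{\f(x)\})+\Rad_m(\{r(\f(s))\})$; since $r$ is $\lambda_r^\rho$-Lipschitz and acts on set embeddings generated by the same $\f$, a further contraction step collapses this into the factor $(1+\lambda_r^\rho)$ times the base complexity $\Rad_m(\F_\text{lin}^1)$. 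Finally, the classical $\ell_1/\ell_\infty$ bound for linear predictors with $\|\Theta\|_1\le 1$ and $\|x\|_\infty\le X_\infty$ gives $\Rad_m(\F_\text{lin}^1)\le X_\infty\sqrt{2\log 2d/m}$. Multiplying the accumulated constants $W_1\cdot\lambda_\mu\cdot(1+\lambda_r^\rho)\cdot X_\infty$ and carrying the factors of $2$ from the preceding paragraph reproduces the stated bound.

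I expect the main obstacle to be the vector-valued and set-coupled structure of $\Phi$: both $\nonlin$ and $w$ act on the $\ell$-dimensional vector $\f(x)-r(\f(s))$ rather than on a scalar, so the scalar contraction lemma does not apply directly and must be replaced by a vector-contraction (or coordinate-wise $\ell_1$) argument. Compounding this, the reference point $r(\f(s))$ is a function of the entire set—including the very item $x$ being scored—so showing that its contribution is captured cleanly by the single multiplicative factor $(1+\lambda_r^\rho)$, without reintroducing a set-size dependence, is the delicate step that ties the whole decomposition together.
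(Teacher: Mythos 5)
Your proposal follows essentially the same route as the paper's proof: a standard Rademacher generalization bound followed by peeling the composition from the outside in --- the $\ell_1$/H\"older step for $w$ yielding $W_1$, scalar contraction through $\nonlin$ yielding $\lambda_\mu$, splitting $\f(x)-r(\f(s))$ into two classes and contracting through $r$ to get $(1+\lambda_r^\rho)$, and the classical $\ell_1/\ell_\infty$ linear-class bound $X_\infty\sqrt{2\log 2d/m}$; this is precisely the appendix's chain $R_g \le 2W_1 R_\mu$, $R_\mu \le \lambda_\mu(R_f+R_r)$, $R_r \le \lambda_r^\rho R_f$, $R_f \le X_\infty\sqrt{2\log 2d/m}$. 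The only divergence is constant bookkeeping: the paper gets the $4$ as $2$ (the $2R_g$ in the generalization bound) times $2$ (sign-symmetrization in the $\ell_1$ aggregation step), and never performs the $0/1$-loss-to-margin reduction you describe, so your margin/max step --- whose set-size delicacy you rightly flag --- is a more explicit treatment of something the paper's proof leaves implicit rather than a different method.
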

Both results show what governs learnability:
the dimension $\ell$ plays an implicit role
via the Lipschitz constants of the set operations,
and $\F$ effects the bound via the norm of $w$
(also depending on $\ell$).

\section{Experiments} \label{sec:experiments}

We now present our experimental evaluation on click prediction tasks on online platforms.


\begin{figure}[!t]
	\begin{center}
		\includegraphics[width=0.8\columnwidth,trim=4cm 0 0 0,clip]{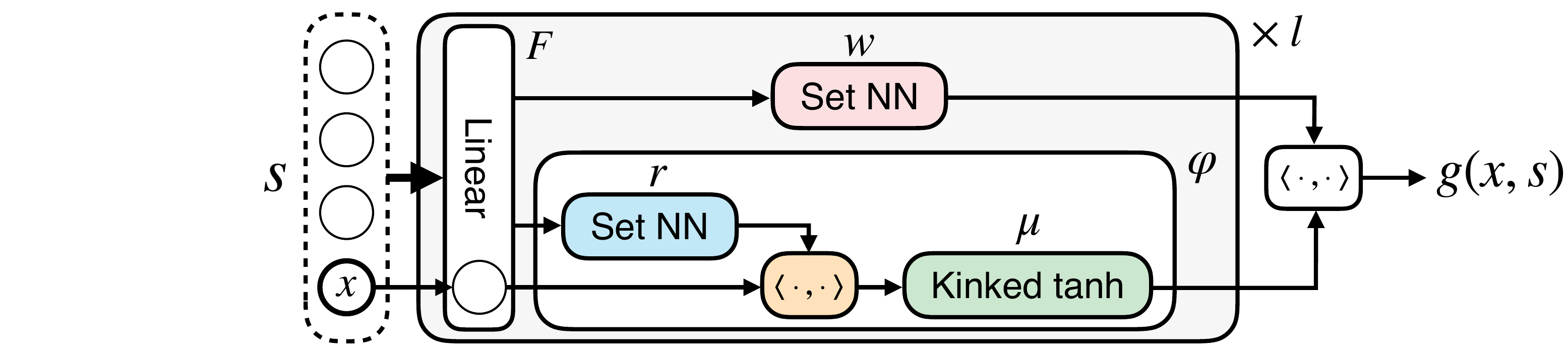} 
		\caption{The proposed aggregation model}
		\label{fig:diagram}
	\end{center}
\end{figure}

\textbf{Datasets.} We evaluate our method on three large datasets:
flight itineraries
from Amadeus\footnote{See \cite{mottini2017deep}},
hotel reservations
from Expedia\footnote{\raggedright\url{www.kaggle.com/c/expedia-personalized-sort}}, 
and news recommendations
from Outbrain\footnote{\raggedright\url{www.kaggle.com/c/outbrain-click-prediction}}.
Each dataset includes sets of alternatives and the corresponding user choices. We focus on examples where users clicked on exactly one item.
Features describe items (e.g., price, quality, or category)
and context (e.g., query, date and time).
Unfortunately, for reasons of privacy, very little user information is available. Appendix \ref{app:datasets} includes further details.

%
%
%
%
%

\textbf{Baselines.} 
We compare against three baseline categories:
methods that satisfy IIA,
methods that target specific violations of IIA,
and methods that do not satisfy IIA but also do not directly model violations.
This lets us explore whether capturing IIA violations is useful,
and if so, what form of deviation is preferable.
We focus on approaches that can be applied to
large-scale data as the above.

For IIA methods, we use Multinomial Logit (MNL)
\cite{mcfadden1973conditional,train2009discrete},
SVMRank \cite{joachims2006training},
and RankNet \cite{burges2005learning}.
These methods predict based on item-centric score functions,
but differ in the way they are optimized.
For non-IIA methods,
we use a discrete Mixed MNL model \cite{train2009discrete},
AdaRank \cite{adarank},
and Deep Sets \cite{zaheer2017deep}.
These differ in how they consider item dependencies:
Mixed MNL models user populations, 
ListNet captures set dependencies via the loss function,
and Deep Sets attempts to universally approximate general set functions.
We also compare to simple baselines based on price or quality, and to random.

\textbf{Aggregation models.}
For our approach, 
we evaluate the performance of three models,
each corresponding to one of the aggregation mechanisms from Sec. \ref{sec:model}.\\
\vspace{0.1cm}
\storestyleof{itemize}
\begin{listliketab}
	\begin{tabular}{Llll}
		\textbullet &  
		\textbf{Set-dependent weights}: &
		 $\agg(x,s) = \inner{w(s), \f(x)}$  & 
	  [\SDW, Eq. \eqref{eq:agg_w}]   \\
		\textbullet & 
		\textbf{Set-dependent embeddings}: &
		$\agg(x,s) = \inner{v, \feat(x,s)}$  & 
		[\SDE, Eq. \eqref{eq:agg_emb}]  \\
		\textbullet & 
		\textbf{Combined inductive bias model}:&
		$\agg(x,s) = \inner{w(s), \feat(x,s)}$ & 
		[\SDA, Eq. \eqref{eq:agg_comb}]
	\end{tabular}
\end{listliketab}

\vspace{-0.4cm}
The set function $w$ is implemented by a small
permutation-invariant neural network \cite{zaheer2017deep}
having 2 hidden layers of size 16 with tanh activations and use mean pooling.
In the combined model, we use a slight generalization 
of Eq. \eqref{eq:agg_comb},
letting $\feat(x,s)=\nonlin (\inner{\f(x),\r(s)} )$
where $\f$ and $r$ are now vector-valued, and are compared by an inner product.
For the a-symmetric sigmoidal $\nonlin$ we use a `kinked' tanh,
and $r$ has the same architecture as $w$ but outputs a vector. 
Figure \ref{fig:diagram} illustrates this architecture.
As noted, these design choices follow key principles from behavioral choice theory.
Appendix \ref{app:ablation} includes further details and ablation studies.

To highlight the contribution of aggregation,
for all models we use a simple linear base class $\F$,
allowing a clean differential comparison to IIA methods such as MNL
(linear score functions, no set operations)
and to non-IIA methods such as DeepSets
(neural set functions, no linear components).
Our main results use $\ell=24$, which strikes a good balance between
performance and runtime
(in general, accuracy increases with $\ell$, but most of the gain is already reached at $\ell=4$; see Fig. \ref{fig:extra} left).


\textbf{Setup.}
Results are based on averaging 10 random 50:25:25 train-validation-test splits.
The methods we consider are varied in their expressive power and computational requirements.
Hence, for a fair comparison (and to allow reasonable run times), 
each trial includes 10,000 randomly sampled examples.
Performance is measured using top-1 accuracy, top-5 accuracy, and mean reciprocal rank (MRR).
For all methods we tuned regularization, dropout, and learning rate (when applicable)
using Bayesian optimization. 
Default parameters were used for other hyper-parameters.
For optimization we used Adam with step-wise exponential decay.
See Appendix \ref{app:setup} for further details.



\begin{table}
	\centering
	\tblsmall
	\setlength{\tabcolsep}{2pt}
	{\setlength{\tabcolsep}{0.4em}
	\begin{tabular}{rrlccccccccc}
		&   &   & \multicolumn{3}{c}{\textbf{Amadeus}} & \multicolumn{3}{c}{\textbf{Expedia}} & \multicolumn{3}{c}{\textbf{Outbrain}} \\
		\cmidrule[3\cmidrulewidth]{4-12}  &   &   & Top-1 & Top-5 & MRR & Top-1 & Top-5 & MRR & Top-1 & Top-5 & MRR \\
		\cmidrule{3-12}\multicolumn{1}{r}{\multirow{3}[2]{*}{\begin{sideways}\textbf{Ours}\end{sideways}}} &   & \textbf{SDA} & \textbf{45.42}\tbltiny{ $\pm 0.5$} & \textbf{93.37}\tbltiny{ $\pm 0.0$} & \textbf{2.31}\tbltiny{ $\pm 0.3$} & \textbf{31.49}\tbltiny{ $\pm 0.2$} & \textbf{86.91}\tbltiny{ $\pm 0.2$} & \textbf{2.99}\tbltiny{ $\pm 0.0$} & \textbf{38.04}\tbltiny{ $\pm 0.3$} & \textbf{94.54}\tbltiny{ $\pm 0.1$} & \textbf{2.42}\tbltiny{ $\pm 0.0$} \\
		&   & \textbf{SDE} & 39.62\tbltiny{ $\pm 0.4$} & 91.70\tbltiny{ $\pm 0.3$} & 2.52\tbltiny{ $\pm 0.0$} & 31.47\tbltiny{ $\pm 0.2$} & 86.52\tbltiny{ $\pm 0.2$} & 3.00\tbltiny{ $\pm 0.0$} & 37.59\tbltiny{ $\pm 0.3$} & 94.26\tbltiny{ $\pm 0.1$} & 2.44\tbltiny{ $\pm 0.0$} \\
		&   & \textbf{SDW} & 39.98\tbltiny{ $\pm 0.4$} & 91.89\tbltiny{ $\pm 0.3$} & 2.50\tbltiny{ $\pm 0.0$} & 31.27\tbltiny{ $\pm 0.2$} & 86.67\tbltiny{ $\pm 0.2$} & 2.99\tbltiny{ $\pm 0.0$} & 37.86\tbltiny{ $\pm 0.3$} & 94.48\tbltiny{ $\pm 0.2$} & 2.43\tbltiny{ $\pm 0.0$} \\
		\cmidrule{3-12}\multicolumn{1}{r}{\multirow{3}[2]{*}{\begin{sideways}IIA\end{sideways}}} &   & MNL \cite{mcfadden1973conditional} & 38.42\tbltiny{ $\pm 0.5$} & 91.02\tbltiny{ $\pm 0.3$} & 2.57\tbltiny{ $\pm 0.0$} & 30.06\tbltiny{ $\pm 0.2$} & 86.34\tbltiny{ $\pm 0.1$} & 3.07\tbltiny{ $\pm 0.0$} & 37.74\tbltiny{ $\pm 0.3$} & 94.52\tbltiny{ $\pm 0.2$} & 2.43\tbltiny{ $\pm 0.0$} \\
		&   & SVMRank \cite{joachims2006training} & 40.27\tbltiny{ $\pm 0.4$} & 91.94\tbltiny{ $\pm 0.3$} & 2.49\tbltiny{ $\pm 0.0$} & 31.28\tbltiny{ $\pm 0.2$} & 86.24\tbltiny{ $\pm 0.1$} & 3.01\tbltiny{ $\pm 0.0$} & 37.68\tbltiny{ $\pm 0.3$} & 94.46\tbltiny{ $\pm 0.1$} & 2.43\tbltiny{ $\pm 0.0$} \\
		&   & RankNet \cite{burges2005learning} & 37.44\tbltiny{ $\pm 0.7$} & 84.67\tbltiny{ $\pm 1.7$} & 3.02\tbltiny{ $\pm 0.1$} & 23.82\tbltiny{ $\pm 0.5$} & 81.85\tbltiny{ $\pm 0.6$} & 3.43\tbltiny{ $\pm 0.0$} & 35.32\tbltiny{ $\pm 0.8$} & 91.55\tbltiny{ $\pm 0.7$} & 2.65\tbltiny{ $\pm 0.1$} \\
		\cmidrule{3-12}\multicolumn{1}{r}{\multirow{3}[2]{*}{\begin{sideways}non-IIA\end{sideways}}} &   & Mixed MNL \cite{train2009discrete} & 37.96\tbltiny{ $\pm 0.3$} & 90.40\tbltiny{ $\pm 0.3$} & 2.62\tbltiny{ $\pm 0.0$} & 27.28\tbltiny{ $\pm 0.6$} & 84.24\tbltiny{ $\pm 0.3$} & 3.22\tbltiny{ $\pm 0.0$} & 37.72\tbltiny{ $\pm 0.3$} & 94.42\tbltiny{ $\pm 0.1$} & 2.43\tbltiny{ $\pm 0.0$} \\
		&   & AdaRank \cite{adarank} & 37.27\tbltiny{ $\pm 0.4$} & 72.34\tbltiny{ $\pm 0.3$} & 4.03\tbltiny{ $\pm 0.0$} & 26.70\tbltiny{ $\pm 0.2$} & 83.21\tbltiny{ $\pm 0.2$} & 3.29\tbltiny{ $\pm 0.0$} & 37.47\tbltiny{ $\pm 0.3$} & 94.40\tbltiny{ $\pm 0.2$} & 2.44\tbltiny{ $\pm 0.0$} \\
		&   & Deep Sets \cite{zaheer2017deep} & 40.36\tbltiny{ $\pm 0.5$} & 91.92\tbltiny{ $\pm 0.3$} & 2.48\tbltiny{ $\pm 0.0$} & 29.87\tbltiny{ $\pm 0.3$} & 86.26\tbltiny{ $\pm 0.2$} & 3.06\tbltiny{ $\pm 0.0$} & 37.51\tbltiny{ $\pm 0.3$} & 94.30\tbltiny{ $\pm 0.1$} & 2.44\tbltiny{ $\pm 0.0$} \\
		\cmidrule{3-12}\multicolumn{1}{r}{\multirow{2}[2]{*}{\begin{sideways}basic\end{sideways}}} &   & Price/Quality & 36.44\tbltiny{ $\pm 0.3$} & 87.23\tbltiny{ $\pm 0.2$} & 2.79\tbltiny{ $\pm 0.0$} & 17.92\tbltiny{ $\pm 0.1$} & 77.67\tbltiny{ $\pm 0.1$} & 3.79\tbltiny{ $\pm 0.0$} & 24.17\tbltiny{ $\pm 0.1$} & 25.08\tbltiny{ $\pm 0.1$} & 8.13\tbltiny{ $\pm 0.0$} \\
		&   & Random & 25.15\tbltiny{ $\pm 0.5$} & 32.87\tbltiny{ $\pm 0.6$} & 6.49\tbltiny{ $\pm 0.0$} & 14.13\tbltiny{ $\pm 0.1$} & 32.35\tbltiny{ $\pm 0.2$} & 6.38\tbltiny{ $\pm 0.0$} & 22.21\tbltiny{ $\pm 0.1$} & 23.10\tbltiny{ $\pm 0.1$} & 8.32\tbltiny{ $\pm 0.0$} \\
		\cmidrule[3\cmidrulewidth]{3-12}
	\end{tabular}%
	}
	\label{tbl:results}
	\vspace{0.2cm}
\caption{Main results. Values are averaged over 10 random splits (standard errors in small font).}
\end{table}

\subsection{Results}
Our main results are presented in Table 1,
which compares the different methods for choice sets up to 10, 10, 12 items for each dataset (the full results for a range of maximal number of items can be found 
in Appendix \ref{app:fulltbl}).
As can bee seen, SDA outperforms other baselines in all settings.

When comparing predictive top-1 accuracy,
non-IIA methods tend to outperform IIA methods by a margin.
Discrete Mixed MNL, which deviates from IIA by targeting certain inter-item dependencies,
is observed as the most competitive baseline.
DeepSets, which is our most general non-IIA baseline, shows mixed results,
and relatively high standard errors.
Deepsets portrays the invariances required for expressing violations with significantly more parameters than SDA (Appendix \ref{app:baselines}),
but lacks the inductive bias that SDA holds, thus demonstrating the importance of the latter.

For measures that extend beyond the first item (such as top-5 and MRR),
IIA ranking methods perform well, likely because their objective consider
relations between items.
Nonetheless, they are still outperformed by SDA, even though its
objective considers only the top item.
We conjecture that training SDA with a ranking loss will increase performance further.



\subsection{Analysis}

\textbf{Accuracy and violation capacity}.
Figure \ref{fig:extra} (left) presents the accuracy of an aggregator for increasing  $\ell$. Results show that accuracy steadily increases,
although roughly $90\%$ of the gain is accuracy is achieved by $\ell=4$.
Theorem \ref{thm:err_decomp} suggests that increasing $\ell$ helps
by covering additional regions of violating sets.
To empirically quantify this, we measure \emph{violation capacity}:
\beq
\vcap_\smplset(\agg) = \frac{1}{|\smplset|}\sum_{(s,y) \in \smplset}
\frac{1}{|s|-1} \sum_{y \neq x \in s} \1{g(x,s_{-x}) \neq y}
\label{eq:violation_capacity}
\eeq
where $s_{-x} = s \setminus \{x\}$.
Violation capacity measures how frequently the prediction changes
when single items are removed from the choice set,
and is a first-order approximation of the violation frequency captured by a model.
Figure \ref{fig:extra} (left) reveals a tight correlation
between accuracy and $\vcap$.

\textbf{Violation budget allocation}.
The decomposition in Theorem \ref{thm:err_decomp} shows how aggregation
allocates one score function to cover all non-violating sets $\Nonviol$,
and uses the remaining $\ell-1$ score functions for targeting and handling violating regions.
Empirically, this suggests that $\vcap$ should vary across choice sets:
high $\vcap$ for targeted violating regions, low $\vcap$ for $\Nonviol$ and all other regions.
Figure \ref{fig:extra} (right) compares the violation capacity of \SDA\, MNL,
and Deep Sets. For each method, the diagram shows average $\vcap$ values
partitioned according to intersecting correctness regions
(i.e., examples that only \SDA\ was correct on,
examples that \SDA\ and MNL were correct on but not Deep Sets, etc.).
The results highlight how \SDA\ efficiently allocates its violation budget,
executing very little violation capacity on examples that MNL is correct on,
and allocating the minimally necessary amount to the rest.
In contrast, MNL hos no violation capacity (as it cannot express violations),
and Deep Sets over-utilizes its unconstrained capacity to violate.

\textbf{Comparing aggregators}
Claim \ref{lemma:multiself} states that \SDA\ generalizes many aggregation
models from the literature,
and by preserving their structural form (i.e., Eq. \eqref{eq:agg_ib}),
introduces useful inductive bias.
Figure \ref{fig:extra} (center)
compares the performance of \SDA\ to the models from Claim \ref{lemma:multiself} on Amadeus.
As can be seen, \SDA\ clearly outperforms other models.
Many of these models were designed for mathematical
or behavioral tractability, having simple components with light parameterization.
These results demonstrate the benefit of 
replacing these with highly flexible parametric neural components.

\begin{figure*}[t!]
	\centering
	\includegraphics[width=0.30\columnwidth]{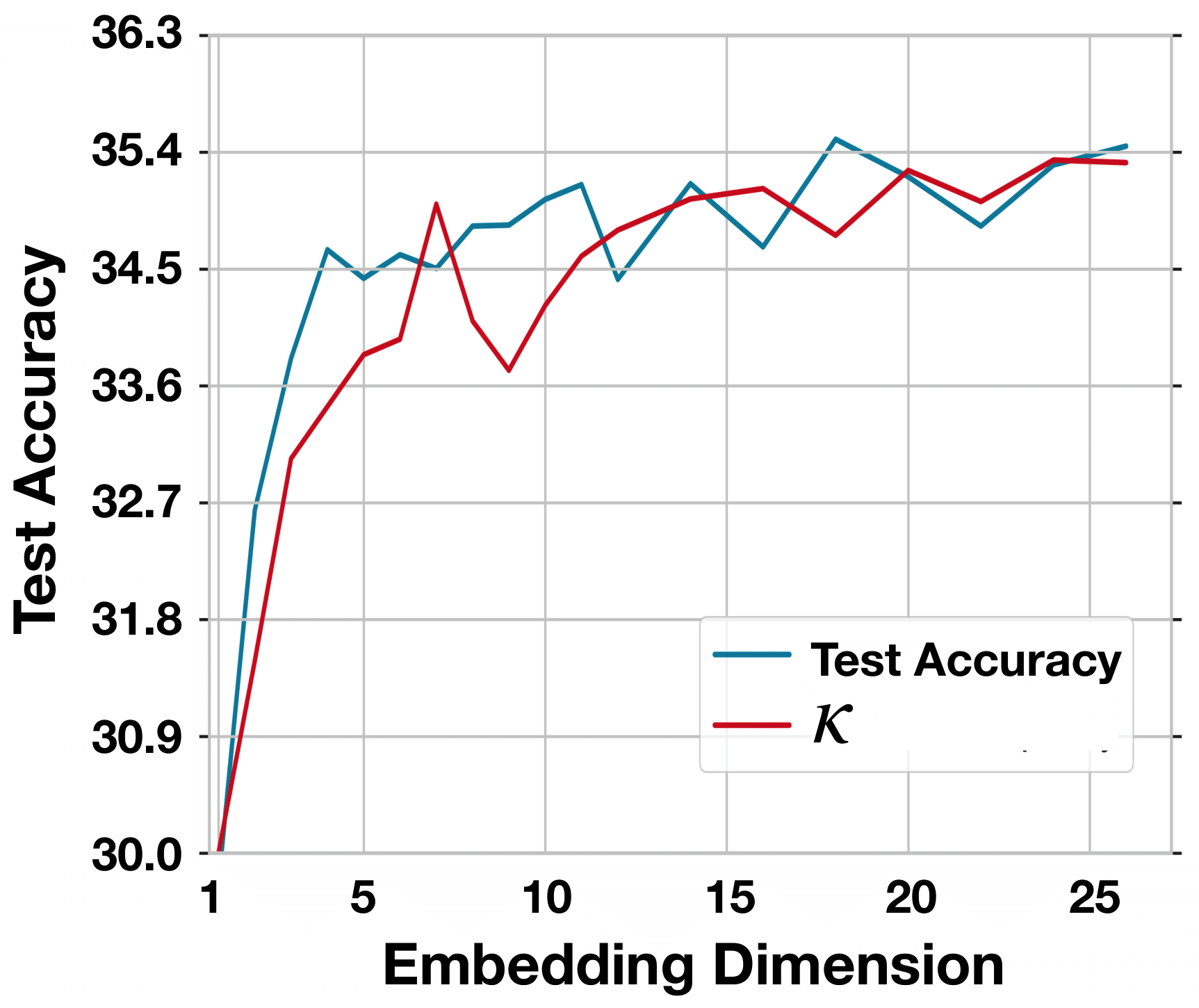}
	\hspace{0.22cm}
	\includegraphics[width=0.33\columnwidth]{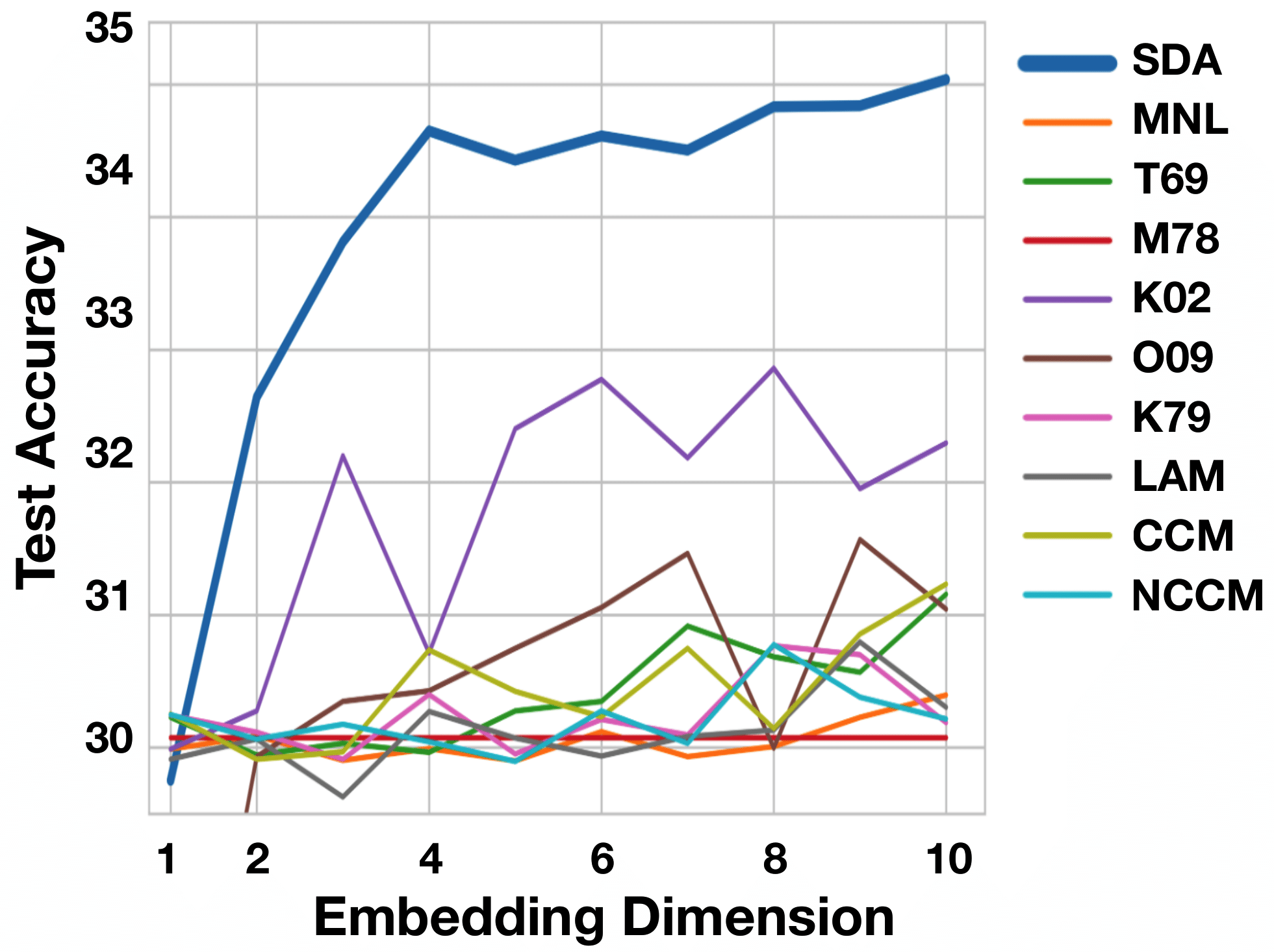}
	\hspace{0.22cm}    
	\includegraphics[width=0.31\columnwidth]{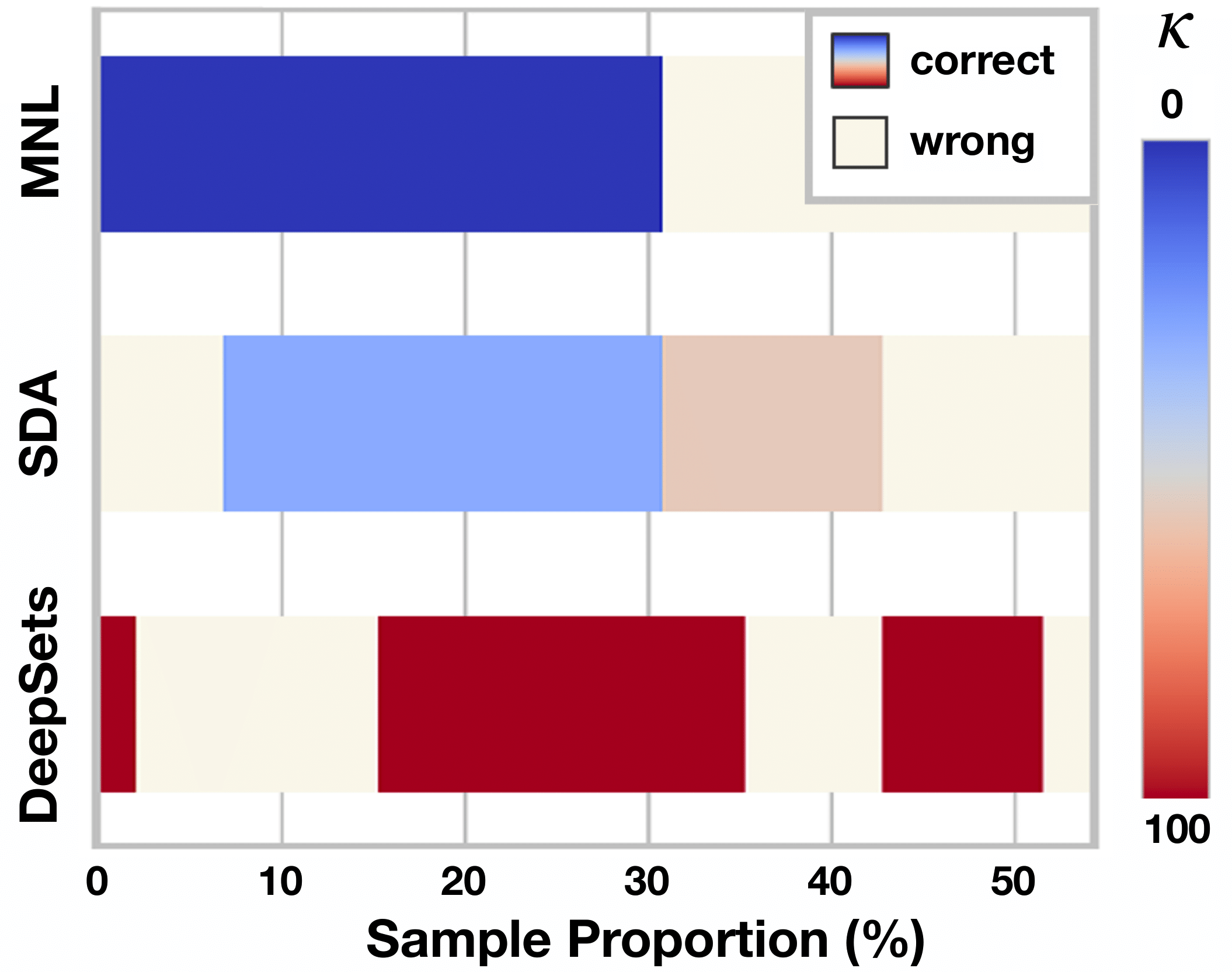}
	\caption{\textbf{Left}:
		Accuracy (blue) and violation c apacity $\vcap$ (red) are
		highly correlated. Most of the accuracy gain is achieved by $\ell=4$.
		\textbf{Center}:
		Accuracy of SDA vs. the choice models it generalizes (Claim \ref{lemma:multiself}),
		showing the benefit of replacing hand-crafted elements with neural components.
		\blue{\textbf{Right}:
		MNL and Deep Sets represent two extremes:
		inability to express violations ($\vcap=0$, dark blue),
		and over-flexibility ($\vcap=1$, dark red).
		SDA finds middle ground by properly allocating its violation `budget',
		focusing mostly on samples that MNL gets wrong (mean $\vcap$ in light red).}
	}
	\label{fig:extra}
\end{figure*}


\section{Conclusions} \label{sec:conclusions}

In this paper we proposed a method for accurately predicting choice.
Human choice follows complex and intricate patterns,
and capturing it requires models that on the one hand are sufficiently expressive,
but on the other are specific enough to be learned efficiently.
Our goal in this paper was to show that aggregation strikes a good balance
between expressivity and specificity, in theory and in practice.

Our work is motivated by the growing need for methods that are accurate and scalable.
Aggregation is a good candidate for two main reasons.
First, aggregation accounts for general violations without imposing any structural assumptions.
This is important since accurately predicting choice is likely impossible within the confines of IIA or of targeted forms of violation.
Second, it provides a simple template into which differential components can be cast.
Aggregators can therefore capitalize on the success of discriminative learning and neural architectures to efficiently optimize predictive accuracy. 

There are two main avenues in which our work can extend.
First, our theoretical results explore the connection between generalization and violation of IIA.
We conjecture that this connection runs deeper,
and that a properly defined ``violation complexity'' can be useful
in giving exact characterizations of learnability.
Second, our work focused on a rudimentary choice task: choosing an item from a set.
There are, however, many other important choice tasks,
such as sequential or subset selection,
posing interesting modeling challenges
which we leave for future work.

\bibliographystyle{plain}
\bibliography{refs}


\clearpage

\begin{appendices}

\section{Inductive Bias} \label{app:ib}

As stated in Claim 1, SDA with inductive bias Eq. (\ref{eq:agg_ib}) generalizes many models that have been proposed in the discrete choice literature. Below, we summarize the specific choices of $w$, $r$ and $\mu$ in each paper.

\begin{table*}[ht!]
	\centering
	\scalebox{1}{
		\begin{tabular}{cllll}
			Type & Extends & $w$ & $r$ & $\mu$ \\
			\toprule
			- & MNL (\cite{mcfadden1973conditional}) & one & zero & identity \\
			\midrule
			\multirow{4}[2]{*}{SDW} & \cite{tversky1969intransitivity} & (max-min)$^\rho$ & zero & identity \\
			& \cite{mcfadden1978modeling} & linear & log $\sum$ exp & log \\
			& \cite{kalai2002rationalizing} & softmax & zero & identity \\
			& \cite{orhun2009optimal} & linear & w. average & kinked lin. \\
			\midrule
			\multirow{3}[2]{*}{SDE} & \cite{kaneko1979nash} & sum & min & log \\
			& \cite{kivetz2004alternative} (LAM) & sum & (max+min)/2 & kinked lin. \\
			& \cite{kivetz2004alternative} (CCM) & sum & min & power$(\rho)$ \\
			\midrule
			\multirow{1}[2]{*}{SDA} & \cite{kivetz2004alternative} (NCCM) & max-min & min & norm. pow$(\rho)$ \\
			& SDA (ours) & set-nn & set-nn & kinked tanh \\
			\bottomrule
		\end{tabular}%
	}%
	\label{tbl:gen_agg}
	\caption{Discrete choice models as set-aggregation models.}
\end{table*}

As discussed in Sec. \ref{sec:experiments}, we chose our specification of SDA based on set neural networks and a 'kinked' tanh function. In Appendix \ref{app:spec}, we describe each part of our model in further detail.

\section{Approximation Error} \label{app:approx}


We begin with some useful definitions and lemmas due to \cite{ambrus2015rationalising},
rephrased to align with our setup.
We then briefly highlight the differences in setting and results between our paper and
\cite{ambrus2015rationalising}.
Finally, we give the proof of Theorem \ref{thm:err_decomp}.

\subsection{Definitions and lemmas from \cite{ambrus2015rationalising}}

As in many works in economics, \cite{ambrus2015rationalising}
use the concept of item \emph{utility functions} $u$,
mapping each item to a scalar representing its utility.
From our point of view, a score function $f$ correspond to a utility function
if it is \emph{fully parameterize}, i.e., has one parameter for every item
that represents its utility.
We will think of $u$ either as functions $u(x)$ or vectors $u_x$
in appropriate context.
We use $\u$ to define a collection of utilities $u_i$ in the same way that
$\f$ represents a collections of score functions $f_i$,
and accordingly define $g_\u$ as an aggregator of utilities $\u$.

It will be useful to notationally differentiate the \emph{aggregation mechanism} $\mech$,
which is the functional form defining how individual score functions are combined,
from aggregator classes $\Agg$, which include the actual aggregators (i.e., functions),
whose form is given by $\mech$.

\begin{definition}[Triple basis, \cite{ambrus2015rationalising}]
	Let $\mech$ be an aggregation mechanism and $k \in \N$,
	then $\u = \{u_i\}_{i=1}^k$ with $u_i \in \R^3$ is a \emph{\textbf{triple basis}}
	for $T=(x_1,x_2,x_3)$ under $\mech$ if:
	\begin{enumerate}
		\item 
		$g_u(x_1,\{x_1,x_2\}) > g_u(x_2,\{x_1,x_2\})$, and
		\item
		$g_u(x,s') = g_u(x',s') \,\, \forall s' \neq \{x_1,x_2\}, x,x' \in s'$.
	\end{enumerate}
	\label{def:tb}
where $g_\u$ denotes an aggregation with item values given by $\u$.
\end{definition}
When used in an aggregator, triple bases serve two purposes:
they determine the choice from $\{x_1,x_2\}$, but make sure 
this does not effect choices in other choice sets.
Triple bases can be extended to handle \emph{set-triples} $(x,s,t)$, $x \in \X$, $s,t \subseteq \X$
with $x \in s$,
where $x$ plays the role of $x_1$, subsets $s' \subseteq s$ the role of $x_2$,
and subsets $t' \subseteq t$ the role of $x_3$:
the triple bases will choose $x$ from $s'$ (when $x \in s'$) and will be indifferent otherwise.
The following excerpt from the main proof of \cite{ambrus2015rationalising}
(stated here as a lemma) formalizes this notion.
As we now consider multiple items, it will be useful to think of utilities $u$
as mappings from items to scalar utilities, $u:\X \rightarrow \R$,\footnote{While $u$ is a mapping of items, for notational clarity we write it as a function of (observed) features.}
and we will use both representations of utilities (as vectors and as functions) interchangeably.
\begin{lemma}[Triple basis for sets, \cite{ambrus2015rationalising}]
	Let $\u = \{u_i\}_{i=1}^k$ with utilities $u_i:\X\rightarrow \R$,
	and consider some set-triple $(x,s,t)$ with $x \in \X$, $s,t \subseteq \X$
	and $x \in s$.
	If each $u_i(z)$ is the same for all $z \in s \setminus \{x\}$
	and is also the same for all $z \in t$,
	then if $\u$ is a triple-basis for some $(x_1,x_2,x_3)$,
	it is also a \emph{set} triple basis for $(x,s,t)$, in the sense that:
	\begin{enumerate}
		\item $x$ is chosen from any $s' \subseteq s \cup \{x\}$ with $x \in s'$, and
		\item $\u$ is indifferent on all other choice sets, i.e.,
		those without $x$ or that include items from $t$.
	\end{enumerate}
	\label{lemma:tb_sets}
\end{lemma}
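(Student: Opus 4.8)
The plan is to reduce the behaviour of $g_\u$ on any choice set drawn from $s \cup t$ to its behaviour on a subset of the three ``slots'' $\{x_1,x_2,x_3\}$, and then read off both conclusions directly from the two defining conditions of a triple basis (Definition \ref{def:tb}). Throughout I restrict attention to choice sets composed of items of $s \cup t$, which are exactly the items on which the lifted utilities are active.

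First I would fix representatives and record the available utility profiles. By the constancy hypothesis, the profile $(u_1(z),\dots,u_k(z))$ is identical for every $z \in s \setminus \{x\}$; denote it $\bb$. Likewise it is identical for every $z \in t$; denote it $\bc$. Writing $\ba$ for the profile of $x$, the assumption that $\u$ is a triple basis for some $(x_1,x_2,x_3)$ says precisely that $\ba,\bb,\bc$ are the profiles of $x_1,x_2,x_3$ and obey conditions (1)--(2) of Definition \ref{def:tb}. Hence every item of $s \cup t$ carries exactly one of three profiles: $x$ carries $\ba$, the items of $s \setminus \{x\}$ carry $\bb$, and the items of $t$ carry $\bc$.

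The crux --- and the step I expect to be the main obstacle --- is an invariance property of the aggregation mechanism $\mech$ that I would state and invoke explicitly: the score $g_\u(z,s')$ depends on $z$ and $s'$ only through the profile of $z$ and the \emph{set} of distinct profiles present in $s'$, and is insensitive to replication, so that adjoining further items of an already-present profile leaves every pairwise comparison among the distinct profiles unchanged. This is exactly where the restriction to admissible (scale-invariant) mechanisms is needed, since it is what allows an entire subset of equal-profile items to collapse onto a single slot. Granting it, any $s' \subseteq s \cup t$ behaves, for the purpose of comparing its items, like the subset of $\{x_1,x_2,x_3\}$ given by the profiles it realizes.

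What remains is a short case analysis on which of $\ba,\bb,\bc$ occur in $s'$. If $s'$ contains $x$ together with at least one item of $s \setminus \{x\}$ and no item of $t$, then the profiles present are exactly $\{\ba,\bb\}$, so $s'$ reduces to $\{x_1,x_2\}$; condition (1) then yields $g_\u(x,s') > g_\u(z,s')$ for every $z \in s' \setminus \{x\}$, so $x$ is chosen. Together with the trivial singleton $s' = \{x\}$, this covers all $s' \subseteq s \cup \{x\}$ with $x \in s'$ and gives conclusion (1). In every remaining case --- $s'$ omits $x$ (profiles $\subseteq \{\bb,\bc\}$) or $s'$ contains an item of $t$ (so $\bc$ is present) --- the set of present profiles is a subset of $\{\ba,\bb,\bc\}$ different from $\{\ba,\bb\}$, so $s'$ reduces to a subset of $\{x_1,x_2,x_3\}$ other than $\{x_1,x_2\}$; condition (2) then makes $g_\u$ indifferent on $s'$, giving conclusion (2). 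Combining the two cases proves the lemma.
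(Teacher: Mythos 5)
There is nothing in this paper to compare your attempt against: the paper does not prove Lemma~\ref{lemma:tb_sets}, but imports it explicitly as ``an excerpt from the main proof of \cite{ambrus2015rationalising}.'' Judged on its own terms, your reconstruction is the intended argument: collapse the equal-profile items of $s \setminus \{x\}$ and of $t$ onto the slots $x_2$ and $x_3$, then case-split on which of the three profiles appear in $s'$. Your case analysis is exhaustive and correct --- sets containing $x$ and only $s$-items realize the profiles of $\{x_1,x_2\}$ and inherit condition (1) of Definition~\ref{def:tb}, while every other set (those omitting $x$, and those containing $t$-items, which reduce to $\{x_1,x_3\}$ or $\{x_1,x_2,x_3\}$) falls under the indifference condition (2).

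The one substantive issue is where you anchor your ``crux'' invariance. You attribute the replication-insensitivity of $g_{\u}$ to scale invariance, but Definition~\ref{def:scale_inv} only constrains the mechanism under the rescaling $u \mapsto \alpha u$; it says nothing about duplicating items, and by itself does not license collapsing a block of equal-profile items onto a single slot (a mechanism whose set-dependent weights average profile values over $s'$ can satisfy Definition~\ref{def:scale_inv} with $\Phi(\alpha)=\alpha|\alpha|$ yet have its pairwise comparisons shift under replication). What secures the collapse in \cite{ambrus2015rationalising} are the \emph{other} axioms their admissible mechanisms satisfy --- the Neutrality/Consistency-type conditions this paper's appendix alludes to --- which make $g_{\u}(z,s')$ depend on items only through their utility profiles and force indifference among identically-profiled items. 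Since the lemma is asserted only for that axiomatized class, your proof is fine once the invoked invariance is charged to those axioms rather than to scale invariance; as written, ``granting it'' on the strength of scale invariance alone leaves the load-bearing step unsupported. A minor further point: your profile-to-slot map tacitly needs the three profiles to be distinct, i.e., $s \setminus \{x\}$ and $t$ disjoint; this holds in the lemma's only use here, where $t = \X \setminus s$, but is worth stating as a hypothesis.
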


Triple bases serve as the main building block in \cite{ambrus2015rationalising}.
The surprising finding in \cite{ambrus2015rationalising} is that once a
triple basis is known for an arbitrary triplet $(x_1,x_2,x_3)$,
using Lemma \ref{lemma:tb_sets}, it can be applied to any violating set $s$
by setting $x=y$ and $t=\X \setminus s$.
Hence, the existence of a triple basis is a property of \emph{the aggregation mechanism},
and not specific to certain items or choice sets (and in our case, to score function classes or the distribution).
The authors of \cite{ambrus2015rationalising} provide explicit constructions of triple bases
for several aggregation (a.k.a. `multi-self') models from the literature,
as well as a general recipe that applies to a large class of
aggregation mechanisms that satisfy, in addition to certain natural axioms
(e.g., Neutrality and Consistency),
the property of \emph{scale invariance}:
\begin{definition}[Scale invariance, \cite{ambrus2015rationalising}]
	An aggregator $g_u$ is scale invariant if there exists an invertible and odd function $\Phi$
	such that for every $\alpha>0$, $g_{\alpha u}(x,s) = \Phi(\alpha) g_u(x,s)$ for all $x \in \X, s \in \allsets$.
	\label{def:scale_inv}
\end{definition}
Hence, for any aggregation mechanism that satisfies Def. \ref{def:scale_inv}
there exists some triple basis $\u$.
Furthermore, the authors show that $\u$ includes at most $k=5$ utilities.
Scale invariance simply means that the ranking over items (and hence prediction)
induced by the aggregator do not depend on the scale of their internal score functions,
and hence, scaling these does not change predictions.
Accordingly, and following \cite{ambrus2015rationalising},
our results herein apply to such mechanisms,
for which we assume the existence of a corresponding triple basis.

The authors of \cite{ambrus2015rationalising} show that
all of their results follow through when exact triple bases are replaced
with approximate triple basis, where the equalities hold only up to some
precision $\epsilon$.
This will also be the case in our proof.

\subsection{Comparing our setting and results to those of \cite{ambrus2015rationalising}}

Before proceeding with our proof, we describe the differences between our result
and those of \cite{ambrus2015rationalising},
thus highlighting some of the challenges encountered while proving our result.

The setting of \cite{ambrus2015rationalising} differs from ours in three crucial aspects.
First, they focus on a realizable setting:
they assume the existence of a choice function $c: \allsets \rightarrow \X$
designating choices $y = c(s) \in s$ for all $s \in \allsets$,
and aim to recover it from the class of all choice functions.
We, on the other hand, focus on the agnostic setting,
where labels (i.e., choices) $y$ are not necessarily generated from a function
within the class we consider (and in fact, can be sampled from an unknown
conditional distribution $D_{Y|S}$).

Second, \cite{ambrus2015rationalising} provide worst case results for the
successful reconstruction of $c$ from any collection of labeled examples.
In contrast, we focus on a statistical setting where a sample set
of alternatives and choices are drawn from some unknown distribution. We are interested in minimizing the expected loss, i.e.,
the probability of correctly predicting choice from choice sets drawn
from the same distribution.

Finally, and perhaps most importantly, \cite{ambrus2015rationalising}
assume that all items can be given arbitrary scores (which they refer to as `utilities').
In other words, score functions are fully parameterized and can assign any value to
any item.
This of course means that the number of parameters required \emph{for each score function}
(of which there can be many in the context of aggregation)
is equal to the number of items, which in practice can be rather large,
and in principle can be unbounded.
In contrast, we focus on the parametric settings common in machine learning,
where items are described by features (e.g., vectors),
and score functions are parametric functions of those features.
Critically, without the assumption of full parameterization,
the results of \cite{ambrus2015rationalising} break down,
as they require the ability to assign arbitrary values to each item.

\subsection{Proof of Theorem \ref{thm:err_decomp}}

Let $\mech$ be a scale-invariant aggregation mechanism
as in \cite{ambrus2015rationalising},
and let $\u$ be a corresponding triple basis of size $k$,
which we assume fixed throughout the proof.
As noted in Sec. \ref{sec:apx_err}, the
proof requires that the class of aggregators $\Agg$ be defined over
a class of score functions that is slightly more expressive than $\F$,
denoted $\Fplus$.
Specifically, $\Fplus$ includes combinations of pairs of score functions from $\F$,
given by $\Fplus = \{a(b(x), b'(x)) \,:\, a \in \nn, b,b' \in \Base \}$
where $\nn$ is a class of small neural networks
(2 inputs, 2 layers with 2 units each, sigmoid activations) 
whose precise definition will be given in the proof.
Note that by construction we will have $\F \subset \Fplus$.

To reduce notational clutter we will use $\err{*}{\dist}(\cdot) = \err{*}{}(\cdot)$ for the minimal expected error
and $\err{*}{A}(\cdot)=\err{*}{}(\cdot|A)$ for the minimal expected error
conditioned on the event $s \in A$.

We can now restate a slightly tighter variant of Theorem \ref{thm:err_decomp} in finer detail:
\begingroup
\def\thetheorem{\ref{thm:err_decomp}}
\begin{theorem} 
Let $\mech$ be a scale-invariant aggregation mechanism as in \cite{ambrus2015rationalising}
with a triple basis $\u$ of size $k$.
Let $\Agg = \Agg^{(\ell)}_\Fplus$ be a corresponding class of aggregators 
of dimension $\ell$ defined over the class of item score functions $\Fplus$. Then:
\beq
\err{*}{\dist}(\Agg) \le
p_\Nonviol  \, \err{*}{\Nonviol}(\Fplus) + 
\min_{B \in \Base^{\ell'}}
\sum_{b \in B} p_{\sep_b}  \, \err{*}{\sep_b}(\Base) + p_{\Viol \setminus \sep_B}
\label{eq:err_decomp_appendix}
\eeq
where 
$\sep_B = \cup_{b \in B} \sep_b$,
$\ell' \le (\ell-1)/k$,
and $p_A=P(s \in A)$.
\end{theorem}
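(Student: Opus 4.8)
The plan is to prove the bound constructively: I will exhibit a single aggregator $\agg \in \Agg^{(\ell)}_\Fplus$ whose expected $0/1$ risk is at most the right-hand side of Eq.~\eqref{eq:err_decomp_appendix}, so that $\err{*}{\D}(\Agg) \le \err{}{}(\agg)$ gives the claim. The backbone of the construction is the fixed triple basis $\u$ of size $k$, which exists because $\mech$ is scale-invariant. First I would fix the minimizing tuple $B = \{b_1,\dots,b_{\ell'}\} \in \Base^{\ell'}$ and partition the choice-set space into the three disjoint events $\Nonviol$, the covered region $\sep_B = \cup_{b \in B} \sep_b$, and the uncovered remainder $\Viol \setminus \sep_B$. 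Bounding the conditional risk on each piece and combining them by the law of total probability then yields Eq.~\eqref{eq:err_decomp_appendix}.

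For the construction I would split the budget as $\ell = 1 + k\ell'$: one score function is reserved for a baseline $f_0 \in \Fplus$ attaining the conditional optimum $\err{*}{\Nonviol}(\Fplus)$ on non-violating sets, and each of the $\ell'$ separable regions receives a block of $k$ score functions realizing a copy of $\u$. After fixing, within each region, an optimal predictor $\tilde f_j \in \Base$ achieving $\err{*}{\sep_{b_j}}(\Base)$, I would implement the $i$-th utility of the $j$-th block inside $\Fplus$ as a network $a \in \nn$ applied to the pair $(b_j(x), \tilde f_j(x))$. The separating function $b_j$ acts as a set indicator---since $s \succarg{b_j} \allsets \setminus s$ for every $s \in \sep_{b_j}$, its value distinguishes in-set from out-of-set items---while $\tilde f_j$ supplies the within-region scores; together they yield the three-level piecewise-constant structure (target item, remaining in-set items, out-of-set items) required to invoke the set triple-basis Lemma~\ref{lemma:tb_sets}.

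The verification then proceeds region by region. On $s \in \Nonviol$ every block is indifferent by the second clause of Lemma~\ref{lemma:tb_sets}, so the prediction is governed by $f_0$ and the conditional risk is $\err{*}{\Nonviol}(\Fplus)$. On $s \in \sep_{b_j}$ the $j$-th block activates and, by the first clause of the lemma, forces the item designated within $s$ by $\tilde f_j$, dominating the baseline there (harmless, since the block is indifferent on every other set and so may be amplified without side effects)---while every other block stays indifferent because its separating function does not isolate $s$---so the conditional risk equals the $0/1$ risk of $\tilde f_j$ on the region, namely $\err{*}{\sep_{b_j}}(\Base)$. On the remainder $\Viol \setminus \sep_B$ no block isolates the set, so I bound the $0/1$ risk by $1$, contributing $p_{\Viol \setminus \sep_B}$. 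Summing and using the budget identity $k\ell' \le \ell-1$, i.e.\ $\ell' \le (\ell-1)/k$, gives Eq.~\eqref{eq:err_decomp_appendix} after minimizing over $B$; any overlap $\sep_{b_j} \cap \sep_{b_{j'}}$ only makes the sum $\sum_b p_{\sep_b}$ conservative and is absorbed by assigning each covered set to a single block.

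The hard part will be realizing the hypothesis of Lemma~\ref{lemma:tb_sets} in the parametric setting, where arbitrary per-item utilities are unavailable: the block utilities are only \emph{approximately} piecewise-constant, the target item singled out by $\tilde f_j$ is set-dependent, and the threshold separating in-set from out-of-set items varies with $s$. I would address this by invoking the robustness of triple bases to $\epsilon$-perturbations established in \cite{ambrus2015rationalising}. Since the sets of $\sep_{b_j}$ are nested initial segments of the $b_j$-ranking, there is a uniform separation margin across the region; choosing the sigmoidal widths in $a \in \nn$ small relative to this margin drives every realized utility within any prescribed $\epsilon$ of an exact set triple basis, uniformly over $\sep_{b_j}$. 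The induced predictions then coincide with those of the exact construction, so the approximation contributes nothing to the $0/1$ risk. Establishing that a single network width suffices uniformly over the (set-dependent) triples of $\sep_{b_j}$, and that distinct blocks remain mutually indifferent under these perturbations, is the technical core the full proof must supply.
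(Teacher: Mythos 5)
Your proposal is correct and takes essentially the same route as the paper's own proof: the blocks built from $\nn$-networks applied to pairs $(b_j(x),\tilde f_j(x))$, the budget split $\ell = 1+k\ell'$, the amplification of a block on its region justified by scale invariance, the $\epsilon$-approximate triple bases, and the worst-case bound of $1$ on $\Viol \setminus \sep_B$ are precisely the ingredients of the paper's Lemmas \ref{lemma:nn_imp_tb}, \ref{lemma:sep2iso}, and \ref{lemma:decomp_via_iso}. Your direct construction of a single aggregator is just the unrolled form of the paper's iterative decomposition-by-isolation argument, whose step-by-step scaling also supplies the ordering of block magnitudes that rigorizes your brief remark about overlapping regions $\sep_{b_j} \cap \sep_{b_{j'}}$.
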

\addtocounter{theorem}{-1}
\endgroup

Before giving the proof, we state a practical corollary of Theorem \ref{thm:err_decomp}.
\begin{corollary}
Let $\nn^{N,K}$ be a class of neural networks with $N \ge 2$ fully connected layers
with $K \ge 2$ units each and with sigmoidal activations (i.e., multilayer perceptrons).
Then if $\Agg$ is defined over score functions $\nn^{N,K}$,
the bound in Eq. \ref{eq:err_decomp_appendix} holds for 
$\F = \nn^{N-2,K/2}$.
\end{corollary}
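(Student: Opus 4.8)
The plan is to treat the given MLP class $\nn^{N,K}$ as playing the role of the enriched class $\Fplus$ from Theorem~\ref{thm:err_decomp}, and to exhibit a base class whose induced $\Fplus$ sits inside $\nn^{N,K}$. Concretely, I would set $\Base = \nn^{N-2,K/2}$ and argue that the enriched class built from this base --- namely $\Fplus = \{a(b(x),b'(x)) : a \in \nn,\, b,b' \in \Base\}$, where $\nn$ is the fixed $2$-input, $2$-layer, $2$-unit sigmoidal gadget fixed in the setup --- satisfies $\Fplus \subseteq \nn^{N,K}$. Granting this containment, the remainder is immediate: aggregators of dimension $\ell$ built over a smaller pool of score functions form a subclass of those built over a larger pool, so $\Agg^{(\ell)}_\Fplus \subseteq \Agg^{(\ell)}_{\nn^{N,K}}$. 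Since approximation error is a minimum over the class and hence non-increasing under class enlargement, we get $\err{*}{\D}(\Agg^{(\ell)}_{\nn^{N,K}}) \le \err{*}{\D}(\Agg^{(\ell)}_\Fplus)$, and Theorem~\ref{thm:err_decomp} bounds the right-hand side by Eq.~\eqref{eq:err_decomp_appendix} with $\Base = \nn^{N-2,K/2}$, which is exactly the claim.

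The crux is therefore the architectural embedding $\Fplus \subseteq \nn^{N,K}$, which I would establish by explicit construction. Given $b,b' \in \nn^{N-2,K/2}$, I would run the two networks in parallel inside a single MLP: stacking their layers side by side yields a network of depth $N-2$ whose width in each layer is $K/2 + K/2 = K$ and whose output is the pair $(b(x),b'(x)) \in \R^2$. On top of this I would stack the fixed gadget $a \in \nn$, contributing $2$ further layers of width $2$. The composite then has depth $(N-2)+2 = N$, and since $K \ge 2$ every layer has width at most $K$; padding the narrow layers with zero-weight units realizes the composite as a genuine element of $\nn^{N,K}$, with sigmoidal activations throughout matching in all three pieces. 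This shows every $a(b,b') \in \Fplus$ lies in $\nn^{N,K}$; combined with the standing fact $\Base \subset \Fplus$ noted in the setup, it gives the chain $\Base \subseteq \Fplus \subseteq \nn^{N,K}$ that the argument needs.

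The main obstacle I anticipate is the bookkeeping at the seam between $(b,b')$ and $a$: depending on whether the outputs of $b,b'$ are taken pre- or post-activation, and on the precise convention for counting layers and units in $\nn^{N,K}$, one must verify that the interface composes without inserting a spurious extra activation layer and that the depth and width tallies come out to exactly $N$ and $K$ rather than off by one. I would handle this by absorbing any interface linear map into the first layer of $a$ --- sigmoid being invertible lets one reconcile the pre/post-activation conventions --- and by identifying the shared output layer of the parallel $(b,b')$ block as the $(N-2)$-th layer feeding directly into $a$. This is routine once a convention is fixed, and crucially none of it touches the error decomposition itself, which depends only on the class containment established above.
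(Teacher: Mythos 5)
Your proposal is correct and follows essentially the same route as the paper: the paper's proof is the single observation that $\Fplus \subseteq \nn^{N,K}$ when $\F = \nn^{N-2,K/2}$, which your parallel-stacking construction (two width-$K/2$ networks side by side for $N-2$ layers, topped by the fixed two-layer gadget $a \in \nn$) makes explicit, combined with monotonicity of the approximation error under class enlargement. Your attention to the seam between $(b,b')$ and $a$ --- absorbing the interface linear maps into the first layer of $a$ --- is a detail the paper leaves implicit, but it does not change the argument.
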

The corollary shows how for fully-connected neural networks,
the error on each violating region is bounded by the error of
slightly less expressive neural networks.
The result holds since $\Fplus \subseteq \nn^{N,K}$.

\begin{proof}

We begin with two useful definitions.
\begin{definition}[Implementation]
	Let $\f \in (\Fplus)^k$, then if $\f(x')=\u(x')$ for all items $x'$
	appearing in a (set-)triple $T$,
	we say that $\f$ \emph{\textbf{implements}} $T$.
	Similarly, if $\f(x') \approx \u(x')$ for all $x'$,
	we say that $\f$ approximately-implements $T$.
	\label{def:implement}
\end{definition}

\begin{definition}[Isolation]
	Let $s \in \allsets$,
	then if $\f$ (approximately) implements 
	the set-triple $(z,s,\coap(s))$ 
	for some $z \in s$ and for $\coap(s)=\allsets \setminus s$,
	we say that $\f$ (approximately) \emph{\textbf{isolates}} $s$.
	\label{def:isolate}
\end{definition}

Implementation simply states that the values of items in the triple-basis
under $\f$ are (approximately) those under $\u$.
Isolation considers the implementation of triple bases for which the choice set is $s$
(but the actual choice $z$ does not matter).
As we will see, aggregation will target certain choice sets $s$ by ``isolating''
them from others, letting some of the aggregated score functions effect $s$,
but guaranteeing that others are indifferent to it.
Note that implementation and isolation, as well as separation (Def. \ref{def:separation}),
are inherited properties, i.e., that if they hold for $s$,
they also hold for any $s' \subseteq s$.

For our next lemma, we will make explicit the class $\nn$.
Each $a \in \nn$ is a neural networks taking as input vectors of size two and outputing a single scalar.
The networks have two fully-connected hidden layers, each with two units, and sigmaoidal activations.
The final layer is a 2-to-1 linear layer.

We parameterize units using $r(\cdotp;\alpha,\beta)=\inner{\alpha,\cdotp}+\beta$
with $\alpha \in \R^2$, $\beta \in \R$.
and use $\theta \in \Theta$ to denote all of the network's parameters.
A network $a \in \nn$ with parameters $\theta$ is denoted $a_\theta$.
We assume w.l.o.g. that sigmoidal activations $\sig$ are scaled to $[0,1]$.


Recall that each function $\fbar \in \Fplus$ is composed of a pair $b,b' \in \Base$
whose outputs are combined via some $a \in \nn$,
given by $\fbar_\theta(x;b,b') = a_\theta(b(x),b'(x))$.
We further denote:
\[
\f_\btheta(x;b,b') = (f_{\theta_1}(x;b,b'),\dots,f_{\theta_k}(x;b,b')), \quad
\btheta = (\theta_1,\dots,\theta_k)
\]
The next lemma shows how functions in $\Fplus$ can implement triple bases when separation holds.

\begin{figure}[!t]
	\begin{center}
		\includegraphics[width=0.8\columnwidth,trim=0 0 4cm 0,clip]{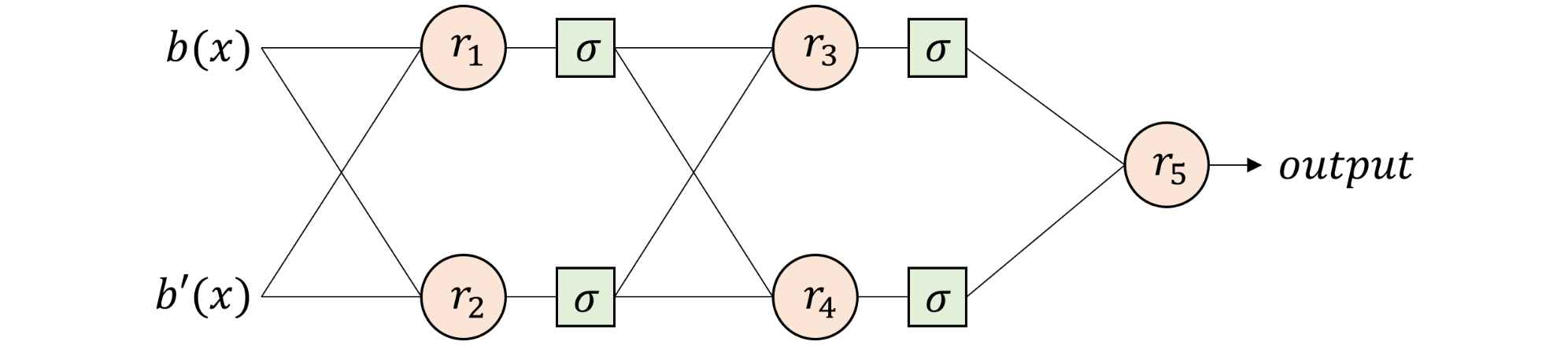} 
		\caption{The auxiliary class $\nn$}
		\label{fig:nn_viz}
	\end{center}
\end{figure}

\begin{lemma}[Neural implementation of triple bases]
	Let $T=(x_1,x_2,x_3)$, and let $b,b' \in \Base$ be such that $b$ separates $x_1,x_2$ from $x_3$
	and $b'$ separates $x_1$ from $x_2$, i.e.,
	$\{x_1,x_2\} \succarg{b}	x_3$ and $x_1 \succarg{b'} x_2$.\\
	Then, there exists $\btheta \in \Theta^k$ for which $\f_\btheta(x;b,b')$
	(approximately) implements $T$.
	\label{lemma:nn_imp_tb}
\end{lemma}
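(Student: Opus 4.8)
The plan is to realize the fixed triple basis $\u = \{u_i\}_{i=1}^k$ pointwise at the three items of $T$, using the two scalar readouts $b(x)$ and $b'(x)$ as the only features that a network in $\nn$ ever sees. Since implementation (Def.~\ref{def:implement}) only constrains the values $\f_\btheta(x;b,b')$ at $x \in \{x_1,x_2,x_3\}$, and each $u_i$ prescribes exactly the three target scalars $u_i(x_1),u_i(x_2),u_i(x_3)$, it suffices to build, for each $i$, a network $a_{\theta_i} \in \nn$ mapping the three planar points $\big(b(x_j),b'(x_j)\big)$, $j=1,2,3$, to those prescribed values. The crucial observation is that the separation hypotheses render the three points distinguishable one coordinate at a time: $b(x_3) < \min\{b(x_1),b(x_2)\}$ (from $\{x_1,x_2\}\succarg{b} x_3$) separates $x_3$ from $\{x_1,x_2\}$ along the $b$-axis, while $b'(x_2) < b'(x_1)$ (from $x_1 \succarg{b'} x_2$) separates $x_2$ from $x_1$ along the $b'$-axis.

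First I would fix the first-layer weights, which are shared across all $i$ since they depend only on $b,b'$ and the items. The first hidden unit thresholds on $b$ alone: pick $\tau_b \in \big(b(x_3),\,\min\{b(x_1),b(x_2)\}\big)$ and set $\alpha=(c,0)$, $\beta=-c\tau_b$; the second hidden unit thresholds on $b'$: pick $\tau_{b'} \in \big(b'(x_2),\,b'(x_1)\big)$ and set $\alpha=(0,c)$, $\beta=-c\tau_{b'}$. Taking the scale $c$ large drives the two sigmoids toward step functions, so up to arbitrarily small error the first layer emits the binary code $(1,1)$ on $x_1$, $(1,0)$ on $x_2$, and $(0,s_3)$ on $x_3$, where $s_3\in\{0,1\}$ is determined by whether $b'(x_3)$ exceeds $\tau_{b'}$. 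In either case the three codes are distinct, and, more importantly, an affine function of the code already hits three arbitrary targets: the induced $3\times 3$ system for the effective output coefficients $(A^{(i)},B^{(i)},C^{(i)})$ has an invertible coefficient matrix for both $s_3=0$ and $s_3=1$, yielding $a_{\theta_i}\approx u_i$ on $T$. The second hidden layer is inessential here: running its units in their near-linear regime (small input weights, large output weights to compensate) lets the layer-2-plus-output composition realize any affine function of the layer-1 code to arbitrary precision, which is exactly what the construction requires. Assembling $\btheta=(\theta_1,\dots,\theta_k)$ with shared layer-1 weights and per-$i$ output weights then gives $\f_\btheta(x;b,b')\approx\u(x)$ at every item of $T$.

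To turn ``$\approx$'' into the stated guarantee, I would note that exact implementation is the idealized $c\to\infty$ step-function limit, and that for finite $c$ the genuine sigmoids deviate from their targets by an amount that shrinks as $c$ grows. Here I invoke the robustness result of \cite{ambrus2015rationalising} (quoted earlier in this appendix), which guarantees that every triple-basis argument persists when the defining equalities hold only up to a precision $\epsilon$; choosing $c$ large enough to push the residual deviation below $\epsilon$ then certifies approximate implementation of $T$.

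I expect the main obstacle to be the joint-distinguishability bookkeeping rather than the approximation step. One must verify carefully that two axis-aligned separations suffice to single out all three items even though $b'$ is \emph{unconstrained} on $x_3$, and that the resulting $3\times 3$ linear system stays solvable in both configurations of the $x_3$-code $s_3$. The two-hidden-layer architecture is a minor nuisance to handle cleanly, but since it only adds expressive power, reducing it to the single-layer affine construction above settles it.
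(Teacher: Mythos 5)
Your construction is correct and is essentially the paper's own proof: saturated first-layer sigmoids thresholding on $b$ and $b'$ (justified by $\{x_1,x_2\}\succarg{b}x_3$ and $x_1\succarg{b'}x_2$) produce distinguishing codes at the three items, an affine readout then hits the prescribed values $u_i(x_1),u_i(x_2),u_i(x_3)$, and the large-scale limit plus the $\epsilon$-robustness of triple bases from \cite{ambrus2015rationalising} closes the approximation, exactly as in the appendix. The only (immaterial) deviation is mechanical: the paper spends the second hidden layer on explicit features $r_3$ (an AND, which eliminates the unconstrained $x_3$-value of the $b'$-unit) and $r_4$ (a copy of the $b$-unit), so that $r_3,r_4$ together with $e=(1,1,1)$ form an approximate linear basis of $\R^3$, whereas you run that layer in its near-linear regime and verify the $3\times 3$ system directly --- where in fact your case split on $s_3\in\{0,1\}$ is unnecessary, since the determinant of the system with rows $(1,1,1),(1,0,1),(0,t,1)$ equals $-1$ for \emph{every} value $t$ of the $x_3$ code, which also disposes of the edge case $b'(x_3)=\tau_{b'}$ without having to perturb the threshold.
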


\begin{proof}
	We give a general recipe for constructing $\btheta$.
	Assume w.l.o.g. that $\sig$ maps to $[0,1]$.
	The idea is to choose $\alpha,\beta$ such that $\sig$
	will give items on the r.h.s. and l.h.s. of the separation operator $\succ$ values that are arbitrarily close to 0 and 1, respectively.
	We now construct each unit of the the neural network (see diagram in Fig. \ref{fig:nn_viz}).
	\begin{itemize}
		\item 
		Since $\{x_1,x_2\} \succarg{b}	x_3$,
		there exist $\alpha,\beta$ such that
		$r((b(x),b'(x));\alpha,\beta) \approx 1$ for $x=x_1,x_2$ and 0 for $x=x_3$.
		This is because $\sig$ is a sigmoidal, and hence $\alpha$ and $\beta$ simply shift
		and scale the sigmoid so that the higher-valued $x_1,x_2$ are ``pushed'' towards 1
		and the lower-valued $x_3$ towards 0.
		We denote this unit by $r_1$.
		\item 
		Since $x_1 \succarg{b'}	x_2$, there exist $\alpha,\beta$ such that
		$r((b(x),b'(x));\alpha,\beta) \approx 1$
		for $x=x_1$ and 0 for $x=x_2$. We denote this unit by $r_2$.
		Note that the values for $x=x_3$ are arbitrary but bounded in $[0,1]$.
		\item 
		There exist $\alpha,\beta$ such that
		$r((r_1(x),r_2(x));\alpha,\beta) \approx 1$ for $x=x_1$
		and 0 for $x=x_2,x_3$.
		This is because $r_1$ and $r_2$ contribute 1 to $x_1$,
		while $x_2$ and $x_3$ never get 1 from $r_1$ and $r_2$.
		We denote this unit by $r_3$.
		\item 
		Because $\sig$ is sigmoidal,
		there exist $\alpha=(\epsilon,0)$ with small enough $\epsilon$ such that 
		with $\beta=0$, $r$ approximates the identity function on the first input.\footnote{
			Alternatively, $\nn$ can be defined with only one unit in the second layer.}
		We denote this unit by $r_4$.
	\end{itemize}
	
	Note that measuring the outputs of $r_3$ and $r_4$ when plugging $x_1,x_2,x_3$ into $b,b'$
	gives:
	\begin{equation} \label{eq:linear_basis}
	r_3 \begin{pmatrix} x_1 \\ x_2 \\ x_3 \end{pmatrix}  \approx
	\begin{pmatrix} 1 \\ 0 \\ 0 \end{pmatrix}, \qquad
	r_4 \begin{pmatrix} x_1 \\ x_2 \\ x_3 \end{pmatrix}  \approx 
	\begin{pmatrix} 1 \\ 1 \\ 0 \end{pmatrix} 
	\end{equation}
	which together with the vector $e=(1,1,1)$, form an (approximate) linear basis for $\R^3$.
	This means that any utility $u_i \in \u$ when applied to $x_1,x_2,x_3$
	can be expressed as a linear combination of $r_3,r_4$, and $e$.
	Specifically,	there exist $\alpha, \beta$ such that the linear layer ($r_5$) gives
	$\inner{\alpha,(r_3(x),r_4(x))}+\beta = u_i(x)$ for all $x \in \{x_1,x_2,x_3\}$.
	Altogether, we get that there exist $\theta_i$ with $f_{\theta_i}(x;b,b')=u_i$ for all $i$,
	giving $\btheta = \{\theta_i\}_{i=1}^k$ as required.
\end{proof}

\begin{corollary} \label{corr:nn_imp_tb_sets}
Lemma \ref{lemma:nn_imp_tb} applies to set-triples $(x,s,t)$
when $s \succarg{b} t$ and $x \succarg{b'} s$.
\end{corollary}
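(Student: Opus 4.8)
The plan is to reduce the set-triple case directly to the three-item case already handled by Lemma~\ref{lemma:nn_imp_tb}, using the role assignment that underlies set-triples: $x$ plays the role of $x_1$, every $z \in s \setminus \{x\}$ plays the role of $x_2$, and every $z \in t$ plays the role of $x_3$. Under this identification the hypotheses of the corollary, $s \succarg{b} t$ and $x \succarg{b'} s$, are precisely the two separation conditions $\{x_1,x_2\} \succarg{b} x_3$ and $x_1 \succarg{b'} x_2$ required by Lemma~\ref{lemma:nn_imp_tb}: the first says $b$ scores all of $s$ above all of $t$, and the second that $b'$ scores $x$ above the remaining items $s \setminus \{x\}$. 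I would therefore reuse the exact same five-unit construction $r_1,\dots,r_5$ from the proof of Lemma~\ref{lemma:nn_imp_tb}, and the only thing that needs checking is that it behaves correctly when the role groups $s \setminus \{x\}$ and $t$ each contain many items rather than a single one.

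The key observation driving the proof is that the saturating sigmoidal units $r_1$ and $r_2$ \emph{flatten} the (a priori varying) values of $b$ and $b'$ on each role group to a common value. Concretely, since $s$ and $t$ are finite subsets of $\allsets$ and $s \succarg{b} t$ holds, the gap $\min_{z \in s} b(z) - \max_{z \in t} b(z)$ is strictly positive; hence I can scale $\alpha$ in $r_1$ so that $\sig$ sends every item of $s$ uniformly close to $1$ and every item of $t$ uniformly close to $0$. Likewise, $x \succarg{b'} s$ gives a strictly positive gap $b'(x) - \max_{z \in s\setminus\{x\}} b'(z)$, letting $r_2$ send $x$ to $1$ and all of $s\setminus\{x\}$ to $0$; the values of $r_2$ on $t$ remain uncontrolled but bounded in $[0,1]$, exactly as for $x_3$ in the original lemma. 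The AND-gate $r_3$ then maps $x \mapsto 1$ and everything else to $0$ --- crucially, items of $t$ are sent to $0$ because their $r_1$ value is already $0$, regardless of their $r_2$ value --- while $r_4 \approx r_1$ maps $s \mapsto 1$ and $t \mapsto 0$. So on the three role groups the pair $(r_3,r_4)$ takes the approximate values $(1,1)$, $(0,1)$, $(0,0)$, uniformly over all items within each group.

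With this flattening in hand, the situation is identical to the three-item case: together with $e=(1,1,1)$ the vectors in \eqref{eq:linear_basis} form an approximate linear basis of $\R^3$, so the linear layer $r_5$ can reproduce the triple-basis utilities $\u$. Because each $u_i \in \u$ is constant on each role group, and the network output is now (approximately) constant on each group, I obtain $\f_\btheta(x';b,b') \approx \u(x')$ for every item $x'$ appearing in the set-triple $(x,s,t)$. This is exactly the hypothesis of Lemma~\ref{lemma:tb_sets} (each $u_i$ constant on $s\setminus\{x\}$ and on $t$), so $\f_\btheta$ approximately implements the set-triple in the sense of Definition~\ref{def:implement}, proving the corollary. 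The main obstacle is the uniformity of the sigmoidal saturation: I must ensure a single choice of $\alpha,\beta$ pushes \emph{all} items of each (arbitrary but finite) role group close enough to the saturation values simultaneously, and that the residual uncontrolled $r_2$-values on $t$ are genuinely neutralized by the AND structure of $r_3$ rather than leaking into the final output.
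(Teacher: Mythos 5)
Your proposal is correct and takes essentially the same route as the paper's own (much terser) proof: both reduce the set-triple $(x,s,t)$ to the three-item construction of Lemma \ref{lemma:nn_imp_tb} by choosing the scaling and shifting parameters so that the saturating units flatten each role group ($s\setminus\{x\}$ and $t$) to a common value, and then invoke Lemma \ref{lemma:tb_sets}. Your write-up simply makes explicit the details the paper leaves implicit --- the positive separation gaps enabling uniform saturation, and the fact that the AND-unit $r_3$ neutralizes the uncontrolled $r_2$-values on $t$.
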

\begin{proof}
According to Lemma \ref{lemma:tb_sets}, it suffices to ensure that
scores for all $z \in s \setminus \{x\}$ and for all $z' \in t$ are the same.
Since the construction in Lemma \ref{lemma:nn_imp_tb} applies to arbitrary separable triples,
the above can be achieved by choosing parameters (i.e., scaling and shifting)
such that the properties of each $r_i$ hold for all $x \in s,t$.
\end{proof}

Lemma \ref{lemma:nn_imp_tb} 
give a ``template'' for generating set-triple bases from a pair
of score functions $b,b'$, where $b$ determines the choice set $s$ (by separating $s$ from $t$),
and $b'$ determines the choice $x \in s$ (by separating $x$ from $s$).
This notion is formalized in the next result relating separation and isolation.
\begin{lemma}[Separation entails isolation]
	If $b$ separates $s$, then for any $b' \in \Base$,
	we have that $\f_\btheta(x;b,b')$ isolates $s$.
	Furthermore, the prediction is determined by $b'$, i.e.,
	\[
	h_{g_{_\f}}(s) = \argmax_{x \in s} b'(x)
	\]
	\label{lemma:sep2iso}
\end{lemma}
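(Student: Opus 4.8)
The plan is to realize the claimed isolation as a single instance of the neural set-triple construction of Corollary~\ref{corr:nn_imp_tb_sets}, with $b$ acting as the ``outer'' separator (of $s$ from everything outside it) and $b'$ acting as the ``inner'' separator (of the intended winner from the remaining items of $s$). First I would unpack the hypothesis: since $b$ separates $s$, Definition~\ref{def:separation} gives $s \succarg{b} \allsets \setminus s$, i.e. $s \succarg{b} \coap(s)$ with $\coap(s) = \allsets \setminus s$. This is exactly the first premise $s \succarg{b} t$ of Corollary~\ref{corr:nn_imp_tb_sets}, taken with $t = \coap(s)$.

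Next I would fix the target of the isolation by setting $z = \argmax_{x \in s} b'(x)$, breaking ties arbitrarily. By the choice of $z$ we have $b'(z) > b'(x)$ for every $x \in s \setminus \{z\}$, hence $z \succarg{b'} s \setminus \{z\}$, which provides the second premise $x \succarg{b'} s$ of the corollary with $x = z$. With both premises in hand, Corollary~\ref{corr:nn_imp_tb_sets} supplies $\btheta \in \Theta^k$ for which $\f_\btheta(x;b,b')$ approximately implements the set-triple $(z, s, \coap(s))$. Since $z \in s$ and $\coap(s) = \allsets \setminus s$, this is precisely the statement that $\f_\btheta(x;b,b')$ isolates $s$ in the sense of Definition~\ref{def:isolate}, settling the first claim.

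For the ``furthermore'' part I would read the prediction off Lemma~\ref{lemma:tb_sets}. Because $\f_\btheta$ implements the set-triple drawn from the triple basis $\u$, the aggregator $g_{_\f}$ reproduces the behavior of $g_\u$ on the choice sets governed by this set-triple, and Lemma~\ref{lemma:tb_sets} then guarantees that $z$ is chosen out of every $s' \subseteq s$ containing $z$. Specializing to $s' = s$ yields $h_{g_{_\f}}(s) = z = \argmax_{x \in s} b'(x)$, as required. The one delicate point is that the implementation provided by Corollary~\ref{corr:nn_imp_tb_sets} is only \emph{approximate}: the sigmoidal units collapse the items within each separated group to a common value only up to a precision $\epsilon$. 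I would dispose of this exactly as in \cite{ambrus2015rationalising}---as noted after Definition~\ref{def:scale_inv}, all triple-basis conclusions persist under the passage to approximate triple bases, so for small enough $\epsilon$ the strict inequality that decides the choice of $z$ survives and the argmax is left intact.

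I expect the principal obstacle to be conceptual rather than computational. Separation constrains the \emph{single} function $b$ and says nothing on its own about the ordering of items inside $s$, whereas the set-triple construction requires \emph{two} nested separations. The key observation is that these requirements decouple cleanly: $b$ discharges the outer separation for free from the hypothesis, while the otherwise-unconstrained $b'$ discharges the inner one simply by being used to designate the winner $z$. Once this split is recognized, nothing beyond invoking Corollary~\ref{corr:nn_imp_tb_sets} and Lemma~\ref{lemma:tb_sets} remains to be done.
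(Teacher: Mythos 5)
Your proposal is correct and follows essentially the same route as the paper's own (one-line) proof: set $z = \argmax_{x \in s} b'(x)$ and instantiate the neural triple-basis construction on the set-triple $(z, s, \coap(s))$, with $b$ supplying the outer separation and $b'$ the inner one. Your expansion is in fact more careful than the paper's, since you explicitly cite the set-triple version (Corollary~\ref{corr:nn_imp_tb_sets}), read the prediction off Lemma~\ref{lemma:tb_sets}, and note that the $\epsilon$-approximate implementation preserves the strict inequality deciding the argmax.
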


\begin{proof}
	Let $z = \argmax_{x \in s} b'(x)$, then instantiate Lemma \ref{lemma:nn_imp_tb}
	for $b,b'$ on $T=(z,s,\coap(s))$.
\end{proof}

Note that $\btheta$ is determined by the aggregation mechanism $\mech$
(this is because $\u$ is determined by $\mech$).
Hence, once $\btheta$ is fixed, the triple-basis template is fixed,
and learning can focus on targeting choice sets
(by optimizing $b$) and predicting within those choice sets (by optimizing $b'$).
In the remainder of the we will use $\btheta$ to denote the template parameters
corresponding to $\u$ (note that Lemma \ref{lemma:nn_imp_tb} implies
that such $\btheta$ exists for any triple basis).
Because the approximation in Eq. \eqref{eq:linear_basis} can be made arbitrarily small
(i.e., by scaling the appropriate $\alpha$-s),
the approximate-indifference of the implemented triple-basis can also be arbitrarily small.
Thus, $\btheta$ can be made to give $\epsilon$-approximate indifference
for any necessary $\epsilon$.

We can now define the following class:
\[
\Fplussup{b} = \{ \f_\btheta(x;b,b') \,:\,  b' \in \Base \}
\]
which includes all score functions that implement set-triple bases
that effect choice sets separated by $b$
(while predictions can vary across functions according to $b'$).
\begin{corollary} \label{corr:sep_iso_all_F}
	If $b$ separates $s$, then $s$ is isolated by all $\f \in \Fplussup{b}$.
\end{corollary}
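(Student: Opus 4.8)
The plan is to read the corollary off directly from Lemma \ref{lemma:sep2iso}, since the statement is essentially a repackaging of that lemma's universal quantifier over $b'$ as a quantifier over the class $\Fplussup{b}$. First I would recall the definition of the class: every member of $\Fplussup{b}$ has the form $\f_\btheta(\cdot\,; b, b')$ for some $b' \in \Base$, with the separating function $b$ and the (mechanism-determined) template parameters $\btheta$ held fixed. So fixing an arbitrary $\f \in \Fplussup{b}$ amounts to fixing some $b' \in \Base$.

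Next I would invoke the hypothesis. Since $b$ separates $s$, Lemma \ref{lemma:sep2iso} applies verbatim to the pair $(b, b')$ and yields that $\f_\btheta(\cdot\,; b, b')$ isolates $s$; concretely, it approximately implements the set-triple $(z, s, \coap(s))$ with $\coap(s) = \allsets \setminus s$ and $z = \argmax_{x \in s} b'(x)$, which is exactly the condition in the definition of isolation (Def. \ref{def:isolate}). As $\f$ was arbitrary and ranging over $\Fplussup{b}$ is the same as ranging over all $b' \in \Base$, this shows that every $\f \in \Fplussup{b}$ isolates $s$, as claimed.

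I expect essentially no obstacle here beyond bookkeeping, as the mathematical content lives entirely in Lemma \ref{lemma:sep2iso}. The one point worth checking—and the only reason the corollary is not completely vacuous—is that the separation hypothesis constrains $b$ alone and places no requirement on $b'$: in the triple-basis template $b$ determines \emph{which} set is separated (and hence isolated) while $b'$ only selects the in-set prediction. Because the separation premise is therefore uniform over the choice of $b'$, the conclusion holds for \emph{all} of $\Fplussup{b}$ rather than for a restricted subfamily, which is precisely what the corollary asserts.
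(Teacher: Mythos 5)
Your proposal is correct and matches the paper's treatment exactly: the paper states Corollary \ref{corr:sep_iso_all_F} without a separate proof, precisely because it is the immediate consequence of Lemma \ref{lemma:sep2iso} (whose ``for any $b' \in \Base$'' quantifier translates, via the definition $\Fplussup{b} = \{ \f_\btheta(\cdot\,;b,b') : b' \in \Base \}$ with $\btheta$ fixed by the mechanism, into ``for all $\f \in \Fplussup{b}$''). Your observation that the separation hypothesis constrains only $b$ while $b'$ merely selects the in-set prediction is the same division of roles the paper uses, so there is nothing to add.
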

Isolation will be our main building block for showing error decomposition.
For $b \in \Base$, we denote:
\[
\iso_b = \{s \in \Viol \,:\, s \text{ is isolated by all } \f \in \Fplussup{b}\}
\]
Corollary \ref{corr:sep_iso_all_F} implies that $\sep_b \subseteq \iso_b$.

Note that although each $\f_\btheta(x;b,b')$ includes only two base score functions $b,b' \in \F$,
it is in fact composed of $k$ score functions $\f_1,\dots,\f_k \in \Fplus$,
each determined by one of $\theta_1, \dots \theta_k \in \btheta$,
or in other words, $\Fplussup{b} \subset (\Fplus)^k$.
We denote by $\Agg_{\Fplussup{b}}$ the class of $k$-dimensional aggregators over $\Fplussup{b}$.

The following lemma shows how isolation helps in decomposing
the error of $\Agg$.
By ``isolating'' a separable region of $\allsets$,
a budget of $k$ (out of $\ell$) score functions can be allocated to that region,
and aggregation will ensure that these score functions will only effect predictions
of choice sets within the region.
This is the main component in the decomposition bound of the error of $\Agg$.
\begin{lemma}[Decomposition by isolation]
For any $R \subseteq \allsets$ with $\iso_b \subseteq R$ and any $n \ge k$,
	it holds that:
	\begin{equation}
	\err{*}{R}(\Agg^{(n)}_\Fplus) \le
	p_{\iso_b} \err{*}{\iso_b}(\Base) +
	p_{R \setminus \iso_b} \err{*}{R \setminus \iso_b}(\Agg^{(n-k)}_\Fplus)
	\end{equation}
	\label{lemma:decomp_via_iso}
\end{lemma}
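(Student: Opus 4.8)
The plan is to prove the bound constructively: I exhibit a single aggregator $g \in \Agg^{(n)}_\Fplus$ whose conditional risk on $R$ is at most the right-hand side, and then invoke $\err{*}{R}(\Agg^{(n)}_\Fplus) \le \err{}{R}(g)$. The organizing idea is a \emph{budget split} of the $n$ constituent score functions: a block of $k$ of them is used to implement a triple basis that isolates the region $\iso_b$, while the remaining $n-k$ are left free to form an arbitrary aggregator over $\Fplus$ tailored to the complement $R\setminus\iso_b$. Since $\iso_b\subseteq R$, the pieces $\iso_b$ and $R\setminus\iso_b$ partition $R$, so conditioning on $s\in R$ and applying the law of total probability reduces the claim to bounding the conditional risk of $g$ on each piece separately and recombining against the respective probabilities.

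Concretely, I first fix (up to arbitrarily small slack $\epsilon$) a base predictor $\hat b \in \Base$ attaining $\err{*}{\iso_b}(\Base)$ and an $(n-k)$-dimensional aggregator $g' \in \Agg^{(n-k)}_\Fplus$ attaining $\err{*}{R\setminus\iso_b}(\Agg^{(n-k)}_\Fplus)$; both exist by definition of the minimal conditional risks. I then build $g$ by stacking the $k$-tuple $\f_\btheta(\cdot;b,\hat b)\in\Fplussup{b}$, where $\hat b$ plays the role of the predictor in Lemma \ref{lemma:sep2iso}, together with the $n-k$ score functions underlying $g'$. On $\iso_b$, every $s$ is by definition isolated by $\f_\btheta(\cdot;b,\hat b)$, and Lemma \ref{lemma:sep2iso} fixes the induced prediction to $\argmax_{x\in s}\hat b(x)$, so $g$ realizes the base predictor $\hat b$ and its conditional risk on $\iso_b$ equals $\err{*}{\iso_b}(\Base)$ up to $\epsilon$. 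On $R\setminus\iso_b$, the triple-basis block is \emph{indifferent}: by Lemma \ref{lemma:tb_sets} it can alter the argmax only on subsets of an isolated target set that contain the selected item, and by the inherited-isolation property any such violating subset already belongs to $\iso_b$; hence $g$ predicts exactly as $g'$ there, with conditional risk $\err{*}{R\setminus\iso_b}(\Agg^{(n-k)}_\Fplus)$ up to $\epsilon$. Summing the two pieces against their probabilities and letting $\epsilon\to 0$ yields the claimed inequality.

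The main obstacle is establishing \emph{non-interference} between the two blocks, and in particular the one direction not handled by indifference alone: on $\iso_b$ the triple-basis block must \emph{override} the residual aggregator $g'$, which need not be indifferent there. Here I would exploit the scaling freedom already used in the proof of Lemma \ref{lemma:nn_imp_tb}: rescaling the $\alpha$-parameters simultaneously amplifies the decisive margin of the implemented triple basis on its targets and drives its off-target indifference deviation below any threshold, while the scores produced by $g'$ stay bounded over the (finite) support; scale-invariance of the mechanism $\mech$ (Def.~\ref{def:scale_inv}) guarantees that these rescalings leave the template's induced predictions intact. Choosing the scale so that the template's margin on $\iso_b$ exceeds the total spread of $g'$'s scores --- and, symmetrically, so that its residual indifference slack stays below $g'$'s decisive margin on $R\setminus\iso_b$ --- makes overriding on $\iso_b$ and transparency on $R\setminus\iso_b$ hold at once. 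The remaining points, namely that the stacked $g$ indeed lies in $\Agg^{(n)}_\Fplus$ and that the edge case $n=k$ degenerates gracefully (an empty residual block), are routine.
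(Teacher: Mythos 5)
Your construction mirrors the paper's proof in all essential respects: the same $k/(n-k)$ budget split with the first block drawn from $\Fplussup{b}$, dominance on $\iso_b$ and arbitrarily small approximate indifference on $R \setminus \iso_b$ arranged via the scale-invariance of $\mech$, inheritance of isolation to rule out interference, and the final reduction to $\Base$ through Lemma \ref{lemma:sep2iso}. The only difference is packaging --- you exhibit a single near-optimal witness pair $(\hat b, g')$ and let the slack $\epsilon \to 0$, whereas the paper restricts to the product subclass $\ubar{\Agg} \times \bar{\Agg}$ with a magnitude threshold $c$ and then removes the constraint by scale-invariance --- and both versions share the same implicit boundedness/margin caveat, so this is the paper's argument, not a different route.
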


\begin{proof}
	Let $\Agg' = \Agg_{\Fplussup{b}} \times \Agg^{(n-k)}_\Fplus$, i.e.,
	aggregators whose first $k$ score functions give some $\f \in \Fplussup{b}$.
	Since $\Agg' \subseteq \Agg$, the minimal error cannot decrease: 
	\beq
	\err{*}{R}(\Agg^{(n)}_\Fplus) \le \err{*}{R}(\Agg') 
	\label{eq:decomp_gprime}
	\eeq
	Next, fix some constant $c$, and denote by $\bar{\Agg} \subset \Agg^{(n-k)}_\Fplus$
	the aggregators whose absolute values are at most $c$,
	and by $\ubar{\Agg} \subseteq \Agg_{\Fplussup{b}}$ the aggregators whose absolute values
	are at least $c$.\footnote{Here we make the technical assumption that $\agg(x,s) \neq 0$ for all $x,s$.}
	Denoting $\Agg'' = \ubar{\Agg} \times \bar{\Agg}$,
	because $\bar{\Agg} \subseteq \Agg^{(n-k)}_\Fplus$ and
		$\ubar{\Agg} \subseteq \Agg_{\Fplussup{b}}$, we have:
	\beq
	\err{*}{R}(\Agg') \le \err{*}{R}(\Agg'') 
	\label{eq:decomp_gprimeprime}
	\eeq

	For any $\agg \in \Agg''$, since $\agg$ is an aggregator, 
	we can write $\agg = \ubar{\agg} + \bar{\agg}$
	where $\ubar{\agg} \in \ubar{\Agg}$ and $\bar{\agg} \in \bar{\Agg}$.
	We now consider how different $s \in R$ are treated by $\ubar{\agg}, \bar{\agg}$, and $\agg$.
	If $s \in \iso_b$, then because $\ubar{\agg}(x,s) > \bar{\agg}(x',s)$ for all $x,x' \in s$,
	$\ubar{\agg}$ `dominates' $\bar{\agg}$ in the sense that only $\ubar{\agg}$ effects predictions,
	i.e., $h_\agg(s) = h_{\ubar{\agg}}$.
	On the other hand, if $s \in R \setminus \iso_b$,
	then because $\ubar{\agg} \subset \Agg_{\Fplussup{b}}$,
	it is approximately-indifferent on $s$
	(to see this, note that due to the inheritance of isolation,
	there is no $t \supset s$ with $t \in \iso_b$,
	and so $s$ falls in the second class of sets from Lemma \ref{lemma:tb_sets}).
	And since this approximate-indifference can be made arbitrarily small
	(by adjusting $\btheta$ as described above),
	$\ubar{\agg}$ does not effect predictions, i.e., $h_\agg(s) = h_{\bar{\agg}}$.
	Overall, for any $\agg \in \Agg''$, we have:
	\[
	h_\agg(s) = 
	h_{\ubar{\agg}}(s)\one{s \in \iso_b} + 
	h_{\bar{\agg}}(s)\one{s \in R \setminus \iso_b}
	\]
	This allows us to decompose the error:
	\begin{equation} \label{eq:decomp_primeprime}
	\err{*}{R}(\Agg'') \le
	p_{\iso_b} \err{*}{\iso_b}(\ubar{\Agg}) +
	p_{R \setminus \iso_b} \err{*}{R \setminus \iso_b}(\bar{\Agg})
	\end{equation}
	and because $\mech$ is scale-invariant (Definition \ref{def:scale_inv}),
	scaling does not change predictions
	(and thus does not change the expressivity of the classes),
	allowing us to remove the constraints on $c$:
		\begin{equation} \label{eq:decomp_noc}
	\err{*}{\iso_b}(\ubar{\Agg}) = \err{*}{\iso_b}(\Agg_{\Fplussup{b}}) , \qquad
	\err{*}{R \setminus \iso_b}(\bar{\Agg}) = \err{*}{R \setminus \iso_b}( \Agg^{(n-k)}_\Fplus)
	\end{equation}

	Finally, because the predictions of any $\f(x;b,b') \in \F_\btheta(b)$ are determined by $b'$
	(Lemma \ref{lemma:sep2iso}),
	we can replace $\Agg_{\F_\btheta(b)}$ in Eq. \eqref{eq:decomp_noc} with $\Base$,
	and combining Eqs. \eqref{eq:decomp_gprime}-\eqref{eq:decomp_noc}
	concludes the proof.
	
\end{proof}

We now tie all lemmas together to derive Eq. \eqref{eq:err_decomp_appendix}.
Recall that $\ell = k \ell' + 1$.
For any $B=\{b_1,\dots,b_{\ell'}\}$
with $\cap_{b \in B}\iso_b = \varnothing$,
we can iteratively applying Lemma \ref{lemma:decomp_via_iso}
to isolate separable regions until our budget is depleted:
\begin{align*}
\err{*}{D}(\Agg^{(\ell)}_\Fplus) & \le
p_{\iso_{b_1}} \err{*}{\iso_{b_1}}(\Base) +
p_{\allsets \setminus \iso_{b_1}} \err{*}{\allsets \setminus \iso_{b_1}}(\Agg^{(\ell-k)}_\Fplus) \\
& \le
p_{\iso_{b_1}} \err{*}{\iso_{b_1}}(\Base) +
p_{\iso_{b_2}} \err{*}{\iso_{b_2}}(\Base) +
p_{\allsets \setminus \iso_{b_1} \cup \iso_{b_2}} 
\err{*}{\allsets \setminus \iso_{b_1} \cup \iso_{b_2}}(\Agg^{(\ell-2k)}_\Fplus) \\
& \le \dots \\ &\le
\sum_{b \in B} p_{\iso_b} \err{*}{\iso_b}(\Base) +
p_{\allsets \setminus \iso_B} \err{*}{\allsets \setminus \iso_B}(\Agg^{(1)}_\Fplus) \\ 
&\le
\sum_{b \in B} p_{\iso_b} \err{*}{\iso_b}(\Base) +
p_{\allsets \setminus \iso_B} \err{*}{\allsets \setminus \iso_B}(\Fplus)
\end{align*}
where $\iso_B = \cup_{b \in B} \iso_b$,
and the last inequality is due to $\Fplus \subseteq \Agg^{(1)}_\Fplus$
(note that also $\F \subseteq \Fplus$, giving the version of the Theorem in the paper).

Because the above holds for any $B$, we can plug in the optimal $B$.
We now have:
\begin{align*}
\err{*}{D}(\Agg^{(\ell)}_\Fplus) & \le
\min_{B \in \Base^{\ell'}}
\sum_{b \in B} p_{\iso_b} \err{*}{\iso_b}(\Base) +
p_{\allsets \setminus \iso_B} \err{*}{\allsets \setminus \iso_B}(\Fplus) \\
& \le
\min_{B \in \Base^{\ell'}}
\sum_{b \in B} p_{\iso_b} \err{*}{\iso_b}(\Base) +
p_{\Viol\setminus \iso_B} + 
p_{\Nonviol} \err{*}{\Nonviol}(\Fplus)\\
& \le 
\min_{B \in \Base^{\ell'}}
\sum_{b \in B} p_{\sep_b} \err{*}{\sep_b}(\Base) +
p_{\Viol\setminus \sep_B} + 
p_{\Nonviol} \err{*}{\Nonviol}(\Fplus)
\end{align*}
The first inequality holds because isolation is an inherited property,
and hence the argmin $B^*$ necessarily has disjoint isolated regions $\{\iso_b\}_{b\in B^*}$.
The second inequality holds because $\iso_B$ only includes sets in $\Viol$,
and $\err{*}{\Viol \setminus \iso_B}(\cdot) \le 1$.
The third inequality holds since by Lemma \ref{lemma:sep2iso}, separation entails isolations,
and so $\sep_b \subseteq \iso_b$, meaning that sets $s \in \iso_b \setminus \sep_b$
are assumed to receive the worst-case error of one
(i.e., are ``moved'' from the error term $\err{*}{\iso_b}$
to the maximal-error term $p_{\Viol \setminus \sep_b}$),
concluding the proof.
\end{proof}


\section{Estimation Error} \label{app:est}


We begin with the case $g(x,s) = \inner{w(s),\f(x)}$ with:
\[
\f(x) = (f_1(x),\dots,f_\ell(x)), \qquad
w(s) = (w_1(s), \dots, w_\ell(s))
\]
where $w_i(s) = w(f_i(s))$,
$f_i(s) = (f_i(x_1),\dots,f_i(x_n))$,
and each $f_i(x) = \inner{\theta_i,x}$.

We can write $\f(x) = x^\top \Theta$
where $\Theta_{i \cdotp} = \theta_i$ are rows,
and denote columns by $\bar{\theta}_j = \Theta_{\cdotp j}$.
Note that:
\begin{align*}
g(x,s) = \sum_{i=1}^\ell w_i(s) \inner{\theta_i,x} 
= \sum_{k=1}^d \sum_{i=1}^\ell \Theta_{ik} w_i(s) x_k 
= \sum_{k=1}^d \underbrace{\inner{\bar{\theta}_k, 
		\overbrace{w(s) \cdotp x_k}^{\text{(I)}}}}_{\text{(II)}}
\end{align*}

We now bound the Rademacher complexity of each component.
\begin{align*}
R_f & \le X_\infty \sqrt{\frac{2 \log 2d}{m}}  &
\text{(\cite{shalev2014understanding}, Lemma 26.11)} \\
R_w & \le \lambda_w^{(\rho)} R_f &
\text{(\cite{shalev2014understanding}, Lemma 26.9)}\\
R_{\text{(I)}} & \le \max_x \|x\|_\infty R_w &
\text{(\cite{shalev2014understanding}, Lemma 26.6)}\\
R_{\text{(II)}} & \le 2 \max_{\bar{\theta}} \| \bar{\theta} \|_1 \cdotp R_{\text{(I)}} 
= 2 \|\Theta\|_1 R_{\text{(I)}} &
\text{(\cite{CS6783}, Sec. 4)}\\
\end{align*}
Assuming $\|\Theta\|_1 \le 1$, combining the above gives:
\[
R_g \le 2 X_\infty^2\lambda_w^\rho\sqrt{\frac{2 \log 2d}{m}}
\]
Using standard Rademacher-based generalization bounds (i.e., \cite{bartlett2002rademacher})
concludes the proof:
\[
\err{}{}(\G) \le \err{}{\smplset}(\G) + 2R_g + O\left( \sqrt{\frac{\log(1\delta)}{m}}\right)
\]
gives the desired result.

We now analyze the case $g(x,s) = \inner{v, \nonlin(\f(x)-r(s))}$, where
$v \in \R^\ell$, 
$r(s) = (r_1(s), \dots, r_\ell(s))$,
and $r_i(s) = r(f_i(s))$.
The Rademacher complexity of each component is:
\begin{align*}
R_\nonlin & \le \lambda_\nonlin (R_f + R_r)  &
\text{(\cite{shalev2014understanding}, Lemma 26.6)} \\
R_r & \le \lambda_r^{(\rho)} &
\text{(\cite{shalev2014understanding}, Lemma 26.6)} \\
R_g & \le 2 W_1 R_\mu &
\text{(\cite{shalev2014understanding}, Lemma 26.7)}
\end{align*}
and $R_f$ is as before. Together, this gives:
\[
R_g  \le 2 W_1 R_\mu \le 2 W_1 \lambda_\mu (1+\lambda_r^\rho) X_\infty \sqrt{\frac{2 \log 2d}{m}}
\]
which concludes our proof.


\section{Experiments} \label{app:experiments}

\subsection{Datasets} \label{app:datasets}

For Expedia and Outbrain, the preprocessed data used to train the models can be found here\footnote{https://drive.google.com/file/d/1G0Rmsa9M1D5NKbryY4mTmZCX9i6MuY2F/view}. For Amadeus, the data is not publicly available. Please contact the authors of \cite{mottini2017deep} for the data. Note that we did not conduct any feature engineering on Amadeus dataset. Here are the details of the features and the preprocessing steps of each dataset.

\begin{enumerate}
	\item Amadeus: Each item is a recommended flight itinerary, and user clicks one. Key features include flight origin/destination, price and number of transfers. User features are excluded from the original dataset due to privacy concerns. We did not coduct any feature engineering, and used the dataset from \cite{mottini2017deep}.
	\item Expedia: Each item is a recommended hotel, and user clicks one. Key features include hotel price, rating, length of stay, booking window, user's average past ratings and visitor's location. We applied the following standard preprocessing steps for different variable types:
	\begin{itemize}
		\item continuous: depending on the distribution, log or square root transform to make the distribution look Gaussian. 
		\item ordinal: one-hot encode.
		\item datetime: get week and month to capture seasonality of hotel pricing.
		\item categorical: one-hot encode. For those with too many categories, group unpopular ones as "others".
		\item new features: we created one new feature, popularity score, based on a popular blog post on this dataset\footnote{https://ajourneyintodatascience.quora.com/Learning-to-Rank-Personalize-Expedia-Hotel-Searches-ICDM-2013-Feature-Engineering}. 
	\end{itemize}
	\item Outbrain: Each item is a news article, and user clicks one. When users see an article, they also see these recommended articles at the bottom of the page. Key features include article category, advertiser ID, and geo-location of the views. For preprocessing steps, we followed one of the leading solutions in the Outbrain click preidiction Kaggle competition\footnote{Up to Step 5 of https://github.com/alexeygrigorev/outbrain-click-prediction-kaggle}.
\end{enumerate}

Table \ref{tbl:data} includes further details.
\begin{table}[h!]
	\centering
	\fontsize{8.5pt}{10pt}\selectfont
	\caption{Dataset description}
	\begin{tabular}{lccccc}
		Dataset & $m$ & $|\X|$ & $\max(n)$ & $\avg(n)$ & $d$ \\
		\midrule
		Amadeus & 34K &  1.0M & 50 & 32.1 & 17 \\
		Expedia & 199K & 129K & 38 & 25 & 8 \\
		Outbrain & 16.8M &  478K & 12 & 5.2 & 10 \\
	\end{tabular}%
	\label{tbl:data}%
\end{table}%

\subsection{Baselines} \label{app:baselines}

As shown in Claim \ref{lemma:multiself}, SDA generalizes many known variants of MNL. Our implementation of SDA is capable of running these specific instances of MNL variants. MNL as well as all the models in Figure \ref{fig:extra} (center) are implemented within our framework.

\begin{itemize}
	\item MNL: our implementation.
	\item SVMRank: used an open source code on GitHub\footnote{https://gist.github.com/coreylynch/4150976/1a2983d3a896f4caba33e1a406a5c48962e606c0}, with minor modifications.
	\item RankNet: used learning2rank library open sourced on GitHub\footnote{https://github.com/shiba24/learning2rank}.
	\item MixedMNL: our implementation.
	\item AdaRank: used an open source code on GitHub\footnote{https://github.com/rueycheng/AdaRank}.
	\item DeepSets: used source code provided by the authors\footnote{https://github.com/manzilzaheer/DeepSets}.
\end{itemize}

For neural network based models SDA, RankNet, DeepSets, the number of parameters are $816+784d$, $525312+1024d$, $196864+256d$, respectively, where $d$ is the number of features in a dataset. For a reasonable range of $d$, the number of SDA parameters is significantly lower than that of other models. This further illustrates how SDA reduces model complexity by incorporating inductive bias in clever ways. 

\subsection{Setup} \label{app:setup}
\textbf{Implementation}

All code was implemented in Python, using Tensorflow\footnote{\raggedright\url{https://www.tensorflow.org/}}. The source code can be found here\footnote{\raggedright\url{https://drive.google.com/file/d/1KZVbqfVR6QNIpv38y4e8ptzVdbi_GQH4/view}}.

\textbf{Evaluation Metrics Definition}

\begin{itemize}
	\item top-1 accuracy: the conventional accuracy. $1$ if the model choice prediction is the same as the chosen item, $0$ otherwise. 
	\item top-5 accuracy: $1$ if the model's 5 highest probability choice predictions include the chosen item, $0$ otherwise. 
	\item mean reciprocal rank (MRR): a rank-based measure commonly used in the information retrieval literature. Let $\text{rank}_i$ to indicate the rank of the chosen item in our prediction (by probability). Then, $\text{MRR} = \frac{1}{m} \sum_{i=1}^m \frac{1}{\text{rank}_i}$. Because we want the rank of the model prediction to be higher (and thus the reciprocal rank to be lower), the lower MRR the better.
\end{itemize}

\textbf{Hyperparameters}

For all methods, we tuned regularization, dropout, and learning rate (when applicable) using Bayesian optimization using an open source library Optuna\footnote{\url{https://optuna.org/}}. We tuned the hyperparameters on the validation set for 100 trials of Bayesian optimization. The range of hyper-parameters considered is as follows:

\begin{itemize}
	\item learning rate: log uniformly sample from 1E-05 \textasciitilde 1E-03
	\item weight decay: log uniformly sample from 1E-10 \textasciitilde 1E-03
	\item dropout keep probability: uniformly sample from 0.5 \textasciitilde 1.0
\end{itemize}

For exponential decay, we used decay rate of 0.95 with decay step of 10 for all models. For batch size, we used 128 for all models. Finally, we applied early stopping to all models based on the validation accuracy with an early stop window of 25 epochs.

\textbf{Computing Infrastructure}

We optimized the hyperparameters and trained our model on Slurm Worklord Manager\footnote{\url{https://slurm.schedmd.com/documentation.html}}. All the training was done on CPUs, and the CPU core type we used are AMD "Abu Dhabi" and Intel "Broadwell".


\subsection{Ablation Study} \label{app:ablation}
\textbf{Details on Model Specification} \label{app:spec}

Recall that SDA is of the form:
$$\agg(x,s;w,\feat) = \inner{w(s), \feat(x,s)}$$
We further introduced the general form of inductive bias:
$$w(s) = w(\f(s)), \qquad
\feat(x,s) = \nonlin \big(\f(x) - \r(\f(s)) \big)$$

In this section, we first elaborate on the different components of $\agg$ to supplement Sec. \ref{sec:model}, then justify our choices of $w$, $r$ and $\mu$ described in Sec. \ref{sec:experiments}. 

For a single dimension $i \in [\ell]$ in the embedded space,
items are first evaluated using some $f_i(x) \in \F$.
Then, a ``reference'' valuation is constructed
via a set function $r_i(s)$.
Next, $r_i(s)$ is subtracted from $f_i(x)$ to ``standardize'' the scores with respect to this reference. As noted in \ref{sec:experiments}, we have found it useful to generalize this by letting $f_i$ and $r_i$ be vector valued,
and taking their inner product instead.
The item-reference relation is then fed into an a-symmetric non-linearity $\nonlin(\cdotp)$. Finally, all $\ell$ valuations are aggregated using set-dependent weights $w(s)$.

Eq. \eqref{eq:agg_ib} injects inductive bias 
inspired by the following key principles in behavioral choice theory:
\begin{enumerate}[label=\textbf{P\arabic*}:,ref=P\arabic*]
	\setlength\itemsep{0.2em}
	
	\item \label{p:nonlin}
	\textbf{Asymmetry.} 
	Losses and gains are perceived in an a-symmetric, non-linear manner.
	This is the hallmark of Kahneman and Tversky's Prospect Theory
	\cite{kahneman1979prospect,tversky1992advances}.
	
	\item \label{p:r}
	\textbf{Relativity.}
	Valuations are relative, and are considered with
	respect to a mutual (and possibly hypothetical) referral item
	(e.g., \cite{tversky1991loss}).
	
	\item \label{p:w}
	\textbf{Integrability.}	
	Subjective value is multi-dimensional,
	and context determines how valuations are integrated into a single choice
	(e.g., \cite{tversky1969intransitivity, tversky1993context}).
\end{enumerate}

\begin{proposition}
	$g(x,s)$ in Eq. \eqref{eq:agg_ib} satisfies  \ref{p:nonlin},
	\ref{p:r}, and \ref{p:w}.
\end{proposition}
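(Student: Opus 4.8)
The plan is to prove this by a structural decomposition: each of the three behavioral principles is matched to a distinct component of Eq.~\eqref{eq:agg_ib}, and for each I would argue that the component exhibits exactly the defining feature the principle demands. Since the principles are behavioral rather than purely formal, the proof is not a calculation but a verification that $\agg(x,s;w,\feat) = \inner{w(s),\feat(x,s)}$ factors into three interpretable mechanisms---reference-based comparison, asymmetric evaluation, and context-dependent integration---with one mechanism answering to each of \ref{p:nonlin}, \ref{p:r}, and \ref{p:w}. The first step is therefore to isolate these three mechanisms inside the definition $\feat(x,s)=\nonlin(\f(x)-\r(\f(s)))$ and the outer weighting $w(s)=w(\f(s))$.

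For \ref{p:r} (Relativity) I would point to the reference point $\r(\f(s))$ appearing inside $\feat$. The key observation is that the utility $\f(x)$ of an item never enters in isolation but only through the difference $\f(x)-\r(\f(s))$; hence every item in $s$ is evaluated against a single, set-determined referral valuation $\r(\f(s))$, which is precisely the relativity requirement. Because $\r$ is a function of the whole profile $\f(s)=\{\f(x')\}_{x'\in s}$ rather than of any single item, this reference may be hypothetical, matching the parenthetical ``possibly hypothetical'' clause of the principle. Next, for \ref{p:nonlin} (Asymmetry) I would invoke the nonlinearity $\nonlin$ applied to this difference: it suffices to note that $\nonlin$ is instantiated as an a-symmetric s-shaped map (the `kinked tanh'), so that a gain $\delta>0$ and a loss $-\delta$ of equal magnitude are sent to outputs of unequal magnitude, $\nonlin(\delta)\neq -\nonlin(-\delta)$. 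This is exactly the diminishing-sensitivity and loss-aversion form posited by Prospect Theory, so the gain/loss asymmetry is inherited by $\feat$ coordinate-wise.

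Finally, for \ref{p:w} (Integrability) I would use the outer structure $\inner{w(s),\feat(x,s)}$: the vector $\feat(x,s)\in\R^\ell$ renders subjective value multi-dimensional, one coordinate per aggregated evaluation, while the weights $w(s)=w(\f(s))$ collapse these $\ell$ evaluations into a single scalar in a manner that depends on the set through $\f(s)$, so context governs the integration as required. Chaining the three observations then yields the claim. The hard part is not computational but definitional: one must first fix precise formal readings of the informal principles---for instance, stating \ref{p:nonlin} as the existence of some $\delta$ with $\nonlin(\delta)\neq-\nonlin(-\delta)$, and \ref{p:w} as genuine (non-constant) dependence of $w$ on $s$---that remain faithful to the behavioral literature yet are concrete enough to check, since without such formalizations the word ``satisfies'' carries no testable content. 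I expect this choice of formalization, rather than any subsequent verification, to be the main obstacle.
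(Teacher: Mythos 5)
Your proposal is correct and follows essentially the same route as the paper's own argument: the paper likewise verifies the proposition by matching each principle to a component of Eq.~\eqref{eq:agg_ib} --- Relativity to the comparison of item scores against the mutual set-dependent reference $\r(\f(s))$, Asymmetry to the pointwise a-symmetric nonlinearity $\nonlin$, and Integrability to the aggregation of the $\ell$ valuations via the set-dependent weights $w(s)$. Your additional remark about fixing precise formal readings of the behavioral principles goes beyond the paper, which treats the verification informally, but it does not change the substance of the argument.
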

To see this, note that for each dimension $i$,
item scores $f_i(x)$ are compared to a mutual set-dependent reference point $r_i(s)$
via $d$, satisfying \ref{p:r}.
Using a pointwise a-symmetric non-linearity for $\nonlin$ gives \ref{p:nonlin}.
Finally, in accordance with \ref{p:w}, valuations are aggregated
in $\agg$ via $\w(s)$. 

We now describe our specific choice of components in Sec. \ref{sec:experiments}.
We model $w(s)$ and $r(s)$ as set neural networks \cite{zaheer2017deep}.
These are general neural networks whose architecture guarantees permutation
invariance, meaning that permuting $s$ will not change the output.
For the scalar function $\nonlin$, we have found it useful to use a kinked tanh function:
\[
\nonlin(z) =  c \tanh(z) \cdotp \one{z<0} + \tanh(z) \cdotp \one{z\ge0}, \quad c>1
\]
which is inspired by the s-shaped utility functions used in prospect theory \cite{maggi2004characterization}.

Since all four elements of $\agg$ are differentiable,
$\agg$ can be optimized over any appropriate loss function (i.e., cross entropy)
using standard gradient methods.
The learned parameters include the weights in $w$ and in $r$,
and $c$ for $\nonlin$.
We have noted in Sec. \ref{sec:model} that Eq. \eqref{eq:agg_ib} encompasses several models from the multi-self literature.
Since these are usually designed for mathematical tractability,
they do not always include explicit functional forms,
and use simple non-parametric set operations for $w$ and $r$.
The predictive advantage of our parametrized model is demonstrated empirically
in Sec. \ref{sec:experiments}. 

\textbf{Ablation Study}

We now investigate the contribution of each component of SDA in an ablation study. 

In \ref{sec:model}, we motivated our model choice from the behavioral decision theory perspective. To motivate our design decisions also from a machine learning point of view, we conducted an ablation study. In particular, we decomposed SDA into $F$, $\ell$, $w$, $\varphi$ (which can consist of $r$, $\mu$), removed each component, and analyzed the performance. The full set of ablation models is detailed in Table \ref{tbl:ablation_details} and the experimental results are presented in Table 5. The experiment setup is exactly the same as Sec. \ref{sec:experiments}. 

\begin{table}[h]
	\centering
	\caption{Specification of all ablated models}
	\scalebox{0.9}{
	\begin{tabular}{|l|lrllll|}
		\cline{1-7}      & F & \multicolumn{1}{l}{$\ell$} & $w$ & $\varphi$ & r & $\mu$ \bigstrut\\
		\hline
		SDA & linear & 24 & Set NN & $\langle F(x), r(s) \rangle$ & Set NN & kinked tanh \bigstrut[t]\\
		SDA with $g=tanh$ & linear & 24 & Set NN & $\langle F(x), r(s) \rangle$ & Set NN & tanh \\
		SDA with no $g$ & linear & 24 & Set NN & $\langle F(x), r(s) \rangle$ & Set NN & - \\
		SDA with $w$ vector variable & linear & 24 & vector variable & $\langle F(x), r(s) \rangle$ & Set NN & kinked tanh \\
		SDE & linear & 24 & vector variable & Set NN & - & kinked tanh \\
		SDW & linear & 24 & Set NN & F(x) & - & - \\
		MNL with $w$ setnn & linear & 1 & Set NN & F(x) & - & - \\
		MNL \cite{mcfadden1973conditional} & linear & 1 & Scalar vector & F(x) & - & - \bigstrut[b]\\
		\hline
	\end{tabular}%
	}
	\label{tbl:ablation_details}%
	
\end{table}%

\begin{sidewaystable}
	\scalebox{0.8}{
		\begin{tabular}{|l|cccc|cccc|cccc|ccc|}
			\cline{2-16}\multicolumn{1}{r|}{} & \multicolumn{15}{c|}{Amadeus} \bigstrut\\
			\cline{2-16}\multicolumn{1}{r|}{} & \multicolumn{5}{c|}{Top-1} & \multicolumn{5}{c|}{Top-5} & \multicolumn{5}{c|}{MRR} \bigstrut\\
			\cline{2-16}\multicolumn{1}{r|}{} & \multicolumn{1}{c}{10} & \multicolumn{1}{c}{20} & \multicolumn{1}{c}{30} & \multicolumn{1}{c}{40} & \multicolumn{1}{c|}{50} & \multicolumn{1}{c}{10} & \multicolumn{1}{c}{20} & \multicolumn{1}{c}{30} & \multicolumn{1}{c}{40} & \multicolumn{1}{c|}{50} & \multicolumn{1}{c}{10} & \multicolumn{1}{c}{20} & \multicolumn{1}{c}{30} & \multicolumn{1}{c}{40} & \multicolumn{1}{c|}{50} \bigstrut\\
			\hline
			SDA & \multicolumn{1}{c}{\textbf{45.42}\tbltiny{ $\pm 0.5$}} & \multicolumn{1}{c}{\textbf{33.48}\tbltiny{ $\pm 0.3$}} & \multicolumn{1}{c}{\textbf{29.26}\tbltiny{ $\pm 0.0$}} & \multicolumn{1}{c}{26.57\tbltiny{ $\pm 0.3$}} & \multicolumn{1}{c|}{\textbf{23.23}\tbltiny{ $\pm 0.2$}} & \multicolumn{1}{c}{\textbf{93.37}\tbltiny{ $\pm 0.0$}} & \multicolumn{1}{c}{\textbf{80.40}\tbltiny{ $\pm 0.3$}} & \multicolumn{1}{c}{\textbf{73.77}\tbltiny{ $\pm 0.4$}} & \multicolumn{1}{c}{69.64\tbltiny{ $\pm 0.0$}} & \multicolumn{1}{c|}{\textbf{62.35}\tbltiny{ $\pm 0.1$}} & \multicolumn{1}{c}{\textbf{2.31}\tbltiny{ $\pm 0.3$}} & \multicolumn{1}{c}{\textbf{3.50}\tbltiny{ $\pm 0.0$}} & \multicolumn{1}{c}{\textbf{4.33}\tbltiny{ $\pm 0.2$}} & \multicolumn{1}{c}{4.93\tbltiny{ $\pm 0.4$}} & \multicolumn{1}{c|}{\textbf{6.37}\tbltiny{ $\pm 0.0$}} \bigstrut[t]\\
			SDA with $g=tanh$ & \multicolumn{1}{c}{39.10\tbltiny{ $\pm 0.4$}} & \multicolumn{1}{c}{31.54\tbltiny{ $\pm 0.3$}} & \multicolumn{1}{c}{27.02\tbltiny{ $\pm 0.4$}} & \multicolumn{1}{c}{25.98\tbltiny{ $\pm 0.2$}} & \multicolumn{1}{c|}{20.67\tbltiny{ $\pm 0.3$}} & \multicolumn{1}{c}{91.40\tbltiny{ $\pm 0.3$}} & \multicolumn{1}{c}{79.12\tbltiny{ $\pm 0.2$}} & \multicolumn{1}{c}{71.05\tbltiny{ $\pm 0.3$}} & \multicolumn{1}{c}{68.85\tbltiny{ $\pm 0.2$}} & \multicolumn{1}{c|}{59.50\tbltiny{ $\pm 0.2$}} & \multicolumn{1}{c}{2.55\tbltiny{ $\pm 0.0$}} & \multicolumn{1}{c}{3.64\tbltiny{ $\pm 0.0$}} & \multicolumn{1}{c}{4.60\tbltiny{ $\pm 0.0$}} & \multicolumn{1}{c}{5.04\tbltiny{ $\pm 0.0$}} & \multicolumn{1}{c|}{6.85\tbltiny{ $\pm 0.0$}} \\
			SDA with no $g$ & \multicolumn{1}{c}{41.38\tbltiny{ $\pm 0.5$}} & \multicolumn{1}{c}{33.26\tbltiny{ $\pm 0.3$}} & \multicolumn{1}{c}{28.89\tbltiny{ $\pm 0.3$}} & \multicolumn{1}{c}{27.28\tbltiny{ $\pm 0.2$}} & \multicolumn{1}{c|}{22.16\tbltiny{ $\pm 0.2$}} & \multicolumn{1}{c}{92.19\tbltiny{ $\pm 0.3$}} & \multicolumn{1}{c}{80.22\tbltiny{ $\pm 0.3$}} & \multicolumn{1}{c}{73.08\tbltiny{ $\pm 0.4$}} & \multicolumn{1}{c}{\textbf{69.92}\tbltiny{ $\pm 0.3$}} & \multicolumn{1}{c|}{61.19\tbltiny{ $\pm 0.4$}} & \multicolumn{1}{c}{2.45\tbltiny{ $\pm 0.0$}} & \multicolumn{1}{c}{3.53\tbltiny{ $\pm 0.0$}} & \multicolumn{1}{c}{4.39\tbltiny{ $\pm 0.0$}} & \multicolumn{1}{c}{\textbf{4.87}\tbltiny{ $\pm 0.0$}} & \multicolumn{1}{c|}{6.56\tbltiny{ $\pm 0.0$}} \\
			SDA with $w$ vector variable & \multicolumn{1}{c}{37.50\tbltiny{ $\pm 0.3$}} & \multicolumn{1}{c}{30.64\tbltiny{ $\pm 0.3$}} & \multicolumn{1}{c}{26.16\tbltiny{ $\pm 0.3$}} & \multicolumn{1}{c}{24.76\tbltiny{ $\pm 0.2$}} & \multicolumn{1}{c|}{15.81\tbltiny{ $\pm 0.2$}} & \multicolumn{1}{c}{87.31\tbltiny{ $\pm 0.3$}} & \multicolumn{1}{c}{76.53\tbltiny{ $\pm 0.2$}} & \multicolumn{1}{c}{69.76\tbltiny{ $\pm 0.4$}} & \multicolumn{1}{c}{66.67\tbltiny{ $\pm 0.2$}} & \multicolumn{1}{c|}{46.81\tbltiny{ $\pm 0.3$}} & \multicolumn{1}{c}{2.82\tbltiny{ $\pm 0.0$}} & \multicolumn{1}{c}{3.99\tbltiny{ $\pm 0.0$}} & \multicolumn{1}{c}{4.92\tbltiny{ $\pm 0.0$}} & \multicolumn{1}{c}{5.49\tbltiny{ $\pm 0.0$}} & \multicolumn{1}{c|}{12.23\tbltiny{ $\pm 0.0$}} \\
			SDE & \multicolumn{1}{c}{39.62\tbltiny{ $\pm 0.4$}} & \multicolumn{1}{c}{32.26\tbltiny{ $\pm 0.4$}} & \multicolumn{1}{c}{27.62\tbltiny{ $\pm 0.3$}} & \multicolumn{1}{c}{\textbf{26.73}\tbltiny{ $\pm 0.2$}} & \multicolumn{1}{c|}{20.62\tbltiny{ $\pm 0.2$}} & \multicolumn{1}{c}{91.70\tbltiny{ $\pm 0.3$}} & \multicolumn{1}{c}{79.75\tbltiny{ $\pm 0.3$}} & \multicolumn{1}{c}{72.18\tbltiny{ $\pm 0.5$}} & \multicolumn{1}{c}{69.82\tbltiny{ $\pm 0.2$}} & \multicolumn{1}{c|}{58.89\tbltiny{ $\pm 0.3$}} & \multicolumn{1}{c}{2.52\tbltiny{ $\pm 0.0$}} & \multicolumn{1}{c}{3.58\tbltiny{ $\pm 0.0$}} & \multicolumn{1}{c}{4.48\tbltiny{ $\pm 0.1$}} & \multicolumn{1}{c}{4.91\tbltiny{ $\pm 0.0$}} & \multicolumn{1}{c|}{6.90\tbltiny{ $\pm 0.1$}} \\
			SDW & \multicolumn{1}{c}{39.98\tbltiny{ $\pm 0.4$}} & \multicolumn{1}{c}{32.03\tbltiny{ $\pm 0.3$}} & \multicolumn{1}{c}{27.85\tbltiny{ $\pm 0.3$}} & \multicolumn{1}{c}{26.29\tbltiny{ $\pm 0.3$}} & \multicolumn{1}{c|}{20.15\tbltiny{ $\pm 0.2$}} & \multicolumn{1}{c}{91.89\tbltiny{ $\pm 0.3$}} & \multicolumn{1}{c}{79.57\tbltiny{ $\pm 0.2$}} & \multicolumn{1}{c}{71.82\tbltiny{ $\pm 0.3$}} & \multicolumn{1}{c}{68.62\tbltiny{ $\pm 0.3$}} & \multicolumn{1}{c|}{58.55\tbltiny{ $\pm 0.2$}} & \multicolumn{1}{c}{2.50\tbltiny{ $\pm 0.0$}} & \multicolumn{1}{c}{3.59\tbltiny{ $\pm 0.0$}} & \multicolumn{1}{c}{4.51\tbltiny{ $\pm 0.0$}} & \multicolumn{1}{c}{5.04\tbltiny{ $\pm 0.0$}} & \multicolumn{1}{c|}{6.97\tbltiny{ $\pm 0.0$}} \\
			MNL with $w$ setnn & \multicolumn{1}{c}{38.21\tbltiny{ $\pm 0.4$}} & \multicolumn{1}{c}{27.60\tbltiny{ $\pm 0.3$}} & \multicolumn{1}{c}{23.66\tbltiny{ $\pm 0.4$}} & \multicolumn{1}{c}{22.20\tbltiny{ $\pm 0.1$}} & \multicolumn{1}{c|}{18.41\tbltiny{ $\pm 0.3$}} & \multicolumn{1}{c}{90.90\tbltiny{ $\pm 0.4$}} & \multicolumn{1}{c}{75.49\tbltiny{ $\pm 0.3$}} & \multicolumn{1}{c}{66.46\tbltiny{ $\pm 0.4$}} & \multicolumn{1}{c}{63.28\tbltiny{ $\pm 0.2$}} & \multicolumn{1}{c|}{54.11\tbltiny{ $\pm 0.4$}} & \multicolumn{1}{c}{2.59\tbltiny{ $\pm 0.0$}} & \multicolumn{1}{c}{4.00\tbltiny{ $\pm 0.0$}} & \multicolumn{1}{c}{5.20\tbltiny{ $\pm 0.1$}} & \multicolumn{1}{c}{5.87\tbltiny{ $\pm 0.0$}} & \multicolumn{1}{c|}{8.26\tbltiny{ $\pm 0.1$}} \\
			MNL \cite{mcfadden1973conditional} & \multicolumn{1}{c}{38.42\tbltiny{ $\pm 0.5$}} & \multicolumn{1}{c}{27.93\tbltiny{ $\pm 0.4$}} & \multicolumn{1}{c}{23.54\tbltiny{ $\pm 0.3$}} & \multicolumn{1}{c}{22.31\tbltiny{ $\pm 0.1$}} & \multicolumn{1}{c|}{18.39\tbltiny{ $\pm 0.2$}} & \multicolumn{1}{c}{91.02\tbltiny{ $\pm 0.3$}} & \multicolumn{1}{c}{76.51\tbltiny{ $\pm 0.3$}} & \multicolumn{1}{c}{68.36\tbltiny{ $\pm 0.4$}} & \multicolumn{1}{c}{65.10\tbltiny{ $\pm 0.4$}} & \multicolumn{1}{c|}{56.20\tbltiny{ $\pm 0.3$}} & \multicolumn{1}{c}{2.57\tbltiny{ $\pm 0.0$}} & \multicolumn{1}{c}{3.92\tbltiny{ $\pm 0.0$}} & \multicolumn{1}{c}{4.94\tbltiny{ $\pm 0.0$}} & \multicolumn{1}{c}{5.60\tbltiny{ $\pm 0.1$}} & \multicolumn{1}{c|}{7.55\tbltiny{ $\pm 0.1$}} \bigstrut[b]\\
			\hline
			\multicolumn{1}{r}{} &   &   &   & \multicolumn{1}{r}{} &   &   &   & \multicolumn{1}{r}{} &   &   &   & \multicolumn{1}{r}{} &   &   & \multicolumn{1}{r}{} \bigstrut\\
			\cline{2-16}\multicolumn{1}{r|}{} & \multicolumn{12}{c|}{Expedia}                 & \multicolumn{3}{c|}{Outbrain} \bigstrut\\
			\cline{2-16}\multicolumn{1}{r|}{} & \multicolumn{4}{c|}{Top-1} & \multicolumn{4}{c|}{Top-5} & \multicolumn{4}{c|}{MRR} & \multicolumn{1}{c}{Top-1} & \multicolumn{1}{c}{Top-5} & \multicolumn{1}{c|}{MRR} \bigstrut\\
			\cline{2-16}\multicolumn{1}{r|}{} & \multicolumn{1}{c}{10} & \multicolumn{1}{c}{20} & \multicolumn{1}{c}{30} & \multicolumn{1}{c|}{40} & \multicolumn{1}{c}{10} & \multicolumn{1}{c}{20} & \multicolumn{1}{c}{30} & \multicolumn{1}{c|}{40} & \multicolumn{1}{c}{10} & \multicolumn{1}{c}{20} & \multicolumn{1}{c}{30} & \multicolumn{1}{c|}{40} & \multicolumn{1}{c}{12} & \multicolumn{1}{c}{12} & \multicolumn{1}{c|}{12} \bigstrut\\
			\hline
			SDA & \textbf{31.49}\tbltiny{ $\pm 0.2$} & 26.81\tbltiny{ $\pm 0.1$} & \textbf{21.96}\tbltiny{ $\pm 0.0$} & \textbf{18.36}\tbltiny{ $\pm 0.2$} & \textbf{86.91}\tbltiny{ $\pm 0.2$} & 73.06\tbltiny{ $\pm 0.0$} & \textbf{61.68}\tbltiny{ $\pm 0.2$} & \textbf{53.56}\tbltiny{ $\pm 0.4$} & \textbf{2.99}\tbltiny{ $\pm 0.0$} & 4.18\tbltiny{ $\pm 0.2$} & \textbf{5.99}\tbltiny{ $\pm 0.2$} & \textbf{7.65}\tbltiny{ $\pm 0.0$} & 38.04\tbltiny{ $\pm 0.3$} & 94.54\tbltiny{ $\pm 0.1$} & \textbf{2.42}\tbltiny{ $\pm 0.0$} \bigstrut[t]\\
			SDA with $g=tanh$ & 31.19\tbltiny{ $\pm 0.1$} & 26.26\tbltiny{ $\pm 0.2$} & 21.81\tbltiny{ $\pm 0.2$} & 18.10\tbltiny{ $\pm 0.2$} & 86.18\tbltiny{ $\pm 0.2$} & 72.61\tbltiny{ $\pm 0.3$} & 60.90\tbltiny{ $\pm 0.3$} & 52.91\tbltiny{ $\pm 0.2$} & 3.02\tbltiny{ $\pm 0.0$} & 4.24\tbltiny{ $\pm 0.0$} & 6.08\tbltiny{ $\pm 0.0$} & 7.81\tbltiny{ $\pm 0.0$} & \textbf{38.05}\tbltiny{ $\pm 0.3$} & \textbf{94.56}\tbltiny{ $\pm 0.2$} & 2.42\tbltiny{ $\pm 0.0$} \\
			SDA with no $g$ & 30.90\tbltiny{ $\pm 0.2$} & 26.52\tbltiny{ $\pm 0.2$} & 21.88\tbltiny{ $\pm 0.2$} & 18.20\tbltiny{ $\pm 0.2$} & 86.31\tbltiny{ $\pm 0.2$} & 72.80\tbltiny{ $\pm 0.2$} & 60.79\tbltiny{ $\pm 0.4$} & 52.96\tbltiny{ $\pm 0.2$} & 3.02\tbltiny{ $\pm 0.0$} & 4.21\tbltiny{ $\pm 0.0$} & 6.08\tbltiny{ $\pm 0.0$} & 7.73\tbltiny{ $\pm 0.0$} & 37.79\tbltiny{ $\pm 0.3$} & 94.48\tbltiny{ $\pm 0.1$} & 2.43\tbltiny{ $\pm 0.0$} \\
			SDA with $w$ vector variable & 25.05\tbltiny{ $\pm 0.2$} & 22.43\tbltiny{ $\pm 0.2$} & 18.24\tbltiny{ $\pm 0.2$} & 17.63\tbltiny{ $\pm 0.1$} & 80.67\tbltiny{ $\pm 0.2$} & 68.61\tbltiny{ $\pm 0.2$} & 55.40\tbltiny{ $\pm 0.3$} & 51.19\tbltiny{ $\pm 0.3$} & 3.46\tbltiny{ $\pm 0.0$} & 4.75\tbltiny{ $\pm 0.0$} & 7.04\tbltiny{ $\pm 0.0$} & 8.11\tbltiny{ $\pm 0.0$} & 37.98\tbltiny{ $\pm 0.3$} & 94.54\tbltiny{ $\pm 0.2$} & 2.42\tbltiny{ $\pm 0.0$} \\
			SDE & 31.47\tbltiny{ $\pm 0.2$} & \textbf{26.86}\tbltiny{ $\pm 0.3$} & 21.85\tbltiny{ $\pm 0.2$} & 18.14\tbltiny{ $\pm 0.2$} & 86.52\tbltiny{ $\pm 0.2$} & 73.04\tbltiny{ $\pm 0.4$} & 61.06\tbltiny{ $\pm 0.3$} & 52.75\tbltiny{ $\pm 0.3$} & 3.00\tbltiny{ $\pm 0.0$} & 4.20\tbltiny{ $\pm 0.0$} & 6.07\tbltiny{ $\pm 0.0$} & 7.80\tbltiny{ $\pm 0.0$} & 37.59\tbltiny{ $\pm 0.3$} & 94.26\tbltiny{ $\pm 0.1$} & 2.44\tbltiny{ $\pm 0.0$} \\
			SDW & 31.27\tbltiny{ $\pm 0.2$} & 26.80\tbltiny{ $\pm 0.2$} & 21.88\tbltiny{ $\pm 0.2$} & 18.22\tbltiny{ $\pm 0.2$} & 86.67\tbltiny{ $\pm 0.2$} & \textbf{73.08}\tbltiny{ $\pm 0.2$} & 61.07\tbltiny{ $\pm 0.3$} & 53.41\tbltiny{ $\pm 0.2$} & 2.99\tbltiny{ $\pm 0.0$} & \textbf{4.17}\tbltiny{ $\pm 0.0$} & 6.05\tbltiny{ $\pm 0.0$} & 7.73\tbltiny{ $\pm 0.0$} & 37.86\tbltiny{ $\pm 0.3$} & 94.48\tbltiny{ $\pm 0.2$} & 2.43\tbltiny{ $\pm 0.0$} \\
			MNL with $w$ setnn & 30.04\tbltiny{ $\pm 0.2$} & 25.28\tbltiny{ $\pm 0.3$} & 20.82\tbltiny{ $\pm 0.3$} & 16.64\tbltiny{ $\pm 0.4$} & 85.41\tbltiny{ $\pm 0.1$} & 71.88\tbltiny{ $\pm 0.3$} & 58.98\tbltiny{ $\pm 0.5$} & 49.54\tbltiny{ $\pm 0.5$} & 3.07\tbltiny{ $\pm 0.0$} & 4.34\tbltiny{ $\pm 0.0$} & 6.32\tbltiny{ $\pm 0.1$} & 8.48\tbltiny{ $\pm 0.1$} & 37.80\tbltiny{ $\pm 0.3$} & 94.52\tbltiny{ $\pm 0.1$} & 2.43\tbltiny{ $\pm 0.0$} \\
			MNL \cite{mcfadden1973conditional} & 30.06\tbltiny{ $\pm 0.2$} & 25.29\tbltiny{ $\pm 0.3$} & 20.61\tbltiny{ $\pm 0.4$} & 16.65\tbltiny{ $\pm 0.4$} & 86.34\tbltiny{ $\pm 0.1$} & 72.96\tbltiny{ $\pm 0.3$} & 60.94\tbltiny{ $\pm 0.5$} & 53.26\tbltiny{ $\pm 0.5$} & 3.07\tbltiny{ $\pm 0.0$} & 4.33\tbltiny{ $\pm 0.0$} & 6.35\tbltiny{ $\pm 0.1$} & 8.48\tbltiny{ $\pm 0.1$} & 37.74\tbltiny{ $\pm 0.3$} & 94.52\tbltiny{ $\pm 0.2$} & 2.43\tbltiny{ $\pm 0.0$} \bigstrut[b]\\
			\hline
		\end{tabular}%
	}
	\label{tbl:abl}
	\caption{Ablation Experiment Result.}
\end{sidewaystable}

\subsection{Full Result} \label{app:fulltbl}

We ran our experiments for the following number of maximum items:
\begin{enumerate}
	\item Amadeus: 10, 20, 30, 40, 50
	\item Expedia: 10, 20, 30, 40
	\item Outbrain: 12
\end{enumerate}

The result is shown in Table 6.

\begin{sidewaystable}
	\scalebox{0.8}{
		\begin{tabular}{|l|cccc|cccc|cccc|ccc|}
			\cline{2-16}\multicolumn{1}{c|}{} & \multicolumn{5}{c|}{Top-1} & \multicolumn{5}{c|}{Top-5} & \multicolumn{5}{c|}{MRR} \bigstrut\\
			\cline{2-16}\multicolumn{1}{c|}{} & 10 & 20 & 30 & \multicolumn{1}{c}{40} & \multicolumn{1}{c|}{50} & 10 & 20 & \multicolumn{1}{c}{30} & 40 & \multicolumn{1}{c|}{50} & 10 & \multicolumn{1}{c}{20} & 30 & 40 & 50 \bigstrut\\
			\hline
			SDA & \textbf{45.42}\tbltiny{ $\pm 0.5$} & \textbf{33.48}\tbltiny{ $\pm 0.3$} & \textbf{29.26}\tbltiny{ $\pm 0.0$} & \multicolumn{1}{c}{26.57\tbltiny{ $\pm 0.3$}} & \multicolumn{1}{c|}{\textbf{23.23}\tbltiny{ $\pm 0.2$}} & \textbf{93.37}\tbltiny{ $\pm 0.0$} & \textbf{80.40}\tbltiny{ $\pm 0.3$} & \multicolumn{1}{c}{\textbf{73.77}\tbltiny{ $\pm 0.4$}} & 69.64\tbltiny{ $\pm 0.0$} & \multicolumn{1}{c|}{\textbf{62.35}\tbltiny{ $\pm 0.1$}} & \textbf{2.31}\tbltiny{ $\pm 0.3$} & \multicolumn{1}{c}{\textbf{3.50}\tbltiny{ $\pm 0.0$}} & \textbf{4.33}\tbltiny{ $\pm 0.2$} & 4.93\tbltiny{ $\pm 0.4$} & \textbf{6.37}\tbltiny{ $\pm 0.0$} \bigstrut[t]\\
			SDE & 39.62\tbltiny{ $\pm 0.4$} & 32.26\tbltiny{ $\pm 0.4$} & 27.62\tbltiny{ $\pm 0.3$} & \multicolumn{1}{c}{\textbf{26.73}\tbltiny{ $\pm 0.2$}} & \multicolumn{1}{c|}{20.62\tbltiny{ $\pm 0.2$}} & 91.70\tbltiny{ $\pm 0.3$} & 79.75\tbltiny{ $\pm 0.3$} & \multicolumn{1}{c}{72.18\tbltiny{ $\pm 0.5$}} & \textbf{69.82}\tbltiny{ $\pm 0.2$} & \multicolumn{1}{c|}{58.89\tbltiny{ $\pm 0.3$}} & 2.52\tbltiny{ $\pm 0.0$} & \multicolumn{1}{c}{3.58\tbltiny{ $\pm 0.0$}} & 4.48\tbltiny{ $\pm 0.1$} & \textbf{4.91}\tbltiny{ $\pm 0.0$} & 6.90\tbltiny{ $\pm 0.1$} \\
			SDW & 39.98\tbltiny{ $\pm 0.4$} & 32.03\tbltiny{ $\pm 0.3$} & 27.85\tbltiny{ $\pm 0.3$} & \multicolumn{1}{c}{26.29\tbltiny{ $\pm 0.3$}} & \multicolumn{1}{c|}{20.15\tbltiny{ $\pm 0.2$}} & 91.89\tbltiny{ $\pm 0.3$} & 79.57\tbltiny{ $\pm 0.2$} & \multicolumn{1}{c}{71.82\tbltiny{ $\pm 0.3$}} & 68.62\tbltiny{ $\pm 0.3$} & \multicolumn{1}{c|}{58.55\tbltiny{ $\pm 0.2$}} & 2.50\tbltiny{ $\pm 0.0$} & \multicolumn{1}{c}{3.59\tbltiny{ $\pm 0.0$}} & 4.51\tbltiny{ $\pm 0.0$} & 5.04\tbltiny{ $\pm 0.0$} & 6.97\tbltiny{ $\pm 0.0$} \bigstrut[b]\\
			\hline
			MNL \cite{mcfadden1973conditional} & 38.42\tbltiny{ $\pm 0.5$} & 27.93\tbltiny{ $\pm 0.4$} & 23.54\tbltiny{ $\pm 0.3$} & \multicolumn{1}{c}{22.31\tbltiny{ $\pm 0.1$}} & \multicolumn{1}{c|}{18.39\tbltiny{ $\pm 0.2$}} & 91.02\tbltiny{ $\pm 0.3$} & 76.51\tbltiny{ $\pm 0.3$} & \multicolumn{1}{c}{68.36\tbltiny{ $\pm 0.4$}} & 65.10\tbltiny{ $\pm 0.4$} & \multicolumn{1}{c|}{56.20\tbltiny{ $\pm 0.3$}} & 2.57\tbltiny{ $\pm 0.0$} & \multicolumn{1}{c}{3.92\tbltiny{ $\pm 0.0$}} & 4.94\tbltiny{ $\pm 0.0$} & 5.60\tbltiny{ $\pm 0.1$} & 7.55\tbltiny{ $\pm 0.1$} \bigstrut[t]\\
			SVMRank \cite{joachims2006training} & 40.27\tbltiny{ $\pm 0.4$} & 28.17\tbltiny{ $\pm 0.3$} & 23.99\tbltiny{ $\pm 0.3$} & \multicolumn{1}{c}{23.02\tbltiny{ $\pm 0.2$}} & \multicolumn{1}{c|}{18.64\tbltiny{ $\pm 0.2$}} & 91.94\tbltiny{ $\pm 0.3$} & 76.82\tbltiny{ $\pm 0.2$} & \multicolumn{1}{c}{68.56\tbltiny{ $\pm 0.4$}} & 66.52\tbltiny{ $\pm 0.2$} & \multicolumn{1}{c|}{57.50\tbltiny{ $\pm 0.2$}} & 2.49\tbltiny{ $\pm 0.0$} & \multicolumn{1}{c}{3.87\tbltiny{ $\pm 0.0$}} & 4.85\tbltiny{ $\pm 0.0$} & 5.35\tbltiny{ $\pm 0.0$} & 7.16\tbltiny{ $\pm 0.0$} \\
			RankNet \cite{burges2005learning} & 37.44\tbltiny{ $\pm 0.7$} & 26.77\tbltiny{ $\pm 0.6$} & 23.81\tbltiny{ $\pm 0.3$} & \multicolumn{1}{c}{20.29\tbltiny{ $\pm 0.7$}} & \multicolumn{1}{c|}{16.99\tbltiny{ $\pm 0.5$}} & 84.67\tbltiny{ $\pm 1.7$} & 66.06\tbltiny{ $\pm 1.9$} & \multicolumn{1}{c}{61.59\tbltiny{ $\pm 0.9$}} & 49.45\tbltiny{ $\pm 1.8$} & \multicolumn{1}{c|}{44.98\tbltiny{ $\pm 2.6$}} & 3.02\tbltiny{ $\pm 0.1$} & \multicolumn{1}{c}{4.98\tbltiny{ $\pm 0.2$}} & 5.96\tbltiny{ $\pm 0.1$} & 8.35\tbltiny{ $\pm 0.3$} & 11.07\tbltiny{ $\pm 0.7$} \bigstrut[b]\\
			\hline
			Mixed MNL \cite{train2009discrete} & 37.96\tbltiny{ $\pm 0.3$} & 27.00\tbltiny{ $\pm 0.2$} & 22.98\tbltiny{ $\pm 0.3$} & \multicolumn{1}{c}{21.68\tbltiny{ $\pm 0.2$}} & \multicolumn{1}{c|}{17.67\tbltiny{ $\pm 0.3$}} & 90.40\tbltiny{ $\pm 0.3$} & 74.80\tbltiny{ $\pm 0.2$} & \multicolumn{1}{c}{65.87\tbltiny{ $\pm 0.4$}} & 62.50\tbltiny{ $\pm 0.3$} & \multicolumn{1}{c|}{52.87\tbltiny{ $\pm 0.4$}} & 2.62\tbltiny{ $\pm 0.0$} & \multicolumn{1}{c}{4.09\tbltiny{ $\pm 0.0$}} & 5.26\tbltiny{ $\pm 0.0$} & 6.00\tbltiny{ $\pm 0.1$} & 8.39\tbltiny{ $\pm 0.1$} \bigstrut[t]\\
			AdaRank \cite{adarank} & 37.27\tbltiny{ $\pm 0.4$} & 25.79\tbltiny{ $\pm 0.3$} & 18.79\tbltiny{ $\pm 0.7$} & \multicolumn{1}{c}{15.79\tbltiny{ $\pm 0.5$}} & \multicolumn{1}{c|}{11.89\tbltiny{ $\pm 0.2$}} & 72.34\tbltiny{ $\pm 0.3$} & 58.28\tbltiny{ $\pm 0.3$} & \multicolumn{1}{c}{51.64\tbltiny{ $\pm 0.6$}} & 47.85\tbltiny{ $\pm 1.4$} & \multicolumn{1}{c|}{39.08\tbltiny{ $\pm 0.2$}} & 4.03\tbltiny{ $\pm 0.0$} & \multicolumn{1}{c}{5.75\tbltiny{ $\pm 0.0$}} & 7.14\tbltiny{ $\pm 0.1$} & 8.05\tbltiny{ $\pm 0.3$} & 11.55\tbltiny{ $\pm 0.1$} \\
			Deep Sets \cite{zaheer2017deep} & 40.36\tbltiny{ $\pm 0.5$} & 31.02\tbltiny{ $\pm 0.5$} & 26.66\tbltiny{ $\pm 0.4$} & \multicolumn{1}{c}{25.48\tbltiny{ $\pm 0.3$}} & \multicolumn{1}{c|}{20.55\tbltiny{ $\pm 0.3$}} & 91.92\tbltiny{ $\pm 0.3$} & 79.31\tbltiny{ $\pm 0.2$} & \multicolumn{1}{c}{71.18\tbltiny{ $\pm 0.4$}} & 68.76\tbltiny{ $\pm 0.4$} & \multicolumn{1}{c|}{59.88\tbltiny{ $\pm 0.4$}} & 2.48\tbltiny{ $\pm 0.0$} & \multicolumn{1}{c}{3.64\tbltiny{ $\pm 0.0$}} & 4.58\tbltiny{ $\pm 0.0$} & 5.01\tbltiny{ $\pm 0.0$} & 6.75\tbltiny{ $\pm 0.1$} \bigstrut[b]\\
			\hline
			Price/Quality & 36.44\tbltiny{ $\pm 0.3$} & 25.44\tbltiny{ $\pm 0.2$} & 22.40\tbltiny{ $\pm 0.2$} & \multicolumn{1}{c}{20.26\tbltiny{ $\pm 0.2$}} & \multicolumn{1}{c|}{16.11\tbltiny{ $\pm 0.1$}} & 87.23\tbltiny{ $\pm 0.2$} & 67.77\tbltiny{ $\pm 0.2$} & \multicolumn{1}{c}{58.86\tbltiny{ $\pm 0.2$}} & 54.44\tbltiny{ $\pm 0.2$} & \multicolumn{1}{c|}{45.43\tbltiny{ $\pm 0.3$}} & 2.79\tbltiny{ $\pm 0.0$} & \multicolumn{1}{c}{4.86\tbltiny{ $\pm 0.0$}} & 6.32\tbltiny{ $\pm 0.0$} & 7.27\tbltiny{ $\pm 0.0$} & 10.90\tbltiny{ $\pm 0.1$} \bigstrut[t]\\
			Random & 25.15\tbltiny{ $\pm 0.5$} & 14.78\tbltiny{ $\pm 0.2$} & 11.58\tbltiny{ $\pm 0.2$} & \multicolumn{1}{c}{9.91\tbltiny{ $\pm 0.2$}} & \multicolumn{1}{c|}{6.24\tbltiny{ $\pm 0.2$}} & 32.87\tbltiny{ $\pm 0.6$} & 42.60\tbltiny{ $\pm 0.3$} & \multicolumn{1}{c}{34.54\tbltiny{ $\pm 0.2$}} & 14.20\tbltiny{ $\pm 0.2$} & \multicolumn{1}{c|}{11.04\tbltiny{ $\pm 0.2$}} & 6.49\tbltiny{ $\pm 0.0$} & \multicolumn{1}{c}{8.98\tbltiny{ $\pm 0.1$}} & 12.03\tbltiny{ $\pm 0.1$} & 18.11\tbltiny{ $\pm 0.1$} & 23.55\tbltiny{ $\pm 0.1$} \bigstrut[b]\\
			\hline
			\multicolumn{1}{r}{} &   &   &   & \multicolumn{1}{r}{} &   &   &   & \multicolumn{1}{r}{} &   &   &   & \multicolumn{1}{r}{} &   &   & \multicolumn{1}{r}{} \bigstrut\\
			\cline{2-16}\multicolumn{1}{r|}{} & \multicolumn{12}{c|}{Expedia}                 & \multicolumn{3}{c|}{Outbrain} \bigstrut\\
			\cline{2-16}\multicolumn{1}{r|}{} & \multicolumn{4}{c|}{Top-1} & \multicolumn{4}{c|}{Top-5} & \multicolumn{4}{c|}{MRR} & Top-1 & Top-5 & MRR \bigstrut\\
			\cline{2-16}\multicolumn{1}{r|}{} & 10 & 20 & 30 & 40 & 10 & 20 & 30 & 40 & 10 & 20 & 30 & 40 & 12 & 12 & 12 \bigstrut\\
			\hline
			SDA & \textbf{31.49}\tbltiny{ $\pm 0.2$} & 26.81\tbltiny{ $\pm 0.1$} & \textbf{21.96}\tbltiny{ $\pm 0.0$} & \textbf{18.36}\tbltiny{ $\pm 0.2$} & \textbf{86.91}\tbltiny{ $\pm 0.2$} & 73.06\tbltiny{ $\pm 0.0$} & \textbf{61.68}\tbltiny{ $\pm 0.2$} & \textbf{53.56}\tbltiny{ $\pm 0.4$} & \textbf{2.99}\tbltiny{ $\pm 0.0$} & 4.18\tbltiny{ $\pm 0.2$} & \textbf{5.99}\tbltiny{ $\pm 0.2$} & \textbf{7.65}\tbltiny{ $\pm 0.0$} & \textbf{38.04}\tbltiny{ $\pm 0.3$} & \textbf{94.54}\tbltiny{ $\pm 0.1$} & \textbf{2.42}\tbltiny{ $\pm 0.0$} \bigstrut[t]\\
			SDE & 31.47\tbltiny{ $\pm 0.2$} & \textbf{26.86}\tbltiny{ $\pm 0.3$} & 21.85\tbltiny{ $\pm 0.2$} & 18.14\tbltiny{ $\pm 0.2$} & 86.52\tbltiny{ $\pm 0.2$} & 73.04\tbltiny{ $\pm 0.4$} & 61.06\tbltiny{ $\pm 0.3$} & 52.75\tbltiny{ $\pm 0.3$} & 3.00\tbltiny{ $\pm 0.0$} & 4.20\tbltiny{ $\pm 0.0$} & 6.07\tbltiny{ $\pm 0.0$} & 7.80\tbltiny{ $\pm 0.0$} & 37.59\tbltiny{ $\pm 0.3$} & 94.26\tbltiny{ $\pm 0.1$} & 2.44\tbltiny{ $\pm 0.0$} \\
			SDW & 31.27\tbltiny{ $\pm 0.2$} & 26.80\tbltiny{ $\pm 0.2$} & 21.88\tbltiny{ $\pm 0.2$} & 18.22\tbltiny{ $\pm 0.2$} & 86.67\tbltiny{ $\pm 0.2$} & 73.08\tbltiny{ $\pm 0.2$} & 61.07\tbltiny{ $\pm 0.3$} & 53.41\tbltiny{ $\pm 0.2$} & 2.99\tbltiny{ $\pm 0.0$} & \textbf{4.17}\tbltiny{ $\pm 0.0$} & 6.05\tbltiny{ $\pm 0.0$} & 7.73\tbltiny{ $\pm 0.0$} & 37.86\tbltiny{ $\pm 0.3$} & 94.48\tbltiny{ $\pm 0.2$} & 2.43\tbltiny{ $\pm 0.0$} \bigstrut[b]\\
			\hline
			MNL \cite{mcfadden1973conditional} & 30.06\tbltiny{ $\pm 0.2$} & 25.29\tbltiny{ $\pm 0.3$} & 20.61\tbltiny{ $\pm 0.4$} & 16.65\tbltiny{ $\pm 0.4$} & 86.34\tbltiny{ $\pm 0.1$} & 72.96\tbltiny{ $\pm 0.3$} & 60.94\tbltiny{ $\pm 0.5$} & 53.26\tbltiny{ $\pm 0.5$} & 3.07\tbltiny{ $\pm 0.0$} & 4.33\tbltiny{ $\pm 0.0$} & 6.35\tbltiny{ $\pm 0.1$} & 8.48\tbltiny{ $\pm 0.1$} & 37.74\tbltiny{ $\pm 0.3$} & 94.52\tbltiny{ $\pm 0.2$} & 2.43\tbltiny{ $\pm 0.0$} \bigstrut[t]\\
			SVMRank \cite{joachims2006training} & 31.28\tbltiny{ $\pm 0.2$} & 26.64\tbltiny{ $\pm 0.3$} & 21.93\tbltiny{ $\pm 0.2$} & 18.03\tbltiny{ $\pm 0.2$} & 86.24\tbltiny{ $\pm 0.1$} & \textbf{73.17}\tbltiny{ $\pm 0.3$} & 60.53\tbltiny{ $\pm 0.2$} & 52.25\tbltiny{ $\pm 0.2$} & 3.01\tbltiny{ $\pm 0.0$} & 4.19\tbltiny{ $\pm 0.0$} & 6.12\tbltiny{ $\pm 0.0$} & 7.95\tbltiny{ $\pm 0.0$} & 37.68\tbltiny{ $\pm 0.3$} & 94.46\tbltiny{ $\pm 0.1$} & 2.43\tbltiny{ $\pm 0.0$} \\
			RankNet \cite{burges2005learning} & 23.82\tbltiny{ $\pm 0.5$} & 18.60\tbltiny{ $\pm 0.7$} & 11.48\tbltiny{ $\pm 0.6$} & 11.54\tbltiny{ $\pm 0.4$} & 81.85\tbltiny{ $\pm 0.6$} & 62.91\tbltiny{ $\pm 0.9$} & 43.09\tbltiny{ $\pm 1.1$} & 38.49\tbltiny{ $\pm 0.9$} & 3.43\tbltiny{ $\pm 0.0$} & 5.21\tbltiny{ $\pm 0.1$} & 8.72\tbltiny{ $\pm 0.2$} & 10.49\tbltiny{ $\pm 0.2$} & 35.32\tbltiny{ $\pm 0.8$} & 91.55\tbltiny{ $\pm 0.7$} & 2.65\tbltiny{ $\pm 0.1$} \bigstrut[b]\\
			\hline
			Mixed MNL \cite{train2009discrete} & 27.28\tbltiny{ $\pm 0.6$} & 22.00\tbltiny{ $\pm 0.7$} & 18.32\tbltiny{ $\pm 0.4$} & 13.91\tbltiny{ $\pm 0.6$} & 84.24\tbltiny{ $\pm 0.3$} & 68.31\tbltiny{ $\pm 0.7$} & 55.41\tbltiny{ $\pm 0.6$} & 43.55\tbltiny{ $\pm 1.0$} & 3.22\tbltiny{ $\pm 0.0$} & 4.67\tbltiny{ $\pm 0.1$} & 6.81\tbltiny{ $\pm 0.1$} & 9.45\tbltiny{ $\pm 0.2$} & 37.72\tbltiny{ $\pm 0.3$} & 94.42\tbltiny{ $\pm 0.1$} & 2.43\tbltiny{ $\pm 0.0$} \bigstrut[t]\\
			AdaRank \cite{adarank} & 26.70\tbltiny{ $\pm 0.2$} & 22.57\tbltiny{ $\pm 0.3$} & 17.47\tbltiny{ $\pm 0.2$} & 14.14\tbltiny{ $\pm 0.2$} & 83.21\tbltiny{ $\pm 0.2$} & 68.14\tbltiny{ $\pm 0.3$} & 54.46\tbltiny{ $\pm 0.3$} & 44.89\tbltiny{ $\pm 0.2$} & 3.29\tbltiny{ $\pm 0.0$} & 4.71\tbltiny{ $\pm 0.0$} & 7.01\tbltiny{ $\pm 0.0$} & 9.33\tbltiny{ $\pm 0.0$} & 37.47\tbltiny{ $\pm 0.3$} & 94.40\tbltiny{ $\pm 0.2$} & 2.44\tbltiny{ $\pm 0.0$} \\
			Deep Sets \cite{zaheer2017deep} & 29.87\tbltiny{ $\pm 0.3$} & 25.64\tbltiny{ $\pm 0.2$} & 20.95\tbltiny{ $\pm 0.3$} & 16.74\tbltiny{ $\pm 0.3$} & 86.26\tbltiny{ $\pm 0.2$} & 72.55\tbltiny{ $\pm 0.3$} & 60.25\tbltiny{ $\pm 0.3$} & 51.35\tbltiny{ $\pm 0.3$} & 3.06\tbltiny{ $\pm 0.0$} & 4.26\tbltiny{ $\pm 0.0$} & 6.19\tbltiny{ $\pm 0.0$} & 8.08\tbltiny{ $\pm 0.1$} & 37.51\tbltiny{ $\pm 0.3$} & 94.30\tbltiny{ $\pm 0.1$} & 2.44\tbltiny{ $\pm 0.0$} \bigstrut[b]\\
			\cline{1-1}Price/Quality & 17.92\tbltiny{ $\pm 0.1$} & 13.24\tbltiny{ $\pm 0.2$} & 9.92\tbltiny{ $\pm 0.1$} & 7.80\tbltiny{ $\pm 0.1$} & 77.67\tbltiny{ $\pm 0.1$} & 56.00\tbltiny{ $\pm 0.2$} & 42.50\tbltiny{ $\pm 0.2$} & 32.94\tbltiny{ $\pm 0.3$} & 3.79\tbltiny{ $\pm 0.0$} & 5.84\tbltiny{ $\pm 0.0$} & 8.60\tbltiny{ $\pm 0.0$} & 11.21\tbltiny{ $\pm 0.1$} & 24.17\tbltiny{ $\pm 0.1$} & 25.08\tbltiny{ $\pm 0.1$} & 8.13\tbltiny{ $\pm 0.0$} \bigstrut[t]\\
			Random & 14.13\tbltiny{ $\pm 0.1$} & 9.68\tbltiny{ $\pm 0.2$} & 6.89\tbltiny{ $\pm 0.1$} & 5.05\tbltiny{ $\pm 0.1$} & 32.35\tbltiny{ $\pm 0.2$} & 33.44\tbltiny{ $\pm 0.2$} & 24.16\tbltiny{ $\pm 0.2$} & 13.25\tbltiny{ $\pm 0.2$} & 6.38\tbltiny{ $\pm 0.0$} & 8.65\tbltiny{ $\pm 0.0$} & 12.01\tbltiny{ $\pm 0.0$} & 17.89\tbltiny{ $\pm 0.1$} & 22.21\tbltiny{ $\pm 0.1$} & 23.10\tbltiny{ $\pm 0.1$} & 8.32\tbltiny{ $\pm 0.0$} \bigstrut[b]\\
			\hline
		\end{tabular}%
	}
	\label{tbl:full}
	\caption{Full experimental results.}
\end{sidewaystable}

\end{appendices}

\end{document}